\newtheorem{proposition}{Proposition}[section]
\theoremstyle{definition}
\newtheorem{definition}{Definition}[section]
\newenvironment{customdf}[1]
  {\definition}
  {\enddefinition}
\newcommand{\SO}{\mathrm{SO}}
\newcommand{\OO}{\mathrm{O}}
\newcommand{\inv}{\mathrm{inv}}
\newcommand{\equi}{\mathrm{equiv}}
\definecolor{mydarkblue}{rgb}{0,0.08,0.45}
\newcommand{\edit}[1]{{#1}}
\title{The Surprising Effectiveness of Equivariant Models in Domains with Latent Symmetry}
\author{Dian Wang, Jung Yeon Park, Neel Sortur, Lawson L.S. Wong, Robin Walters$^*$, Robert Platt\thanks{Equal Advising} \\
Northeastern University\\
\small{\texttt{\{wang.dian,park.jungy,sortur.n,l.wong,r.walters,r.platt\}}\texttt{@northeastern.edu}}
}
\begin{document}

\maketitle

\begin{abstract}
Extensive work has demonstrated that equivariant neural networks can significantly improve sample efficiency and generalization by enforcing an inductive bias in the network architecture. These applications typically assume that the domain symmetry is fully described by explicit transformations of the model inputs and outputs. However, many real-life applications contain only latent or partial symmetries which cannot be easily described by simple transformations of the input. In these cases, it is necessary to \emph{learn} symmetry in the environment instead of imposing it mathematically on the network architecture. We discover, surprisingly, that imposing equivariance constraints that do not exactly match the domain symmetry is very helpful in learning the true symmetry in the environment. We differentiate between \emph{extrinsic} and \emph{incorrect} symmetry constraints and show that while imposing incorrect symmetry can impede the model's performance, imposing extrinsic symmetry can actually improve performance. We demonstrate that an equivariant model can significantly outperform non-equivariant methods on domains with latent symmetries both in supervised learning and in reinforcement learning for robotic manipulation and control problems.

\end{abstract}

\section{Introduction}




\begin{wrapfigure}[21]{r}{0.35\textwidth}
\vspace{-0.3cm}
\centering
\includegraphics[width=\linewidth]{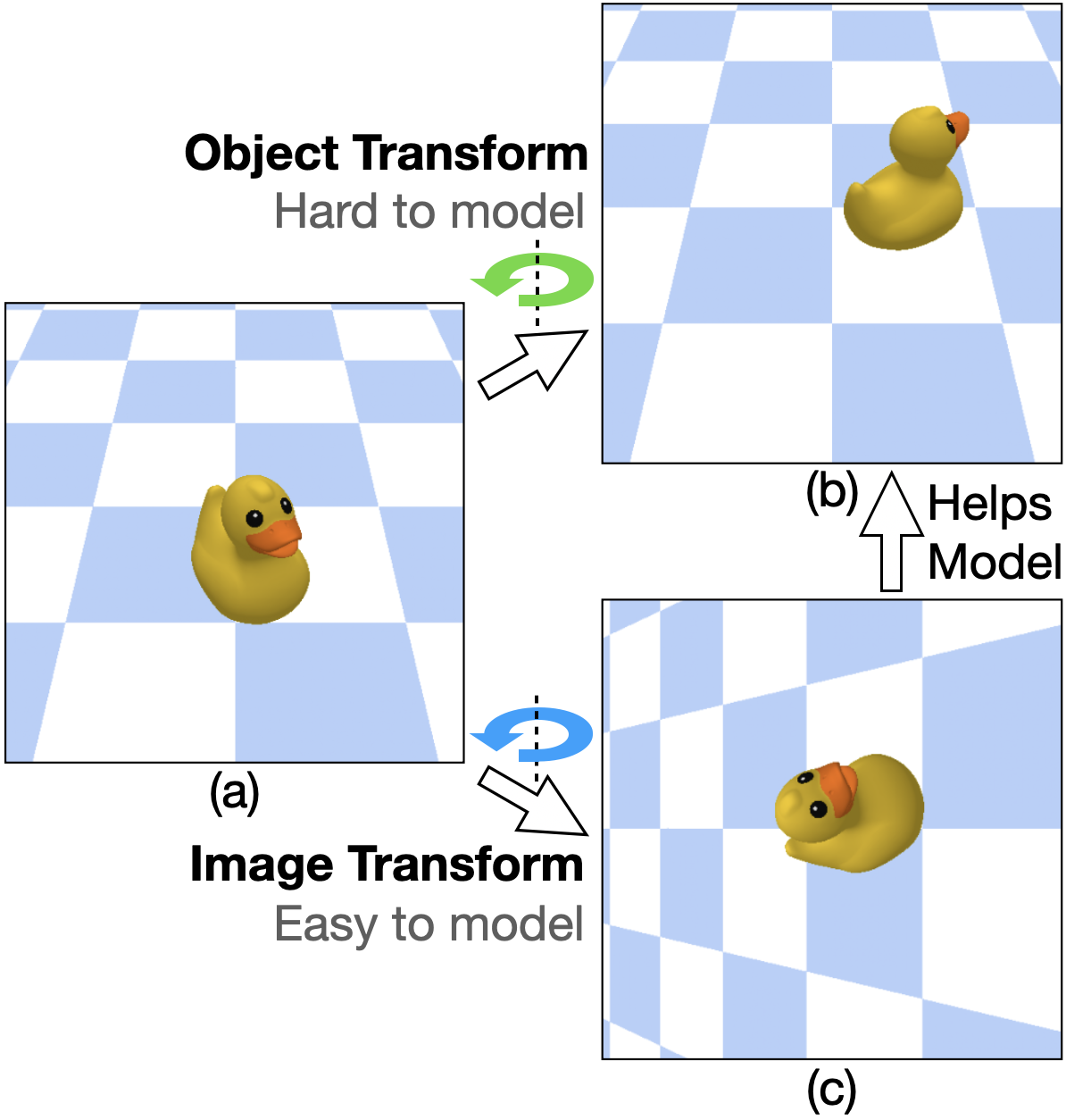}
\vspace{-0.5cm}
\caption{\edit{Object vs image transforms. Object transform rotates the object itself (b), while image transform rotates the image (c). We propose to use the image transform to help model the object transform.}
}
\label{fig:obj_trans}
\end{wrapfigure}

Recently, equivariant learning has shown great success in various machine learning domains like trajectory prediction~\citep{walters2020trajectory}, robotics~\citep{neural_descriptor}, and reinforcement learning~\citep{iclr}. Equivariant networks~\citep{g_conv,steerable_cnns} can improve generalization and sample efficiency during learning by encoding task symmetries directly into the model structure. However, this requires problem symmetries to be perfectly known and modeled at design time -- something that is sometimes problematic. It is often the case that the designer knows that a latent symmetry is present in the problem but cannot easily express how that symmetry acts in the input space.  For example, Figure~\ref{fig:obj_trans}b is a rotation of Figure~\ref{fig:obj_trans}a. However, this is not a rotation of the image -- it is a rotation of the objects present in the image when they are viewed from an oblique angle. In order to model this rotational symmetry, the designer must know the viewing angle and somehow transform the data or encode projective geometry into the model. This is difficult and it makes the entire approach less attractive. In this situation, the conventional wisdom would be to discard the model structure altogether since it is not fully known and to use an unconstrained model. Instead, we explore whether it is possible to benefit from equivariant models even when the way a symmetry acts on the problem input is not precisely known. We show empirically that this is indeed the case and that an inaccurate equivariant model is often better than a completely unstructured model. For example, suppose we want to model a function with the object-wise rotation symmetry expressed in Figure~\ref{fig:obj_trans}a and b. Notice that whereas it is difficult to encode the object-wise symmetry, it is easy to encode an image-wise symmetry because it involves simple image rotations. Although the image-wise symmetry model is imprecise in this situation, our experiments indicate that this imprecise model is still a much better choice than a completely unstructured model.

This paper makes three contributions. First, we define three different relationships between problem symmetry and model symmetry: \emph{correct equivariance}, \emph{incorrect equivariance}, and \emph{extrinsic equivariance}. \edit{Correct equivariance means the model correctly models the problem symmetry; incorrect equivariance is when the model symmetry interferes with the problem symmetry; and extrinsic equivariance is when the model symmetry transforms the input data to out-of-distribution data.}
We theoretically demonstrate the upper bound performance for an incorrectly constrained equivariant model. Second, we empirically compare extrinsic and incorrect equivariance in a supervised learning task and show that a model with extrinsic equivariance can improve performance compared with an unconstrained model. Finally, we explore this idea in a reinforcement learning context and show that an extrinsically constrained model can outperform state-of-the-art conventional CNN baselines. Supplementary video and code are available at \url{https://pointw.github.io/extrinsic_page/}.

\section{Related Work}

\paragraph{Equivariant Neural Networks.} 
Equivariant networks are first introduced as G-Convolution~\citep{g_conv} and Steerable CNN~\citep{steerable_cnns,e2cnn,escnn}.
Equivariant learning has been applied to various types of data including images~\citep{e2cnn}, spherical data~\citep{spherical_cnns}, point clouds~\citep{dym2020universality}, sets~\cite{maron2020learning}, and meshes~\citep{de2020gauge}, and has shown great success in tasks including molecular dynamics~\citep{anderson2019cormorant}, particle physics~\citep{bogatskiy2020lorentz}, fluid dynamics~\citep{wang2020incorporating}, trajectory prediction~\citep{walters2020trajectory}, robotics~\citep{neural_descriptor,rss22xupeng,rss22haojie} and reinforcement learning~\citep{corl,iclr}. Compared with the prior works that assume the domain symmetry is perfectly known, 
this work studies the effectiveness of equivariant networks in domains with latent symmetries.   

\paragraph{Symmetric Representation Learning.}
Since latent symmetry is not expressable as a simple transformation of the input, equivariant networks can not be used in the standard way.
Thus several works have turned to learning equivariant features which can be easily transformed.  \citet{sen} learn an encoder which maps inputs to equivariant features which can be used by downstream equivariant layers. \citet{quessard2020learning}, \citet{klee2022i2i}, and \citet{marchetti2022equivariant} map 2D image inputs to elements of various groups including $\mathrm{SO}(3)$, allowing for disentanglement and equivariance constraints. \cite{falorsi2018explorations} use a homeomorphic VAE to perform the same task in an unsupervised manner. \cite{dangovski2021equivariant} consider equivariant representations learned in a self-supervised manner using losses to encourage sensitivity or insensitivity to various symmetries. \edit{Our method may be considered as an example of symmetric representation learning which, unlike any of the above methods, uses an equivariant neural network as an encoder. \citet{zhou2020meta} and \citet{dehmamy2021automatic} assume no prior knowledge of the structure of symmetry in the domain and learn the symmetry transformations on inputs and latent features end-to-end with the task function. In comparison, our work assumes that the latent symmetry is known but how it acts on the input is unknown.}



\paragraph{Sample Efficient Reinforcement Learning.} One traditional solution for improving sample efficiency is to create additional samples using data augmentation~\citep{alexnet}. Recent works discover that simple image augmentations like random crop~\citep{rad,drqv2} or random shift~\citep{drq} can improve the performance of reinforcement learning. Such image augmentation can be combined with contrastive learning~\citep{oord2018representation} to achieve better performance~\citep{curl,ferm}. Recently, many prior works have shown that equivariant methods can achieve tremendously high sample efficiency in reinforcement learning~\citep{van2020mdp,mondal2020group,corl,iclr}, and realize on-robot reinforcement learning~\citep{rss22xupeng,corl22}. 
However, recent equivariant reinforcement learning works are limited in fully equivariant domains. This paper extends the prior works by applying equivariant reinforcement learning to tasks with latent symmetries.

\section{Background}




\paragraph{Equivariant Neural Networks.} A function is equivariant if it respects symmetries of its input and output spaces. Specifically, a function $f : X \rightarrow Y$ is \emph{equivariant} with respect to a symmetry group $G$ if it commutes with all transformations $g \in G$, 
$f(\rho_x(g) x) = \rho_y(g) f(x)$,
where $\rho_x$ and $\rho_y$ are the \emph{representations} of the group $G$ that define how the group element $g\in G$ acts on $x \in X$ and $y\in Y$, respectively. An equivariant function is a mathematical way of expressing that $f$ is symmetric with respect to $G$: if we evaluate $f$ for differently transformed versions of the same input, we should obtain transformed versions of the same output.

In order to use an equivariant model, we generally require the symmetry group $G$ and representation $\rho_x$ to be known at design time. For example, in a convolutional model, this can be accomplished by tying the kernel weights together so as to satisfy $K(gy) = \rho_{out}(g) K(y)\rho_{in}(g)^{-1}$, where $\rho_{in}$ and $\rho_{out}$ denote the representation of the group operator at the input and the output of the layer~\citep{equi_theory}. End-to-end equivariant models can be constructed by combining equivariant convolutional layers and equivariant activation functions. In order to leverage symmetry in this way, it is common to transform the input so that standard group representations work correctly, e.g., to transform an image to a top-down view so that image rotations correspond to object rotations.

\paragraph{Equivariant SAC.} Equivariant SAC~\citep{iclr} is a variation of SAC~\citep{sac} that constrains the actor to an equivariant function and the critic to an invariant function with respect to a group $G$. The policy is a network $\pi: S\to A\times A_\sigma$, where $A_\sigma$ is the space of action standard deviations (SAC models a stochastic policy). It defines the group action on the output space of the policy network network $\bar{a}\in A\times A_\sigma$ as: $g \bar{a} = g (a_\equi, a_\inv, a_\sigma) = (\rho_\equi(g) a_\equi, a_\inv, a_\sigma)$, where $a_\equi\in A_\equi$ is the equivariant component in the action space, $a_\inv\in A_\inv$ is the invariant component in the action space, $a_\sigma\in A_\sigma$, $g\in G$. The actor network $\pi$ is then defined to be a mapping $s \mapsto \bar{a}$ that satisfies the following equivariance constraint: $\pi(gs) = g(\pi(s)) = g \bar{a}$. The critic is a $Q$-network $q: S\times A\to \mathbb{R}$ that satisfies an invariant constraint: $q(gs, ga) = q(s, a)$.

\section{Learning Symmetry Using Other Symmetries}

\subsection{Model Symmetry Versus True Symmetry}


\begin{wrapfigure}[20]{r}{0.41\textwidth}
\vspace{-1.2cm}
\centering
\includegraphics[width=\linewidth]{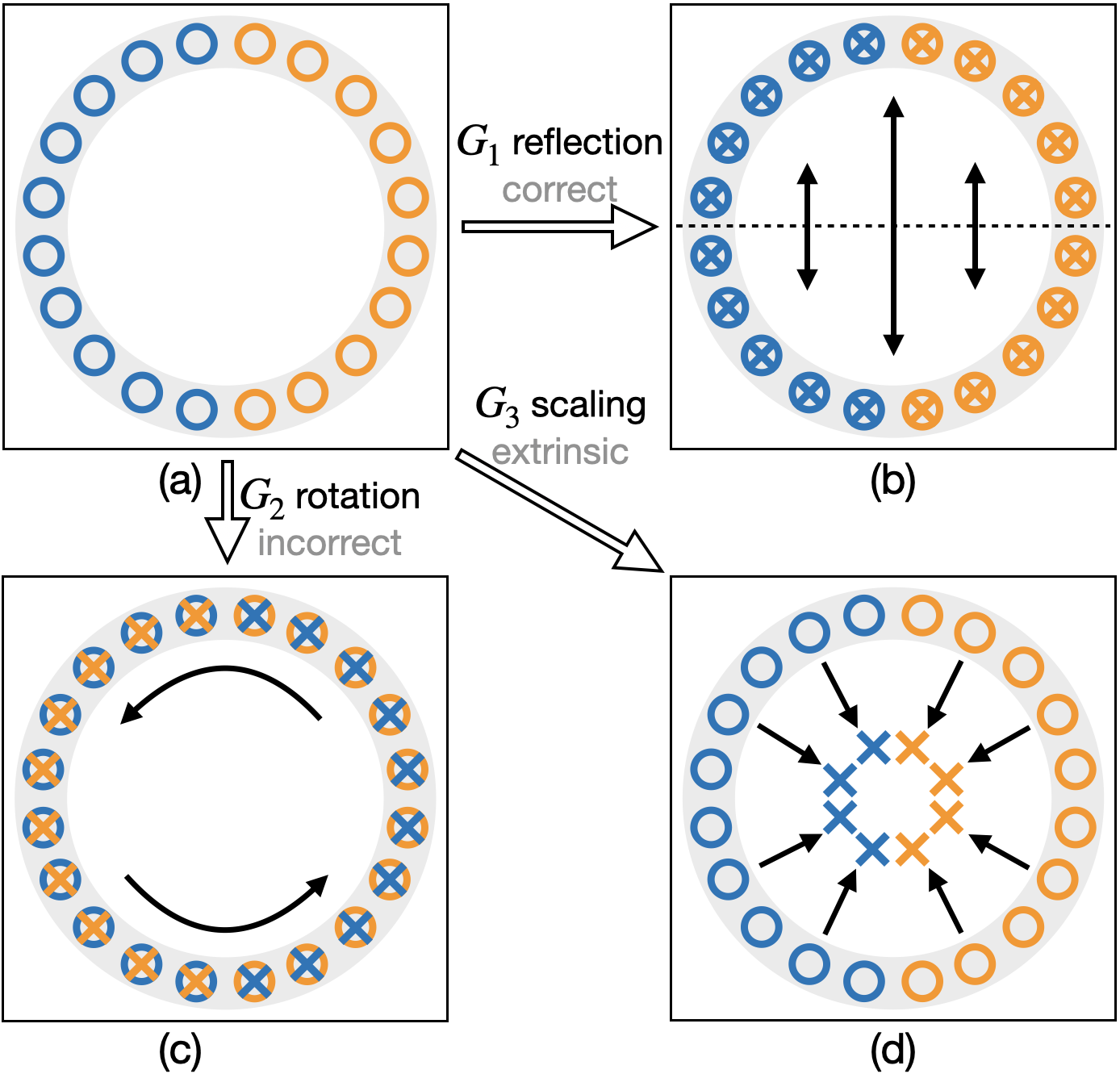}
\vspace{-0.6cm}
\caption{An example classification task for correct, incorrect, and extrinsic equivariance. The grey ring shows the input distribution. Circles are the training data in the distribution where the color shows the ground truth label. Crosses show the group transformed data. }
\label{fig:definition}
\end{wrapfigure}

This paper focuses on tasks where the way in which the symmetry group operates on the input space is unknown. In this case the ground truth function $f: X\to Y$ is equivariant with respect to a group $G$ which acts on $X$ and $Y$ by $\rho_x$ and $\rho_y$ respectively. However, \emph{the action $\rho_x$ on the input space is not known} and may not be a simple or explicit map. 
Since $\rho_x$ is unknown, we cannot pursue the strategy of learning $f$ using an equivariant model class $f_\phi$ constrained by $\rho_x$. 
As an alternative, we propose restricting to a model class $f_\phi$ which satisfies equivariance \emph{with respect to a different group action $\hat{\rho}_x$}, i.e.,  $f_\phi(\hat{\rho}_x(g)x) = \rho_y(g)f_\phi(x)$. 
This paper tests the hypothesis that if the model is constrained to a symmetry class $\hat{\rho}_x$ which is related to the true symmetry $\rho_x$, then it may help learn a model satisfying the true symmetry. \edit{For example, if $x$ is an image viewed from an oblique angle and ${\rho}_x$ is the rotation of the objects in the image, $\hat{\rho}_x$ can be the rotation of the whole image (which is different from ${\rho}_x$ because of the tilted view angle). Section~\ref{sec:obj_trans} will describe this example in detail.}
    
\subsection{Correct, Incorrect, and Extrinsic Equivariance}

\label{sec:definition}


Our findings show that the success of this strategy depends on how $\hat{\rho}_x$ relates to the ground truth function $f$ and its symmetry. We classify the model symmetry
as
\emph{correct equivariance}, \emph{incorrect equivariance}, or \emph{extrinsic equivariance} with respect to $f$. Correct symmetry means that the model symmetry correctly reflects a symmetry present in the ground truth function $f$. An extrinsic symmetry may still aid learning whereas an incorrect symmetry is necessarily detrimental to learning. We illustrate the distinction with a classification example shown in Figure~\ref{fig:definition}a. (\edit{See Appendix~\ref{appendix:def_example} for a more in-depth description.)} Let $D\subseteq X$ be the support of the input distribution for $f$.  


\begin{definition}
The action $\hat{\rho}_x$ has \emph{correct equivariance} with respect to $f$ if $\hat{\rho}_x(g)x \in D$ for all $x \in D, g \in G$ and $f(\hat{\rho}_x(g)x)=\rho_y(g) f(x)$.  
\end{definition}

That is, the model symmetry preserves the input space $D$ and $f$ is equivariant with respect to it. 
For example, consider the action $\hat{\rho}_x$ of the group $G_1 = C_2$ acting on $\mathbb{R}^2$ by reflection across the horizontal axis and $\rho_y = 1$, the trivial action fixing labels. Figure~\ref{fig:definition}b shows the untransformed data $x \in D$ as circles along the unit circle.  The transformed data $\hat{\rho}_x(g) x$ (shown as crosses) also lie on the unit circle, and hence the support $D$ is reflection invariant.  Moreover, the ground truth labels $f(x)$ (shown as orange or blue) are preserved by this action.

\begin{definition}
The action $\hat{\rho}_x$ has \emph{incorrect equivariance} with respect to $f$ if there exist $x \in D$ and $g \in G$ such that $\hat{\rho}_x(g)x \in D$ but $f(\hat{\rho}_x(g)x) \not =\rho_y(g) f(x)$.
\end{definition}

In this case, the model symmetry partially preserves the input distribution, but does not correctly preserve labels.
In Figure~\ref{fig:definition}c, the rotation group $G_2=\langle \mathrm{Rot}_{\pi} \rangle$ maps the unit circle to itself, but the transformed data does not have the correct label.  Thus, constraining the model $f_\phi$ by $f_\phi (\hat{\rho}_x(g) x) = f_\phi(x)$ will force $f_\phi$ to mislabel data.  In this example,
for $a=\sqrt{2}/2$, $f(a,a) = \textsc{orange}$ and $f(-a,-a) = \textsc{blue}$, however, $f_\phi(a,a) = f_\phi(\mathrm{Rot}_{\pi} (a,a)) = f_\phi(-a,-a)$.


\begin{definition}
The action $\hat{\rho}_x$ has \emph{extrinsic equivariance} with respect to $f$ if for $x\in D$, $\hat{\rho}_x(g)x\not\in D$. 
\end{definition}

Extrinsic equivariance is when the equivariant constraint in the equivariant network $f_\phi$ enforces equivariance to out-of-distribution data.  Since $\hat{\rho}_x(g)x\not \in D$, the ground truth $f(\hat{\rho}_x(g)x)$ is undefined.
An example of extrinsic equivariance is given by the scaling group $G_3$ shown in Figure~\ref{fig:definition}d. For the data $x \in D$, enforcing scaling invariance $f_\phi(\hat{\rho}_x(g) x) = f_\phi (x)$ where $g \in G_3$ will not increase error, because the group transformed data (in crosses) are out of the distribution $D$ of the input data shown in the grey ring.
In fact, we hypothesize that such extrinsic equivariance may even be helpful for the network to learn the ground truth function. For example, in Figure~\ref{fig:definition}d, the network can learn to classify all points on the left as blue and all points on the right as orange.

\subsection{Theoretical Upper Bound on Accuracy for Incorrect Equivariant Models}

Consider a classification problem over the set $X$ with finitely many classes $Y$. Let $G$ be a finite group acting on $X$. Consider a model $f_\phi \colon X \to Y$ with incorrect equivariance constrained to be invariant to $G$. In this case the points in a single orbit $\{gx:g\in G\}$ must all be assigned the same label $f_\phi(gx) = y$. However these points may have different ground truth labels. We classify how bad this situation is by measuring $p(x)$, the proportion of ground truth labels in the orbit of $x$ which are equal to the majority label. 
Let $c_p$ be the fraction of points $x\in X$ which have \emph{consensus proportion} $p(x)=p$.

\begin{proposition}
The accuracy of $f_\phi$ has upper bound
$\mathrm{acc}(f_\phi) \leq \sum_p c_p p$
\label{prop:main}
\end{proposition}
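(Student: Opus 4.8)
The plan is to bound the accuracy orbit-by-orbit and then sum up. Since $f_\phi$ is constrained to be invariant under $G$, on each orbit $O = \{gx : g \in G\}$ the model assigns a single label, say $y_O$. The number of points in $O$ on which $f_\phi$ is correct is therefore exactly the number of points $x' \in O$ whose ground truth label equals $y_O$. This count is maximized by choosing $y_O$ to be a majority label of the orbit, in which case the fraction of correctly classified points in $O$ is at most $p(x)$ (which is constant on the orbit, since $p$ only depends on the multiset of ground truth labels in the orbit, which is the same for every representative). Hence for any orbit $O$ with consensus proportion $p$, the contribution to the overall count of correctly classified points is at most $p \cdot |O|$.

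Next I would aggregate over orbits. Partition $X$ into its $G$-orbits, and group the orbits by their consensus proportion value $p$. Let $X_p \subseteq X$ be the union of all orbits with $p(x) = p$; by definition $c_p = |X_p| / |X|$. Summing the per-orbit bound over all orbits in $X_p$ gives that at most $p \cdot |X_p|$ points in $X_p$ are classified correctly. Summing over all values $p$ appearing, the total number of correctly classified points is at most $\sum_p p \cdot |X_p| = \sum_p p \cdot c_p |X|$. Dividing by $|X|$ yields $\mathrm{acc}(f_\phi) \le \sum_p c_p p$, as claimed. (Here I am implicitly taking $\mathrm{acc}$ to be the uniform accuracy over $X$; if a nonuniform input distribution is intended, the same argument goes through with $|X_p|$ replaced by the measure of $X_p$ and $c_p$ the corresponding probability, using that $p$ is orbit-constant.)

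The steps that need the most care are two bookkeeping points rather than any deep idea. First, I must verify that $p(x)$ is genuinely well-defined on orbits, i.e.\ that the multiset of ground truth labels $\{f(gx) : g \in G\}$ depends only on the orbit of $x$ and not the representative — this is immediate since $g'x$ ranges over the same orbit as $x$, so the label multisets coincide up to the action of $g'$, which permutes the sum. Second, I should be careful that the bound is a genuine supremum structure: it holds for \emph{every} invariant $f_\phi$ because the majority-label choice is the best possible on each orbit, and the argument never used anything about $f_\phi$ beyond its $G$-invariance. There is no real obstacle here; the only subtlety worth flagging explicitly in the writeup is whether orbits can have different sizes (they can, if stabilizers differ), which is why the aggregation is phrased in terms of $|X_p|$ / measure rather than counting orbits.
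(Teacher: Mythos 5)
Your proof is correct and is essentially the paper's argument: both bound the per-orbit accuracy by the majority-label fraction $p$ (using that a $G$-invariant model is constant on each orbit, so the best it can do is pick the majority label) and then aggregate over the partition of $X$ by consensus proportion. The only difference is presentational --- the paper phrases the aggregation as a group-averaged expectation and invokes its standing assumption that the action is density-preserving (so the input measure is uniform on each orbit), which is precisely the hypothesis your parenthetical remark about nonuniform distributions would need in order for the count-based $p$ to bound the measure-weighted per-orbit accuracy.
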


See the complete version of the proposition and its proof in Appendix~\ref{app:proof}. In the example in Figure~\ref{fig:definition}c, we have $p\in\{0.5\}$ and $c_{0.5}=1$, thus $\mathrm{acc}(f_\phi)\leq0.5$.
In contrast, an unconstrained model with a universal approximation property and proper hyperparameters can achieve arbitrarily good accuracy. 

\subsection{Object Transformation and Image Transformation}
\label{sec:obj_trans}



In tasks with visual inputs ($X=\mathbb{R}^{c\times h\times w}$), incorrect or extrinsic equivariance will exist when the transformation of the image does not match the transformation of the latent state of the task. In such case, we call $\rho_x$ the \emph{object transform} and $\hat{\rho}_x$ the \emph{image transform}.
For an image input $x\in X$, the image transform $\hat{\rho}_x(g)x$ is defined as a simple transformation of pixel locations
(e.g., Figure~\ref{fig:obj_trans}a-c where $g=\pi/2\in \SO(2)$), while the object transform $\rho_x(g)x$ is an implicit map transforming the objects in the image (e.g., Figure~\ref{fig:obj_trans}a-b where $g=\pi/2\in \SO(2)$). 
The distinction between object transform and image transform is often caused by some
symmetry-breaking factors such as camera angle, occlusion, backgrounds, and so on (e.g., Figure~\ref{fig:obj_trans}). We refer to such symmetry-breaking factors as \emph{symmetry corruptions}. 

\section{Evaluating Equivariant Network with Symmetry Corruptions}
\label{sec:sl}








Although it is preferable to use an equivariant model to enforce correct equivariance, real-world problems often contain some symmetry corruptions, such as oblique viewing angles, which mean the symmetry is latent. In this experiment, we evaluate the effect of different corruptions on an equivariant model and show that enforcing extrinsic equivariance can actually improve performance. We experiment with a simple supervised learning task where the scene contains three ducks of different colors. The data samples are pairs of images where all ducks in the first image are rotated by some $g\in C_8$ to produce the second image within each pair. Given the correct $g$, the goal is to train a network $f_\phi: \mathbb{R}^{2\times 4\times h\times w}\to \mathbb{R}^{|C_8|}$ to classify the rotation
(Figure~\ref{fig:duck_example}). If we have a perfect top-down image observation, then the object transform and image transform are equal, and we can enforce the correct equivariance by modeling the ground truth function $f$ as an invariant network $f_\phi(\rho_x(g)x)=f_\phi(x)$ where $g\in \SO(2)$ (because the rotation of the two images will not change the relative rotation between the objects in the two images). 
To mimic symmetry corruptions in real-world applications, we apply 
seven different transformations to both pairs of images shown in Figure~\ref{fig:duck_corrupt_main} (more corruptions are considered in Appendix~\ref{app:more_corruption}). In particular, for invert-label, the ground truth label $g$ is inverted to $-g$ when the yellow duck is on the left of the orange duck in the world frame in the first input image. Notice that enforcing $\SO(2)$-invariance in $f_\phi$ under invert-label is an incorrect equivariant constraint because a rotation on the ducks might change their relative position in the world frame and break the invariance of the task: $f(gx)\neq f(x), \exists g\in\SO(2)$. However, in all other corruptions, enforcing $\SO(2)$-invariance is an extrinsic equivariance because $gx$ will be out of the input distribution. We evaluate the equivariant network defined in group $C_8$ implemented using e2cnn~\citep{e2cnn}. See Appendix~\ref{app:train_detail_sl} for the training details.

\begin{figure}[t]
\centering
\subfloat[]{\includegraphics[height=5cm]{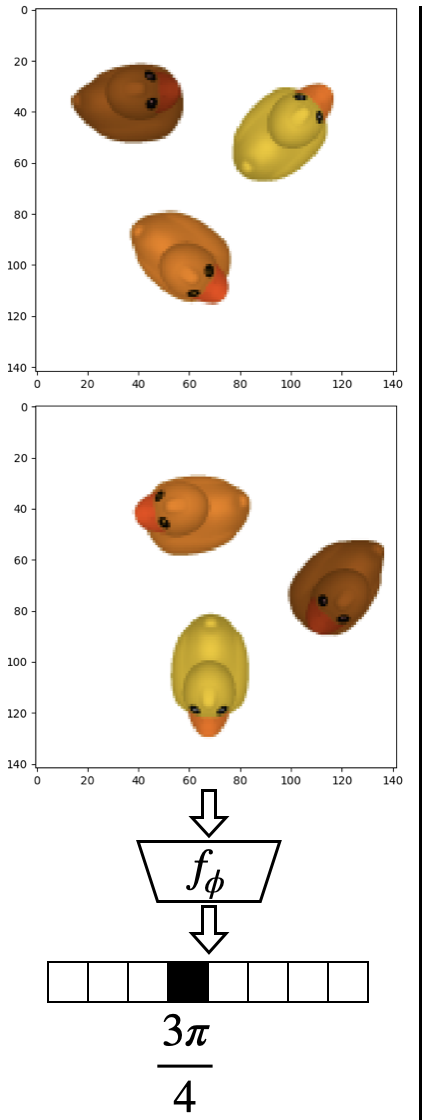}\label{fig:duck_example}}
\subfloat[]{\includegraphics[height=5cm]{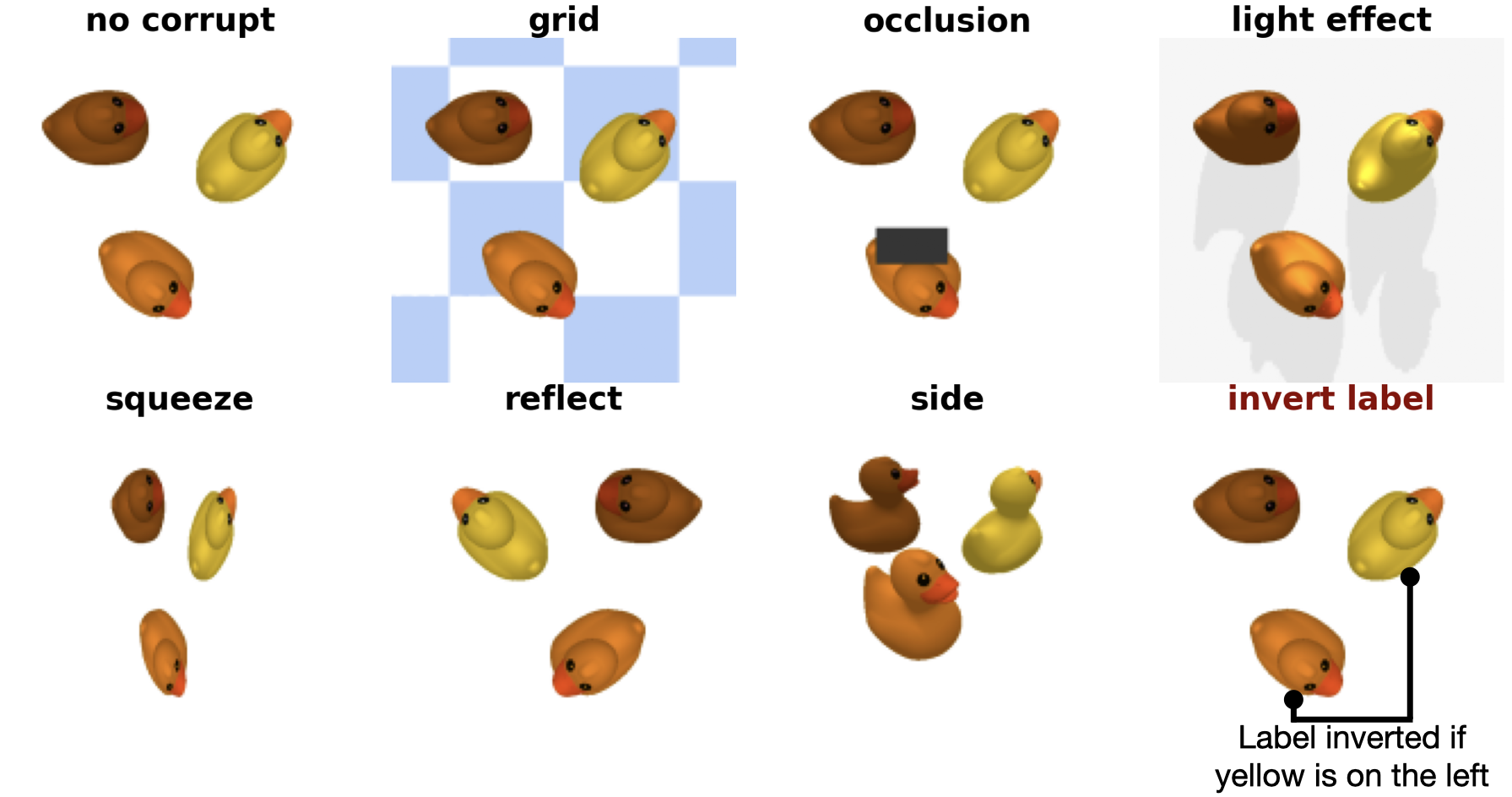}\label{fig:duck_corrupt_main}}
\caption{(a) The rotation estimation task requires the network to estimate the relative rotation between the two input states. (b) Different symmetry corruptions in the rotation estimation experiment. 
}
\end{figure}

\begin{figure}[t]
    \centering
    \includegraphics[width=0.8\linewidth]{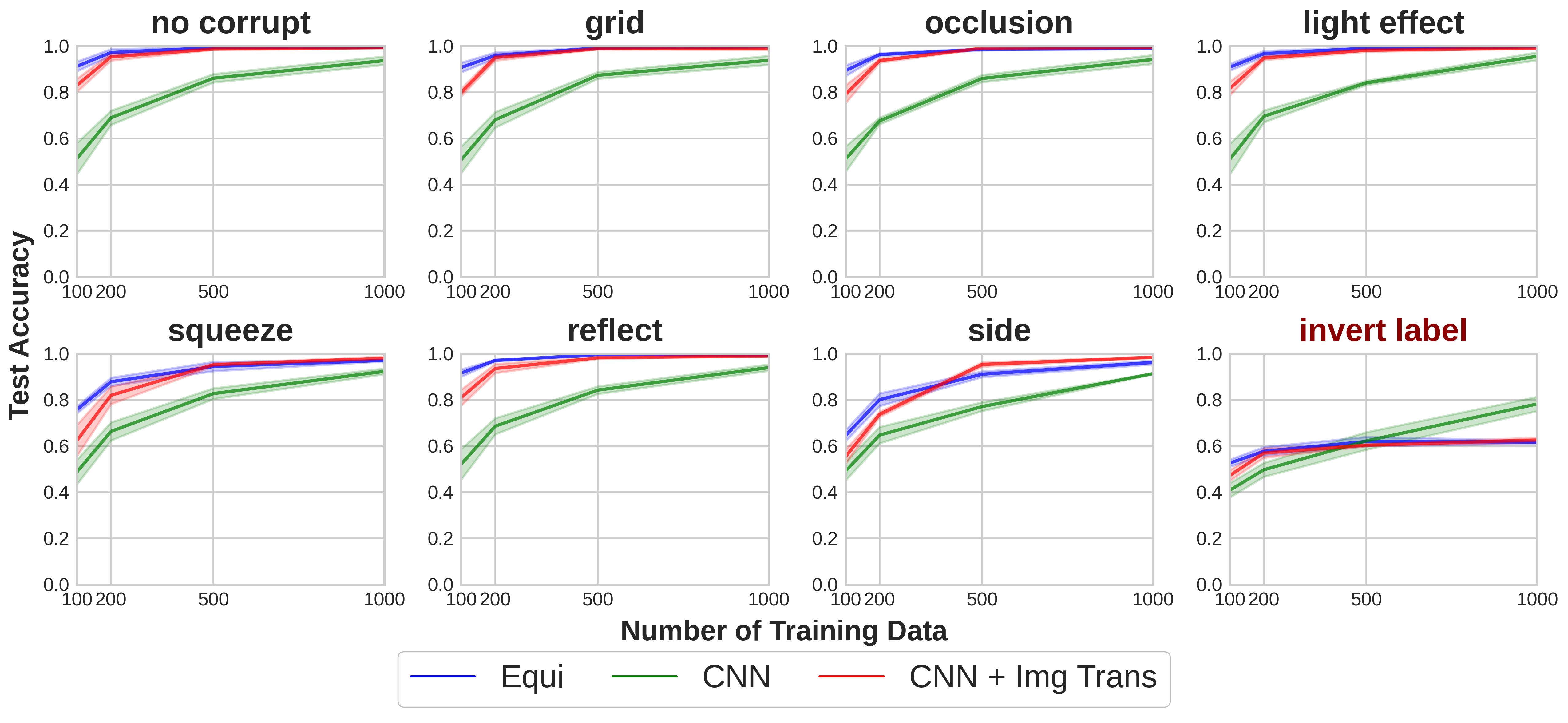}
    \caption{Comparison of an equivariant network (blue), a conventional network (green), and CNN equipped with image transformation augmentation using $C_8$ rotations (red). The plots show the prediction accuracy in the test set of the model trained with different number of training data. In all of our experiments, we take the average over four random seeds. Shading denotes standard error.}
    \label{fig:exp_duck}
\end{figure}

\paragraph{Comparing Equivariant Networks with CNNs.} We first compare the performance of an equivariant network (Equi) and a conventional CNN model (CNN) with a similar number of trainable parameters. The network architectures are relatively simple (see Appendix~\ref{app:archi_sl}) as our goal is to evaluate the performance difference between an equivariant network and an unconstrained CNN model rather than achieving the best performance in this task. In both models, we apply a random crop after sampling each data batch to improve the sample efficiency. See Appendix~\ref{app:more_corruption} for the effects of random crop augmentation on learning. Figure~\ref{fig:exp_duck} (blue vs green) shows the test accuracy of both models after convergence when trained with varying dataset sizes. For all corruptions with extrinsic equivariance constraints, the equivariant network performs better than the CNN model, especially in low data regimes. However, for invert-label which gives an incorrect equivariance constraint, the CNN outperforms the equivariant model, demonstrating that enforcing incorrect equivariance negatively impacts accuracy. 
In fact, based on Proposition~\ref{prop:main}, the equivariant network here has a theoretical upper bound performance of $62.5\%$. First, $p\in \{1, 0.5\}$. Then $p=1$ when $f(x)\in \{0, \pi\}\subseteq C_8$ where $f(x)=-f(x)$ (i.e., negating the label won't change it), and $c_{1}=2/8=0.25$. The consensus proportion $p=0.5$ when 
$f(x)\in \{\pi/4, \pi/2, 3\pi/4, 5\pi/4, 3\pi/2, 7\pi/4\}\subseteq C_8$, 
where half of the labels in the orbit of $x$ will be the negation of the labels of the other half (because half of $g\in C_8$ will change the relative position between the yellow and orange duck), thus $c_{0.5}=6/8=0.75$. $\mathrm{acc}(f_\phi)\leq 1\times 0.25 + 0.5\times 0.75 = 0.625$.
This theoretical upper bound matches the result in Figure~\ref{fig:exp_duck}.
Figure~\ref{fig:exp_duck} suggests that even in the presence of symmetry corruptions, enforcing extrinsic equivariance can improve the sample efficiency 
while incorrect equivariance is detrimental.

\paragraph{Extrinsic Image Augmentation Helps in Learning Correct Symmetry.} 
In these experiments, we further illustrate that enforcing extrinsic equivariance 
helps the model learn the latent equivariance of the task for in-distribution data.  As an alternative to equivariant networks, we consider an older alternative for symmetry learning, data augmentation, to see whether extrinsic symmetry augmentations can improve the performance of an unconstrained CNN by helping it learn latent symmetry. Specifically, we augment each training sample with $C_8$ image rotations while keeping the validation and test set unchanged. As is shown in Figure~\ref{fig:exp_duck}, adding such extrinsic data augmentation (CNN + Img Trans, red) significantly improves the performance of CNN (green), and nearly matches the performance of the equivariant network (blue). Notice that in invert-label, adding such augmentation hurts the performance of CNN because of incorrect equivariance.

\section{Extrinsic Equivariance in Reinforcement Learning}
The results in Section~\ref{sec:sl} suggest that enforcing extrinsic equivariance can help the model better learn the latent symmetry in the task. In this section, we apply this methodology in reinforcement learning and demonstrate that extrinsic equivariance can significantly improve sample efficiency.

\subsection{Reinforcement Learning in Robotic Manipulation}
\label{sec:rl_manip_main}
\begin{wrapfigure}[13]{r}{0.4\textwidth}
\vspace{-0.4cm}
\centering
\includegraphics[width=\linewidth]{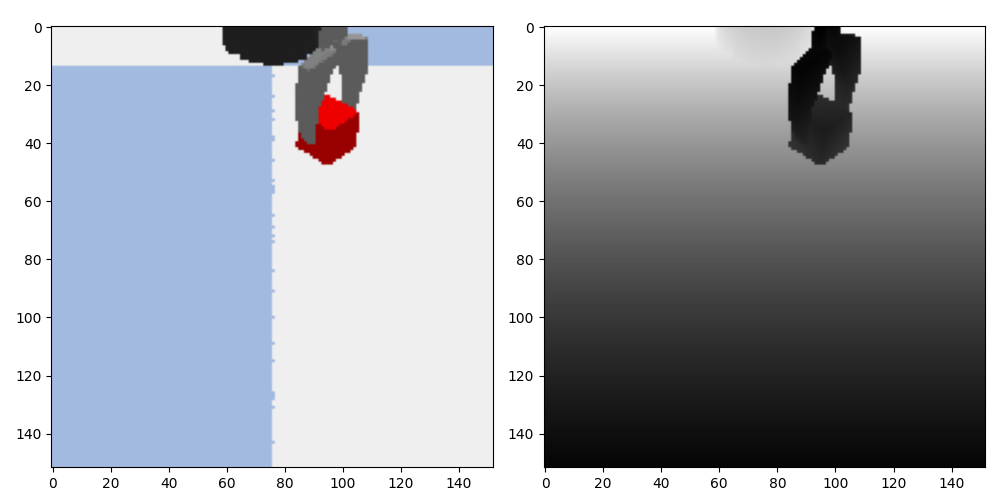}
\caption{The image state in the Block Picking task. Left image shows the RGB channels and right image shows the depth channel.}
\label{fig:obs}
\end{wrapfigure}
We first experiment in five robotic manipulation environments shown in Figure~\ref{fig:bullet_env}. The state space $S=\mathbb{R}^{4\times h\times w}$ is a 4-channel RGBD image captured from a fixed camera pointed at the workspace (Figure~\ref{fig:obs}). The action space $A=\mathbb{R}^5$ is the change in gripper pose $(x, y, z, \theta)$, where $\theta$ is the rotation along the $z$-axis, and the gripper open width $\lambda$. The task has latent $\OO(2)$ symmetry: when a rotation or reflection is applied to the poses of the gripper and the objects, the action should rotate and reflect accordingly. However, such symmetry does not exist in image space because the image perspective is skewed instead of top-down (we also perform experiments with another symmetry corruption caused by sensor occlusion in Appendix~\ref{app:rl_occlusion}). We enforce such extrinsic symmetry (group $D_4$) using Equivariant SAC~\citep{iclr,corl22} equipped with random crop augmentation using RAD~\citep{rad} (Equi SAC + RAD) and compare it with the following baselines: 1) CNN SAC + RAD: 
same as our method but with an unconstrained CNN instead of an equivariant model;
2) CNN SAC + DrQ: same as 1), but with DrQ~\citep{drq} for the random crop augmentation; 3) FERM~\citep{ferm}: a combination of 1) and contrastive learning; and 4) SEN + RAD: Symmetric Embedding Network~\citep{sen} that uses a conventional network for the encoder and an equivariant network for the output head. All baselines are implemented such that they have a similar number of parameters as Equivariant SAC. See Appendix~\ref{app:archi_rl} for the network architectures and Appendix~\ref{app:archi_search} for the architecture hyperparameter search for the baselines. All methods use Prioritized Experience Replay (PER)~\citep{per} with pre-loaded expert demonstrations (20 episodes for Block Pulling and Block Pushing, 50 for Block Picking and Drawer Opening, and 100 for Block in Bowl). We also add an L2 loss towards the expert action in the actor to encourage expert actions. 
More details about training are provided in Appendix~\ref{app:train_detail_rl_manip}. 

Figure~\ref{fig:exp_vs_cnn} shows that Equivariant SAC (blue) outperforms all baselines. Note that the performance of Equivariant SAC in Figure~\ref{fig:exp_vs_cnn} does not match that reported in~\cite{iclr} because we have a harder task setting: we do not have a top-down observation centered at the gripper position as in the prior work. Such top-down observations would not only provide correct equivariance but also help learn a translation-invariant policy. Even in the harder task setting without top-down observations, Figure~\ref{fig:exp_vs_cnn} suggests that Equivariant SAC can still achieve higher performance compared to baselines. 

\begin{figure}
\centering
\subfloat[Block Pulling]{\includegraphics[width=0.19\textwidth]{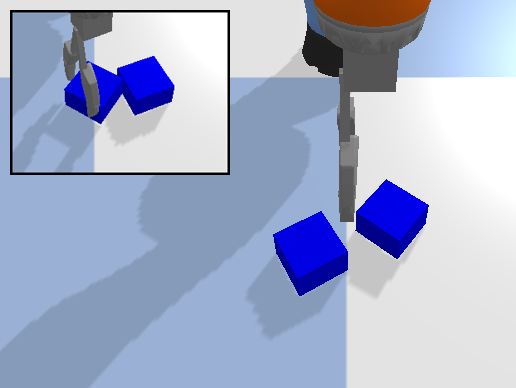}
}
\subfloat[Block Pushing]{\includegraphics[width=0.19\textwidth]{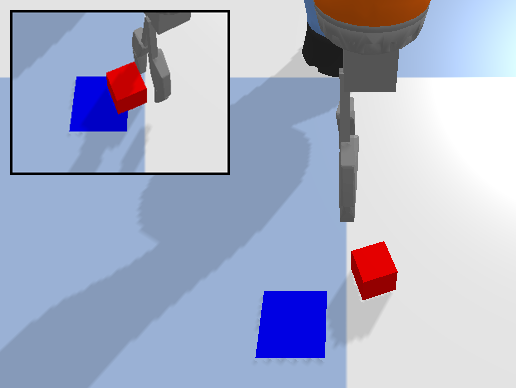}
}
\subfloat[Block Picking]{\includegraphics[width=0.19\textwidth]{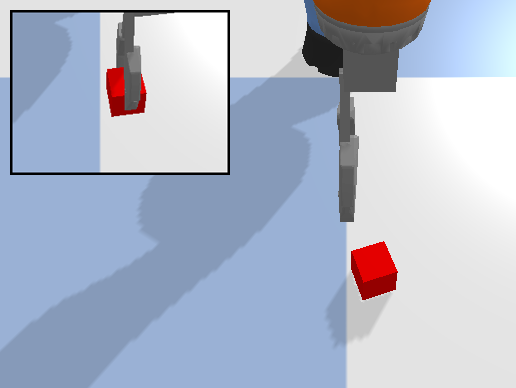}
}
\subfloat[Drawer Opening]{\includegraphics[width=0.19\textwidth]{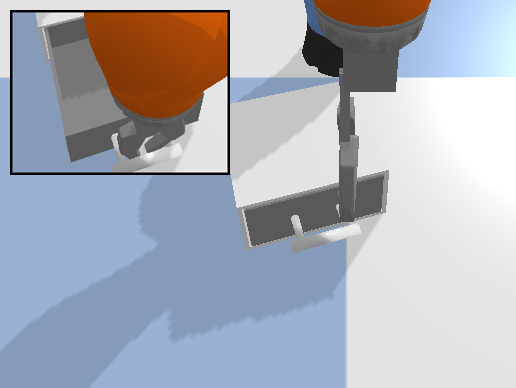}
}
\subfloat[Block in Bowl]{\includegraphics[width=0.19\textwidth]{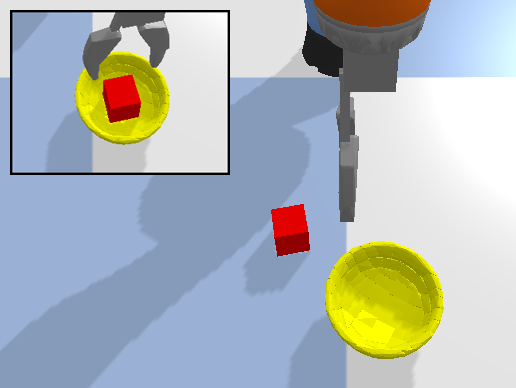}
}
\caption{The manipulation environments from BulletArm benchmark~\cite{bulletarm} implemented in PyBullet~\cite{pybullet}. The top-left shows the goal for each task.}
\label{fig:bullet_env}
\end{figure}

\begin{figure}[t]
\centering
\includegraphics[width=\linewidth]{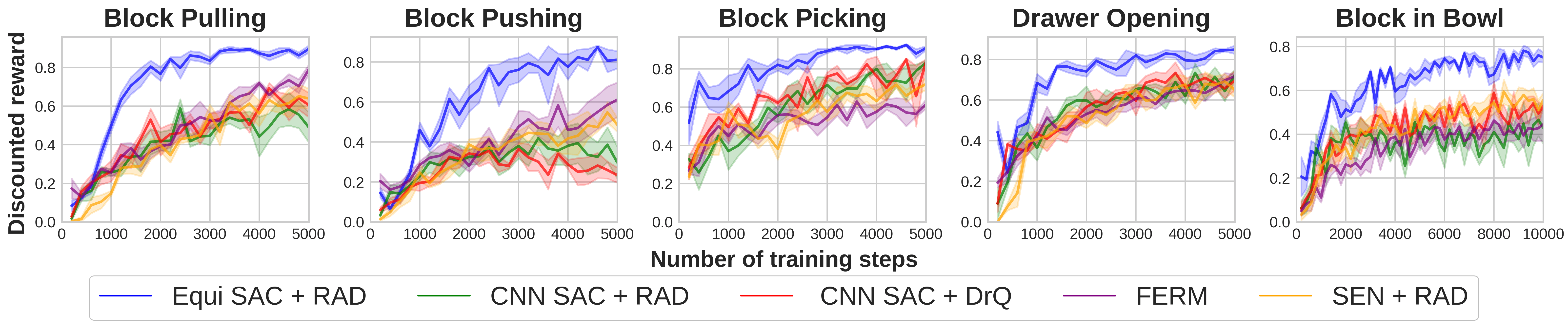}
\caption{Comparison of Equivariant SAC (blue) with baselines. The plots show the performance of the evaluation policy. The evaluation is performed every 200 training steps.}
\label{fig:exp_vs_cnn}
\end{figure}

\subsection{Increasing Corruption Levels}
\label{sec:equiv_limits}

\begin{wrapfigure}[11]{r}{0.4\textwidth}
\vspace{-0.4cm}
\centering
\includegraphics[width=\linewidth]{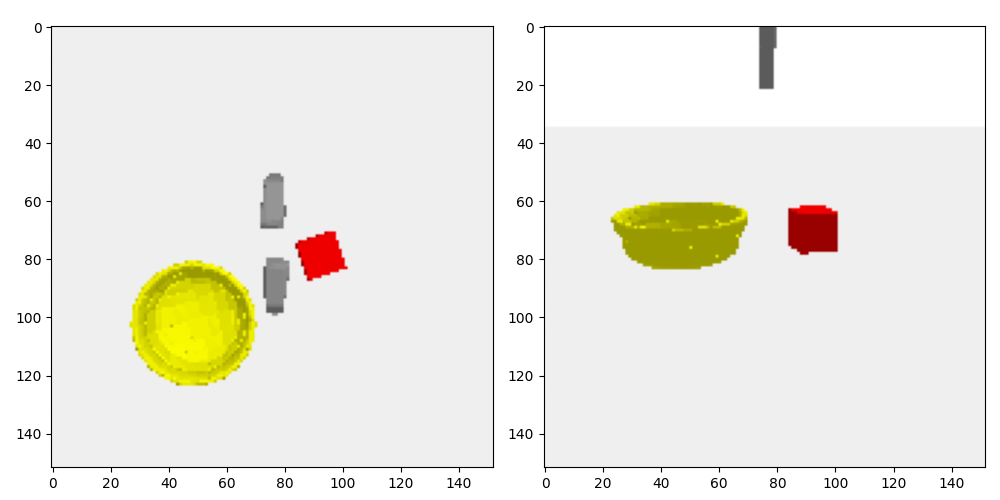}
\caption{Left: view angle at 90 degrees. Right: view angle at 15 degrees.}
\label{fig:view_angle}
\end{wrapfigure}

In this experiment, we vary the camera angle by tilting to see how increasing the gap between the image transform and the object transform affects the performance of extrinsically equivariant networks. When the view angle is at 90 degrees (i.e., the image is top-down), the object and image transformation exactly match. As the view angle is decreased, the gap increases. Figure~\ref{fig:view_angle} shows the observation at 90 and 15 degree view angles. We remove the robot arm except for the gripper and the blue/white grid on the ground to remove the other symmetry-breaking components in the environment so that the camera angle is the only symmetry corruption. We compare Equi SAC + RAD against CNN SAC + RAD. We evaluate the performance of each method at the end of training for different view angles in Figure~\ref{fig:exp_view_angle}. As expected, the performance of Equivariant SAC decreases as the camera angle is decreased, especially from 30 degrees to 15 degrees. On the other hand, CNN generally has similar performance for all view angles, with the exception of Block Pulling and Block Pushing, where decreasing the view angle leads to higher performance. This may be because decreasing the view angle helps the network to better understand the height of the gripper, which is useful for pulling and pushing actions.


\begin{figure}[t]
\centering
\includegraphics[width=\linewidth]{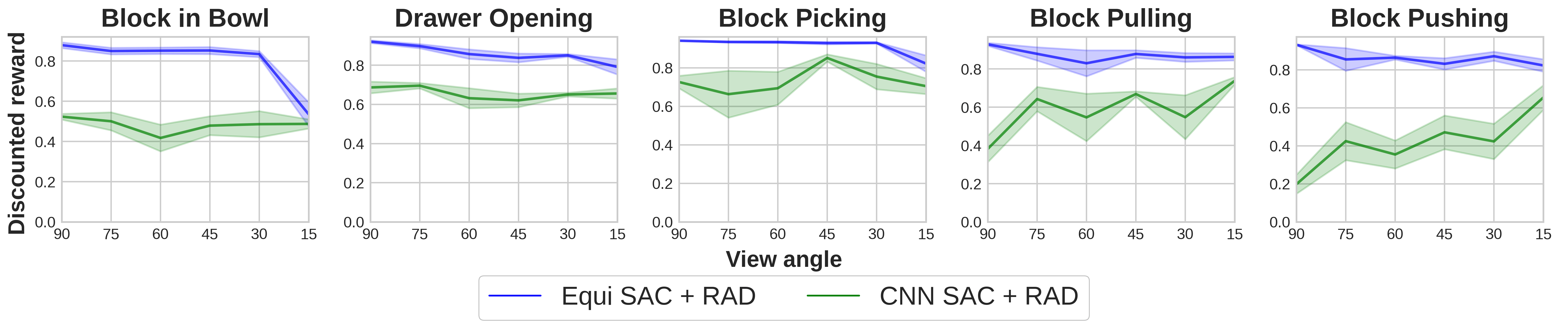}
\caption{Comparison between Equivariant SAC (blue) and CNN SAC (green) as the view angle decreases. The plots show the evaluation performance of Equivariant SAC and CNN SAC at the end of training in different view angles.}
\label{fig:exp_view_angle}
\end{figure}

\subsection{Example of Incorrect Equivariance}
\begin{wrapfigure}[10]{r}{0.5\textwidth}
\vspace{-0.4cm}
\centering
\includegraphics[width=\linewidth]{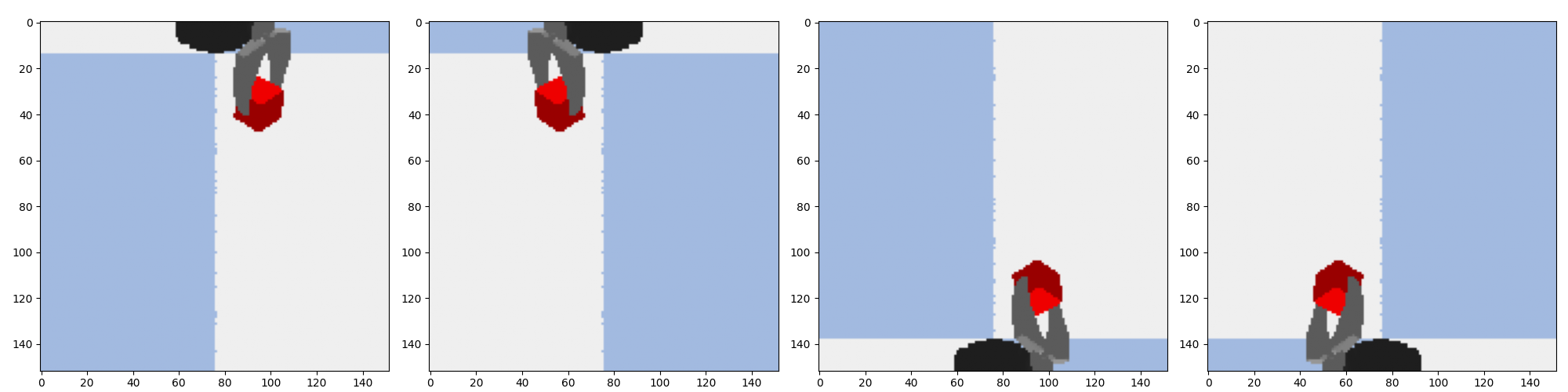}
\caption{The environment conducts a random reflection on the state image at every step. The four images show the four possible reflections, each has 25\% probability.}
\label{fig:random_reflect_obs}
\end{wrapfigure}

We demonstrate an example where incorrect equivariance can harm the performance of Equivariant SAC compared to an unconstrained model. We modify the environments 
so that the image state will be reflected across the vertical axis with $50\%$ probability and then also reflected across the horizontal axis with $50\%$ probability (see Figure~\ref{fig:random_reflect_obs}).
As these random reflections are contained in $D_4$, the transformed state $\text{reflect}(s), s \in S$ is affected by Equivariant SAC's symmetry constraint. In particular, as the actor produces a transformed action for $\text{reflect}$ when the optimal action should actually be invariant, the extrinsic equivariance constraint now becomes an incorrect equivariance for these reflected states. As shown in Figure~\ref{fig:exp_random_reflect}, Equivariant SAC can barely learn under random reflections, while CNN can still learn a useful policy.

\begin{figure}[t]
\centering
\includegraphics[width=\linewidth]{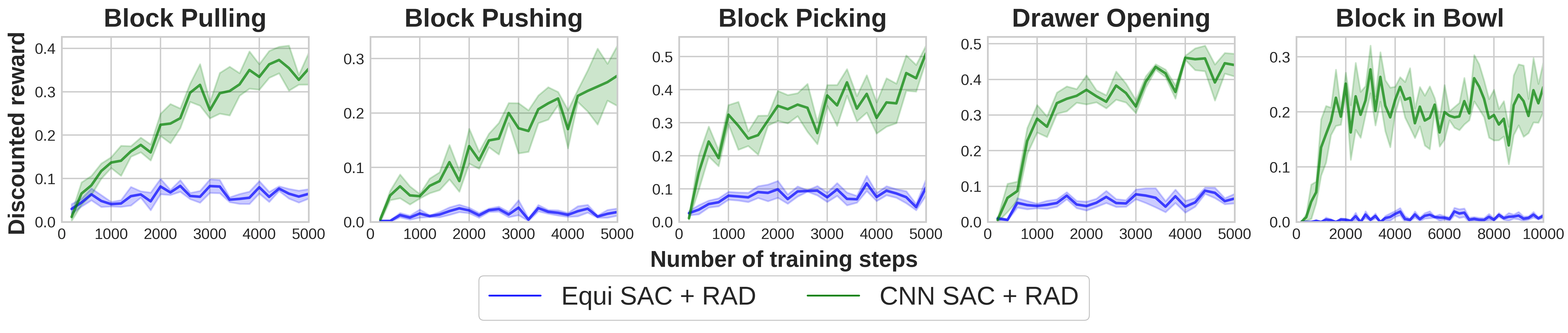}
\caption{Comparison between Equivariant SAC (blue) and CNN SAC (green) in an environment that will make Equivariant SAC encode incorrect equivariance. The plots show the performance of the evaluation policy. The evaluation is performed every 200 training steps.}
\label{fig:exp_random_reflect}
\end{figure}

\subsection{Reinforcement Learning in DeepMind Control Suite}

\begin{figure}[t]
\centering
\includegraphics[width=0.8\linewidth]{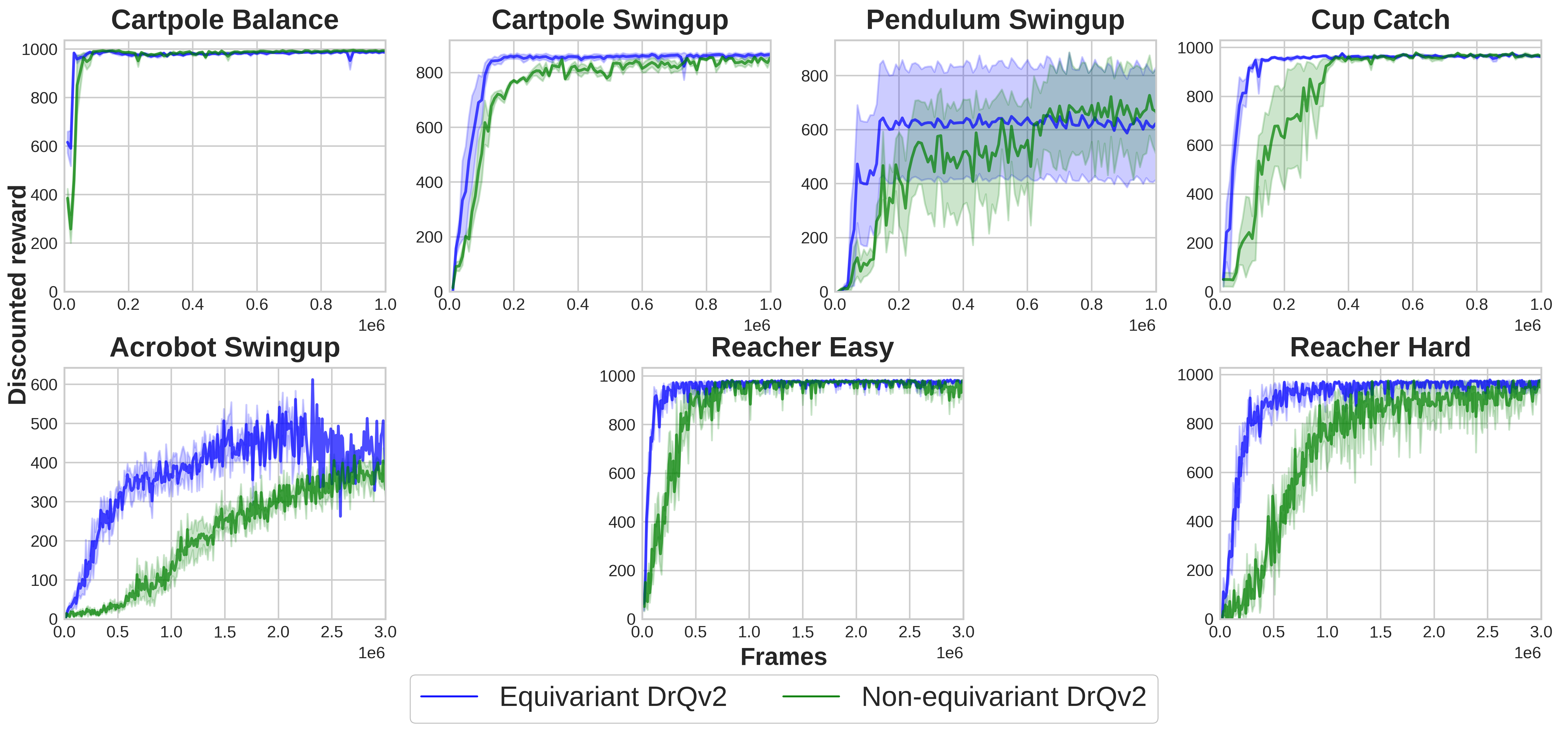}
\caption{Comparison between Equivariant DrQv2 and Non-equivariant DrQv2 on \textit{easy} tasks (top) and \textit{medium} tasks (bottom). The evaluation is performed every 10000 environment steps.}
\label{fig:exp_dmc_main_all}
\end{figure}

We further apply extrinsically equivariant networks to continuous control tasks in the DeepMind Control Suite (DMC)~\citep{tunyasuvunakool2020}. We use a subset of the domains in DMC that have clear object-level symmetry and use the $D_1$ group for cartpole, cup catch, pendulum, acrobot domains, and $D_2$ for reacher domains. This leads to a total of $7$ tasks, with $4$ easy and $3$ medium level tasks as defined in~\citep{drqv2}. Note that all of these domains are not fully equivariant as they include a checkered grid for the floor and random stars as the background. 

We use DrQv2~\cite{drqv2}, a SOTA model-free RL algorithm for image-based control, as our base RL algorithm. 
We create an equivariant version of DrQv2, with an equivariant actor and invariant critic with respect to the environment's symmetry group. We follow closely the architecture and training hyperparameters used in the original paper except in the image encoder, where two max-pooling layers are added to further reduce the representation dimension for faster training. Furthermore, DrQv2 uses convolution layers in the image encoder and then flattens its output to feed it into linear layers in the actor and the critic. In order to preserve this design choice for the equivariant model, we do not reduce the spatial dimensions to $1 \times 1$ by downsampling/pooling or stride as commonly done in practice. Rather we flatten the image using a process we term action restriction since the symmetry group is restricted from $\mathbb{Z}^2 \ltimes D_k$ to $D_k$.  Let $I \in \mathbb{R}^{h \times w \times c}$ denote the image feature where $D_k$ acts on both the spatial domain and channels.  Then we add a new axis corresponding to $D_k$ by $\tilde{I} = (g I)_{g \in D_k} \in \mathbb{R}^{h \times w \times c \times 2k}$.  We then flatten to $\bar{I} = (g I)_{g \in D_k} \in \mathbb{R}^{1 \times 1 \times hwc \times 2k}$.  The intermediate step $\tilde{I}$ is necessary to encode both the spatial and channel actions into a single axis which ensures the action restriction is $D_k$-equivariant. We now map back down to the original dimension with a $D_k$-equivariant $1\times 1$ convolution.
To the best of our knowledge, this is the first equivariant version of DrQv2.

We compare the equivariant vs. the non-equivariant (original) DrQv2 algorithm to evaluate whether extrinsic equivariance can still improve training in the original domains (with symmetry corruptions). In figures~\ref{fig:exp_dmc_main_all}, equivariant DrQv2 consistently learns faster than the non-equivariant version on all tasks, where the performance improvement is largest on the more difficult medium tasks. In pendulum swingup, both methods have $1$ failed run each, leading to a large standard error, see Figure~\ref{fig:exp_dmc_main_pendulum_swing} in Appendix~\ref{app:exp_dmc_additional} for a plot of all runs.
These results highlight that even with some symmetry corruptions, equivariant policies can outperform non-equivariant ones. See Appendix~\ref{app:exp_dmc_camera} for an additional experiment where we vary the level of symmetry corruptions as in Section~\ref{sec:equiv_limits}.

\section{Discussion}
This paper defines correct equivariance, incorrect equivariance, and extrinsic equivariance, and identifies that enforcing extrinsic equivariance does not necessarily increase error. This paper further demonstrates experimentally that extrinsic equivariance can provide significant performance improvements in reinforcement learning. 
A limitation of this work is that we mainly experiment in reinforcement learning and a simple supervised setting but not in other domains where equivariant learning is widely used. The experimental results of our work suggest that an extrinsic equivariance should also be beneficial in those domains, but we leave this demonstration to future work. Another limitation is that we focus on planar equivariant networks. In future work, we are interested in evaluating extrinsic equivariance in network architectures that process different types of data.

\section*{Acknowledgments}
This work is supported in part by NSF 1724257, NSF 1724191, NSF 1763878, NSF 1750649, NSF 2107256, and NASA 80NSSC19K1474. R. Walters is supported by the Roux Institute and the Harold Alfond Foundation and NSF 2134178.


\section*{Ethic Statement}
Equivariant models allow us to train robots faster and more accurately in many different tasks. Our work shows this advantage can be applied even more broadly to tasks in real-world conditions. Our method is agnostic to the morality of the actions which robots are trained for and, in that sense, can make it easier for robots to be used for either societally beneficial or detrimental tasks.

\bibliography{iclr2023_conference}
\bibliographystyle{iclr2023_conference}

\clearpage
\appendix

\section{Theoretical Upper Bound on Accuracy for Models with Incorrect Symmetry}
\label{app:proof}
We consider a classification problem over the set $X$ with finitely many classes $Y$. Let $m = |Y|$ be the number of classes.   Let $l \colon X \to Y$ be the true labels.  Let $G$ be a finite group acting on $X$.  We assume the action of $G$ on $X$ is density preserving.  That is, if $p_X$ is the density function corresponding to the input domain, then $p_X(gx) = p_X(x)$. 
Denote the orbit of a point $x \in X$ by $Gx = \lbrace gx : g \in G \rbrace$ and the stabilizer by $G_x = \lbrace g \in G : gx= x \rbrace.$  By the orbit-stabilizer theorem $|G| = |G_x||Gx|$. 

Now consider a model $f \colon X \to Y$ with incorrect equivariance constrained to be invariant to $G$.  We partition the input set into subsets $X = \coprod_{k=1}^m X_k$ where
\[
    X_k = \lbrace x \in X : |l(Gx)| = k \rbrace.
\]
If $f$ has correct equivariance then $X = X_1$.  Incorrect equivariance implies that there are orbits $Gx$ which are assigned more than one label.  Since $f$ is constrained to be equivariant such orbits will necessarily result in some errors.  We give an upper bound on that error.  Define $c_k = \mathbb{P}( x \in X_k)$.  Note that since $X_k$ give a partition, $\sum_{k=1}^m c_k = 1$.  Also, $X_k$ is empty for $k > |G|$ since the number of labels assigned to an orbit is also upper bounded by the number of points in the orbit which is at most $|G|$.  Letting $K = \mathrm{min}(|Y|,|G|)$, we have $X = \coprod_{k=1}^K X_k$.    

\begin{proposition}\label{prop:accUBloose}
The accuracy of $f$ has upper bound
$\mathrm{acc}(f) \leq 1 - \sum_{k=1}^K c_k (k-1)/|G|.$
\end{proposition}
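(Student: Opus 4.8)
The plan is to reduce the bound to a per-orbit counting statement and then integrate over the space of orbits. Since $f$ is constrained to be $G$-invariant, $f(gx)=f(x)$ for all $g\in G$, so $f$ is constant on every orbit $Gx$. First I would fix an orbit $O = Gx$ and ask for the largest possible value of $\mathbb{P}\bigl(f(x)=l(x)\mid x\in O\bigr)$ over \emph{all} constant labelings of $O$ (which in particular bounds $f$ restricted to $O$). Because the action is density preserving, $p_X(gx)=p_X(x)$, so conditioned on lying in the finite set $O$ the input $x$ is uniformly distributed over the $|O|$ distinct points of $O$. If $O\subseteq X_k$, i.e. $|l(O)|=k$, then among those $|O|$ points each of the $k$ labels is realized at least once, so the most frequent label occurs at most $|O|-(k-1)$ times. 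Hence the optimal constant labeling (assign the majority label) achieves conditional accuracy at most $(|O|-(k-1))/|O| = 1-(k-1)/|O|$.

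Next I would pass from $|O|$ to $|G|$: by the orbit--stabilizer theorem $|O| = |Gx| = |G|/|G_x| \le |G|$, so $1-(k-1)/|O| \le 1-(k-1)/|G|$ for every orbit $O\subseteq X_k$. Thus $\mathbb{P}\bigl(f(x)=l(x)\mid x\in O\bigr)\le 1-(k-1)/|G|$ uniformly over all orbits contained in $X_k$, and averaging this conditional bound over the orbits inside $X_k$ (via the disintegration of $p_X$ along the quotient map $X\to X/G$, equivalently by the tower property applied to the orbit-valued random variable $\pi(x)$) yields $\mathbb{P}\bigl(f(x)=l(x)\mid x\in X_k\bigr)\le 1-(k-1)/|G|$.

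Finally I would combine the pieces by the law of total probability over the partition $X=\coprod_{k=1}^K X_k$:
\[
\mathrm{acc}(f) = \sum_{k=1}^K c_k\,\mathbb{P}\bigl(f(x)=l(x)\mid x\in X_k\bigr) \le \sum_{k=1}^K c_k\Bigl(1-\tfrac{k-1}{|G|}\Bigr) = 1 - \sum_{k=1}^K \frac{c_k(k-1)}{|G|},
\]
using $\sum_{k=1}^K c_k = 1$ in the last step.

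I expect the main obstacle to be the measure-theoretic bookkeeping in the middle step: when $X$ is a continuum each orbit has measure zero, so ``averaging over orbits'' must be made precise through a disintegration of $p_X$ over $X/G$ (equivalently, conditioning on $\pi(x)$) and one should check the relevant measurable selections exist. Everything else — the per-orbit pigeonhole count, the orbit--stabilizer substitution, and the final summation — is elementary. A clean way to sidestep the subtlety is to observe that all that is actually used is the per-orbit inequality $\mathbb{P}(\,\cdot\mid O\,)\le 1-(k-1)/|G|$ together with the tower property of conditional expectation, so no explicit fundamental domain is required; in the discrete case the disintegration is just an ordinary sum over orbits.
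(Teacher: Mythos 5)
Your proof is correct and follows essentially the same route as the paper's: the same per-orbit pigeonhole bound (the majority label covers at most $|Gx|-(k-1)$ of the $|Gx|$ points since all $k$ labels occur), the same use of density preservation to make the within-orbit distribution uniform, the same orbit--stabilizer substitution $|Gx|\le |G|$, and the same final summation over the partition $X=\coprod_{k=1}^K X_k$ using $\sum_k c_k=1$. The only difference is bookkeeping in the averaging step: where you condition on orbits and invoke a disintegration of $p_X$ over $X/G$, the paper writes $\mathrm{acc}(f)=\frac{1}{|G|}\sum_{g\in G}\mathbb{E}_{x}[\delta(f(gx)=y)]$ and interchanges the sum with the expectation, which makes the orbit average appear inside a single expectation over $X$ and thereby sidesteps the measure-zero-orbit subtlety you flag.
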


In contrast, we can choose an unconstrained model from a model class with a universal approximation property and given properly chosen hyperparameters find a model with arbitrarily good accuracy.  

\begin{proof}
Let $y = l(x)$.  Then $\mathrm{acc}(f) = \mathbb{E}_{x \in X} [\delta(f(x) = y)].$
Since the action of $G$ is density preserving, applying an element of $G$ before sampling does not affect the expectation,   $\mathbb{E}_{x \in X} [\delta(f(x) = y)] = \mathbb{E}_{x \in X} [\delta(f(gx) = y)]$ and so   
\[
 \mathrm{acc}(f) = \frac{1}{|G|} \sum_{g \in G} \mathbb{E}_{x \in X} [ \delta(f(gx) = y) ].
\]
If we split the expectation over the partition $X = \coprod_{k=1}^K X_k$ we get 
\[
 \frac{1}{|G|} \sum_{g \in G} \sum_{k=1}^K c_k \mathbb{E}_{x \in X_k} [ \delta(f(gx) = y) ].
\]
Interchanging sums gives
\[
 \sum_{k=1}^K c_k \left(  \mathbb{E}_{x \in X_k} \left[ \frac{1}{|G|} \sum_{g \in G}   \delta(f(gx) = y) \right] \right).
\]
By the orbit-stabilizer theorem,
\[
\frac{1}{|G|} \sum_{g \in G}   \delta(f(gx) = y) = 
\frac{|G_x|}{|G|} \sum_{x' \in Gx}   \delta(f(x') = y) = 
\frac{|1|}{|Gx|} \sum_{x' \in Gx}   \delta(f(x') = y)
\]
which is the average accuracy over the orbit $Gx$. Since $f$ is constrained to a single value of the orbit, and $k$ different true labels appear, the highest accuracy attainable is when $|G| = |Gx|$ and the true labels \edit{are} maximally unequally distributed such that 1 point in the orbit takes each of $k-1$ labels and all the other $|G| - (k-1)$ points receive a single label.  In this case accuracy can be maximized by choosing $f(x')$ to be this majority label, and
\[
\frac{|1|}{|Gx|} \sum_{x' \in Gx}   \delta(f(x') = y) \leq 1 - \frac{k-1}{|G|}.
\]
Substituting back in,
\begin{align*}
    \mathrm{acc}(f) &\leq 
 \sum_{k=1}^K c_k \left(  \mathbb{E}_{x \in X_k} \left[ 1 - \frac{k-1}{|G|}\right] \right)  \\
 & =
 1 - \sum_{k=1}^K c_k \left(     \frac{k-1}{|G|} \right) 
\end{align*}
since $\frac{k-1}{|G|}$ is constant over $X_k$ and $\sum_{k=1}^K c_k = 1$.
\end{proof}

Note that the assumption that $|G| = |Gx|$ and that the labels on a given orbit are maximally unequally distributed need not hold in general and thus this bound is not tight.  In order to produce a tight upper bound, consider a partition $X = \coprod_{p} X_{p}$ where $X_{p} = \lbrace x \in X :  (\mathrm{max}_y |f^{-1}(y) \cap Gx|)/|Gx| = p \rbrace$ and define $c_p = \mathbb{P}( x \in X_p)$.  The set $X_p$ contains points in orbits where the majority label covers a fraction $p$ of the points.  Note that although $p$ is a fraction between 0 and 1, there are only finitely many possible values of $p$ since the numerator and denominator and bounded natural numbers.  We may thus sum over the values of $p$.  

\begin{proposition}
The accuracy of $f$ has upper bound
$\mathrm{acc}(f) \leq  \sum_{p} c_{p} p.$
\end{proposition}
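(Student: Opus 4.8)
The plan is to follow the same averaging argument as in the proof of Proposition~\ref{prop:accUBloose}, but to bound the per-orbit accuracy more carefully so that the bound becomes tight. First I would reduce the accuracy to an average over orbits exactly as before: writing $y = l(x)$, using that the $G$-action is density preserving so that $\mathrm{acc}(f) = \frac{1}{|G|}\sum_{g \in G} \mathbb{E}_{x \in X}[\delta(f(gx) = y)]$, and then applying the orbit-stabilizer theorem to rewrite $\frac{1}{|G|}\sum_{g\in G}\delta(f(gx)=y) = \frac{1}{|Gx|}\sum_{x' \in Gx}\delta(f(x')=y)$, the average correctness over the orbit of $x$.

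The key new step is to observe that since $f$ is $G$-invariant, $f$ is constant on $Gx$, say $f|_{Gx} \equiv y_0$. Then $\frac{1}{|Gx|}\sum_{x' \in Gx}\delta(f(x')=l(x')) = \frac{|\{x' \in Gx : l(x') = y_0\}|}{|Gx|} \leq \frac{\max_y |f^{-1}(y)\cap Gx|}{|Gx|}$, which is by definition the quantity $p$ for which $x \in X_p$. (Note $l(x') = y_0$ exactly when $x'$ is in the preimage $f^{-1}(y_0)$ intersected with $Gx$ under the true labeling — I would phrase this in terms of the true-label multiset restricted to the orbit, which is what $X_p$ really tracks.) Hence the per-orbit average correctness is at most the consensus proportion $p(x)$, and this bound is achieved by choosing $f|_{Gx}$ to be the majority true label on the orbit, so no slack is lost — this is why the new bound is tight where the earlier one was not.

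I would then partition the expectation over $X = \coprod_p X_p$, which is a finite partition since $p$ ranges over finitely many rationals (bounded numerator and denominator), obtaining $\mathrm{acc}(f) \leq \sum_p c_p\, \mathbb{E}_{x \in X_p}[p(x)] = \sum_p c_p\, p$, using that $p(x) = p$ is constant on $X_p$ and $\sum_p c_p = 1$.

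The main obstacle, such as it is, is purely bookkeeping: making sure the orbit-level quantity $\max_y |f^{-1}(y)\cap Gx|/|Gx|$ is correctly interpreted as depending only on the \emph{true} labels on the orbit (so that $X_p$ is well-defined independently of $f$) while the achievability direction requires choosing $f$ appropriately — in other words, being careful that the definition of $X_p$ as written is the one that makes $\sum_p c_p p$ both an upper bound for every invariant $f$ and attained by the best such $f$. Once the per-orbit bound $\frac{1}{|Gx|}\sum_{x'\in Gx}\delta(f(x')=l(x')) \le p(x)$ is in hand, the rest is the same interchange-of-sums and partition argument already used above.
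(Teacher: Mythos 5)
Your proposal is correct and follows essentially the same route as the paper: reduce accuracy to a per-orbit average via the density-preserving action and the orbit-stabilizer theorem, bound each orbit's average correctness by its consensus proportion $p(x)$ (since an invariant $f$ is constant on $Gx$), and then sum over the finite partition $X = \coprod_p X_p$. Your bookkeeping remark is also well taken --- the consensus proportion must be read off the true labels $l$ restricted to the orbit (so that $X_p$ is defined independently of the model), with achievability coming from choosing $f$ to output the majority true label on each orbit, exactly as in the paper's argument.
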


\begin{proof}
The proof is similar to the proof of Proposition \ref{prop:accUBloose} replace $X_k$ and $c_k$ with $X_p$ and $c_p$ respectively.    For $x \in X_p$, the term $\frac{|1|}{|Gx|} \sum_{x' \in Gx}   \delta(f(x') = y)$ can be upper bounded by choosing the majority label yielding $\frac{|1|}{|Gx|} \sum_{x' \in Gx}   \delta(f(x') = y) \leq p$.  The bound then follows as before.
\end{proof}

This is a tight upper bound since assigning any but the majority label would result in lower accuracy. 

Figure~\ref{fig:invert_label_expl} demonstrates the upper bound of an incorrectly constrained equivariant network with the invert label corruption in Section~\ref{sec:sl}, where $\mathrm{acc}(f)\leq 0.25\times 1 + 0.75\times 0.5=0.625$.

\begin{figure}[t]
\centering
\includegraphics[width=0.6\textwidth]{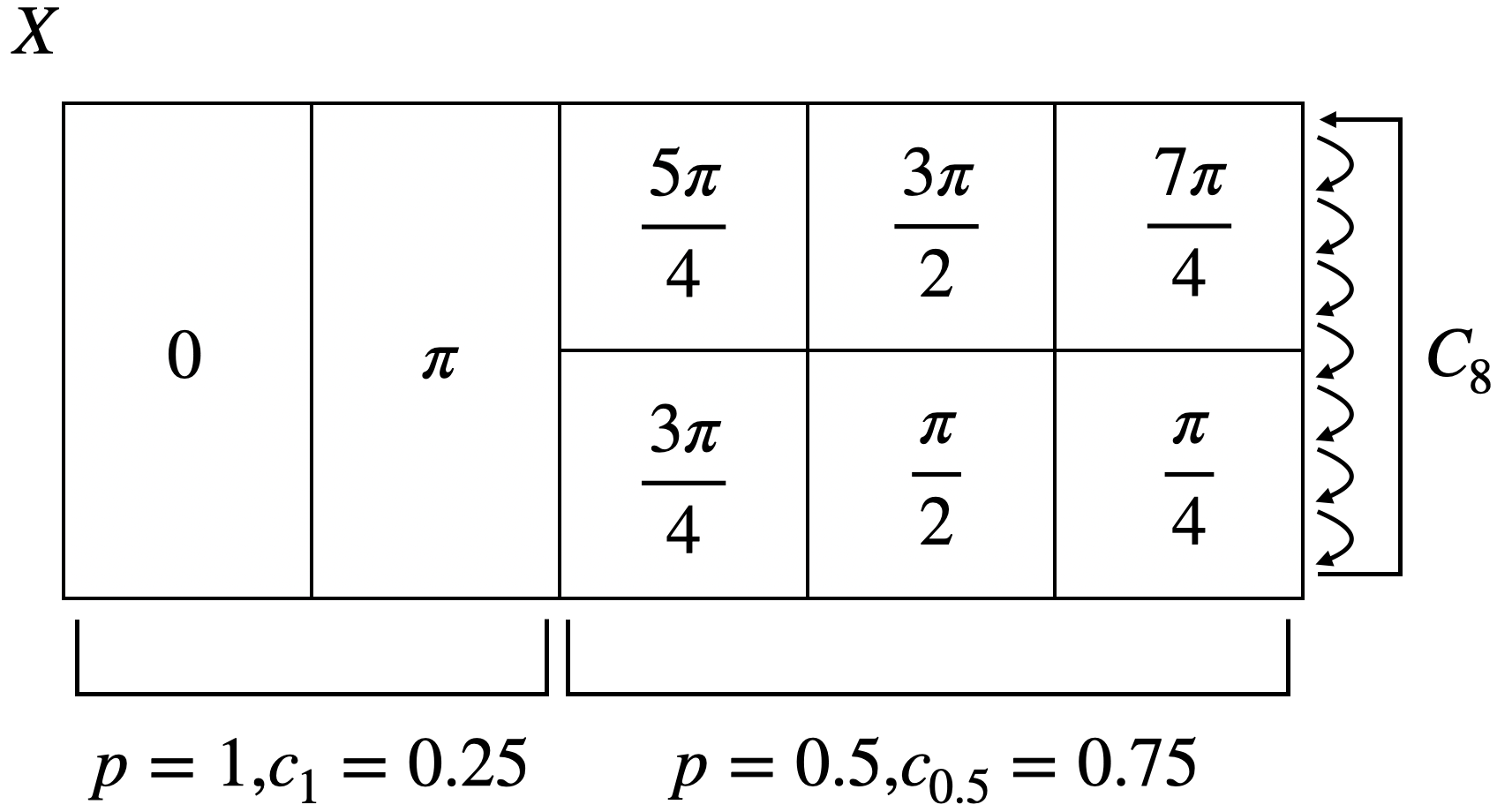}
\caption{Demonstration of the upper bound of an equivariant model under invert label corruption in our supervised learning experiment. The number on each partition shows the ground truth label.}
\label{fig:invert_label_expl}
\end{figure}

\section{\edit{Correct, Incorrect, and Extrinsic Equivariance Examples}}
\label{appendix:def_example}
\begin{figure}[t]
\centering
\includegraphics[width=0.5\linewidth]{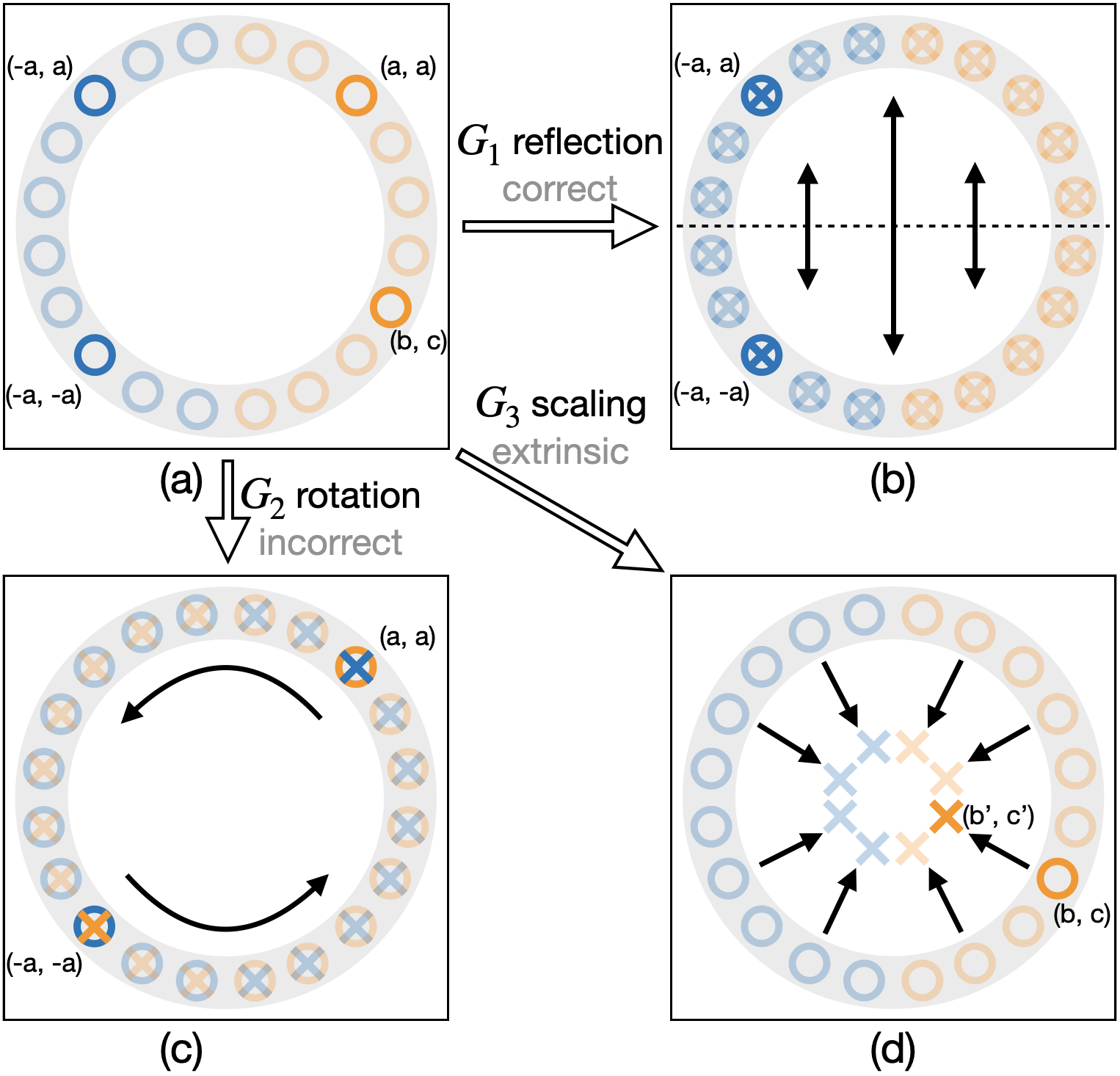}
\caption{\edit{An example classification task for correct, incorrect, and extrinsic equivariance. The input distribution is shown as a gray ring. The training data samples are shown as circles, where the color is the ground truth label.  Crosses represent the group transformed data. The opaque points highlight the example points while other points are semitransparent.}}
\label{fig:definition_example}
\end{figure}

\edit{In this section, we describe how the model symmetry transforms data under correct, incorrect, and extrinsic equivariance and how such transformations relate to the true symmetry present in the task using the example of Section~\ref{sec:definition}. The ground truth function $f: X \to Y$ is a mapping from $X = \mathbb{R}^2$ to $Y = \{\textsc{orange}, \textsc{blue}\}$.
Let $(a, a), (-a, -a), (-a, a), (b, c)$ be the coordinates of four points in the data distribution on the unit circle (Figure~\ref{fig:definition_example}a). The ground truth labels for these points are: $f(a, a)=\textsc{orange}, f(-a, -a)=\textsc{blue}, f(-a, a)=\textsc{blue}, f(b, c)=\textsc{orange}$.}

\subsection{\edit{Correct Equivariance}}
\begin{customdf}{4.1}
\edit{The action $\hat{\rho}_x$ has \emph{correct equivariance} with respect to $f$ if $\hat{\rho}_x(g)x \in D$ for all $x \in D, g \in G$ and $f(\hat{\rho}_x(g)x)=\rho_y(g) f(x)$.}
\end{customdf}

\edit{Consider the reflection group $G=C_2=\lbrace 1, r \rbrace$ (where $r$ is the reflection along the horizontal axis) acting on $X$ by $\hat{\rho}_x(1)=\textit{Id}$ or  $\hat{\rho}_x(r)=\big(\begin{smallmatrix}1 & 0\\0 & -1\end{smallmatrix}\big)$ 
and $Y$ via $\rho_y=\textit{Id}$, the trivial action fixing the labels (Figure~\ref{fig:definition_example}b). If we define an equivariant model with respect to $\hat{\rho}_x$ and $\rho_y$, then the model's symmetry preserves the problem symmetry. For example, consider the point $(-a, a)$, $r \in G_1$ is the reflection so that $\hat{\rho}_x(r) =\big( \begin{smallmatrix}1 & 0 \\ 0 & -1\end{smallmatrix}\big)$ and $\hat{\rho}_x(r)(-a, a)=(-a, -a)$. Since the model $f_\phi$ is $G_1$-equivariant, $f_\phi(\hat{\rho}_x(r) x)=\rho_y(r)f_\phi(x)$. Substituting $\rho_y = \textit{Id}$ and $x=(-a, a)$, we obtain $f_\phi(-a, -a)=f_\phi(-a, a)$, meaning that the output of $f_\phi(-a, -a)$ and $f_\phi(-a, a)$ are constrained to be equal. Thus the invariance property in the ground truth function $f$ where $f(-a, -a)=f(-a, a)=\textsc{blue}$ is preserved (notice that this applies to all $x\in X$). We call this correct equivariance.}


\subsection{\edit{Incorrect Equivariance}}

\begin{customdf}{4.2}
\edit{
The action $\hat{\rho}_x$ has \emph{incorrect equivariance} with respect to $f$ if there exist $x \in D$ and $g \in G$ such that $\hat{\rho}_x(g)x \in D$ but $f(\hat{\rho}_x(g)x) \not =\rho_y(g) f(x)$.
}
\end{customdf}

\edit{Consider the rotation group $G_2=\langle \mathrm{Rot}_{\pi} \rangle$ (Figure~\ref{fig:definition_example}c) which acts via $\hat{\rho}_x$ on $X$ via a rotation matrix of $\pi$ and acts on $Y$ via $\rho_y=\textit{Id}$. If we define an equivariant model with respect to $\hat{\rho}_x$ and $\rho_y$, the network's symmetry will conflict with the problem's symmetry. For example, consider the point $(a, a)$ and let $g \in G_2$ be the rotation action so that $\hat{\rho}_x(g) =\big( \begin{smallmatrix}-1 & 0 \\ 0 & -1\end{smallmatrix}\big)$ and $\hat{\rho}_x(g)(a, a) = (-a, -a)$. As the model $f_\phi$ is $G_2$-equivariant, $f_\phi(\hat{\rho}_x(g) x)=\rho_y(g)f_\phi(x)$. Substituting $\rho_y = \textit{Id}$ and $x=(a, a)$, we get $f_\phi(-a, -a)=f_\phi(a, a)$. However, this constraint interferes with the ground truth function $f$ as $f(-a, -a)=\textsc{blue}$ and $f(a, a)=\textsc{orange}$. We call this incorrect equivariance.}


\subsection{\edit{Extrinsic Equivariance}}
\begin{customdf}{4.3}
\edit{
The action $\hat{\rho}_x$ has \emph{extrinsic equivariance} with respect to $f$ if for $x\in D$, $\hat{\rho}_x(g)x\not\in D$.}
\end{customdf}
\edit{Consider the scaling group $G_3$ acting on $X$ by scaling the vector and on $Y$ via $\rho_y=\textit{Id}$ (Figure~\ref{fig:definition_example}d). If we define an equivariant model with respect to $\hat{\rho}_x$ and $\rho_y$, the group-transformed data will be outside the input distribution. Consider the point $(b, c)$ and let $g\in G_3$ be the scaling action so that $\hat{\rho}_x(g)(b, c) = (b', c')$. Since the model $f_\phi$ is $G_3$-equivariant, $f_\phi(\hat{\rho}_x(g) x)=\rho_y(g)f_\phi(x)$. Substituting $\rho_y = \textit{Id}$ and $x=(b, c)$ we have $f_\phi(b', c')=f_\phi(b, c)$ meaning that the output of $f_\phi(b', c')$ and $f_\phi(b, c)$ are constrained to be equal. However, $(b', c')$ is outside of the input distribution (gray ring) and thus the ground truth $f(b', c')$ is undefined. We call this extrinsic equivariance.

Intuitively, it is easy to see in this example how extrinsic equivariance would help the model learn $f$. If the model $f_\phi$ is equivariant to the scale group $G_3$, then it can generalize to ``scaled'' up or down versions of the input distribution and ``covers'' more of the input space $\mathbb{R}^2$. As such, the model may learn the decision boundary (the vertical axis) more easily because of its equivariance compared to a non-equivariant model, even if the equivariance is extrinsic.
}


\section{Network Architecture}
\begin{figure}[H]
\centering
\includegraphics[width=0.6\textwidth]{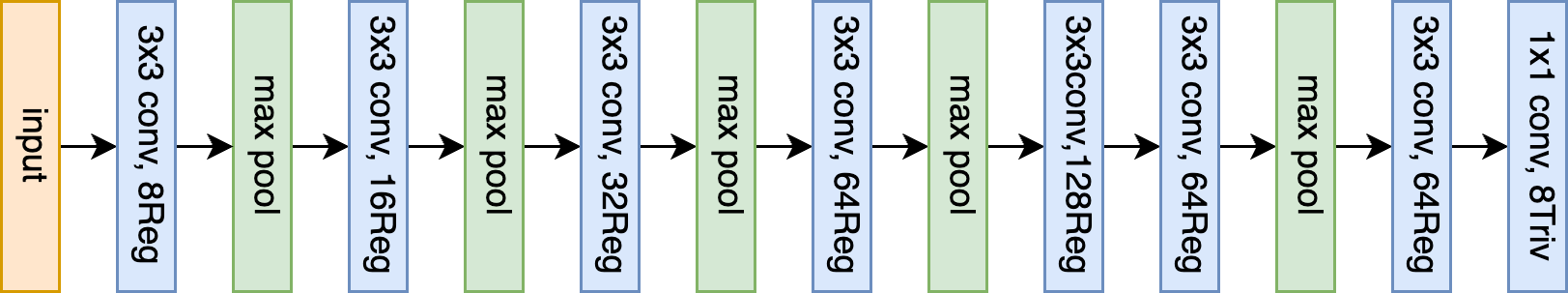}
\caption{Network architecture of the equivariant network in the supervised learning experiment.}
\label{fig:network_sl_equi}
\end{figure}

\begin{figure}[H]
\centering
\includegraphics[width=0.6\textwidth]{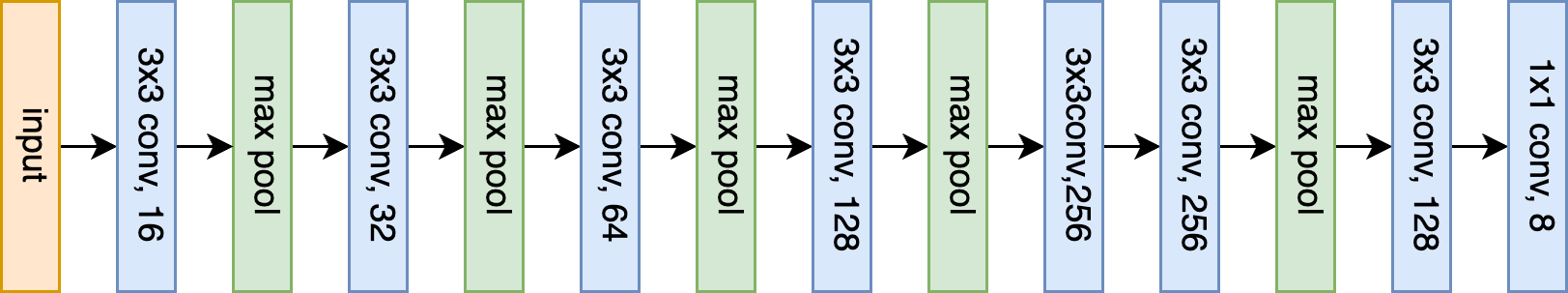}
\caption{Network architecture of the CNN network in the supervised learning experiment.}
\label{fig:network_sl_cnn}
\end{figure}

\begin{table}[H]
\centering
\caption{Number of trainable parameters of the equivariant network (Equi) and conventional CNN network (CNN) in the supervised learning task.}
\begin{tabular}{ccc}
\toprule
Network & Equi & CNN \\\midrule
Number of Parameters & 1.11 million & 1.28 million \\\bottomrule
\end{tabular}
\label{tab:n_parameter_sl}
\end{table}
\subsection{Supervised Learning}
\label{app:archi_sl}
Figure~\ref{fig:network_sl_equi} shows the network architecture of the equivariant network and Figure~\ref{fig:network_sl_cnn} shows the network architecture of the CNN network in Section~\ref{sec:sl}. Both networks are 8-layer convolutional neural networks. The equivariant network is implemented using the e2cnn~\citep{e2cnn} library, where the hidden layers are defined using the regular representation and the output layer is defined using the trivial representation. Table~\ref{tab:n_parameter_sl} shows the numbers of trainable parameters in both networks, where both networks have a similar number with a slight advantage in the CNN.

\begin{figure}[t]
\centering
\includegraphics[width=0.8\textwidth]{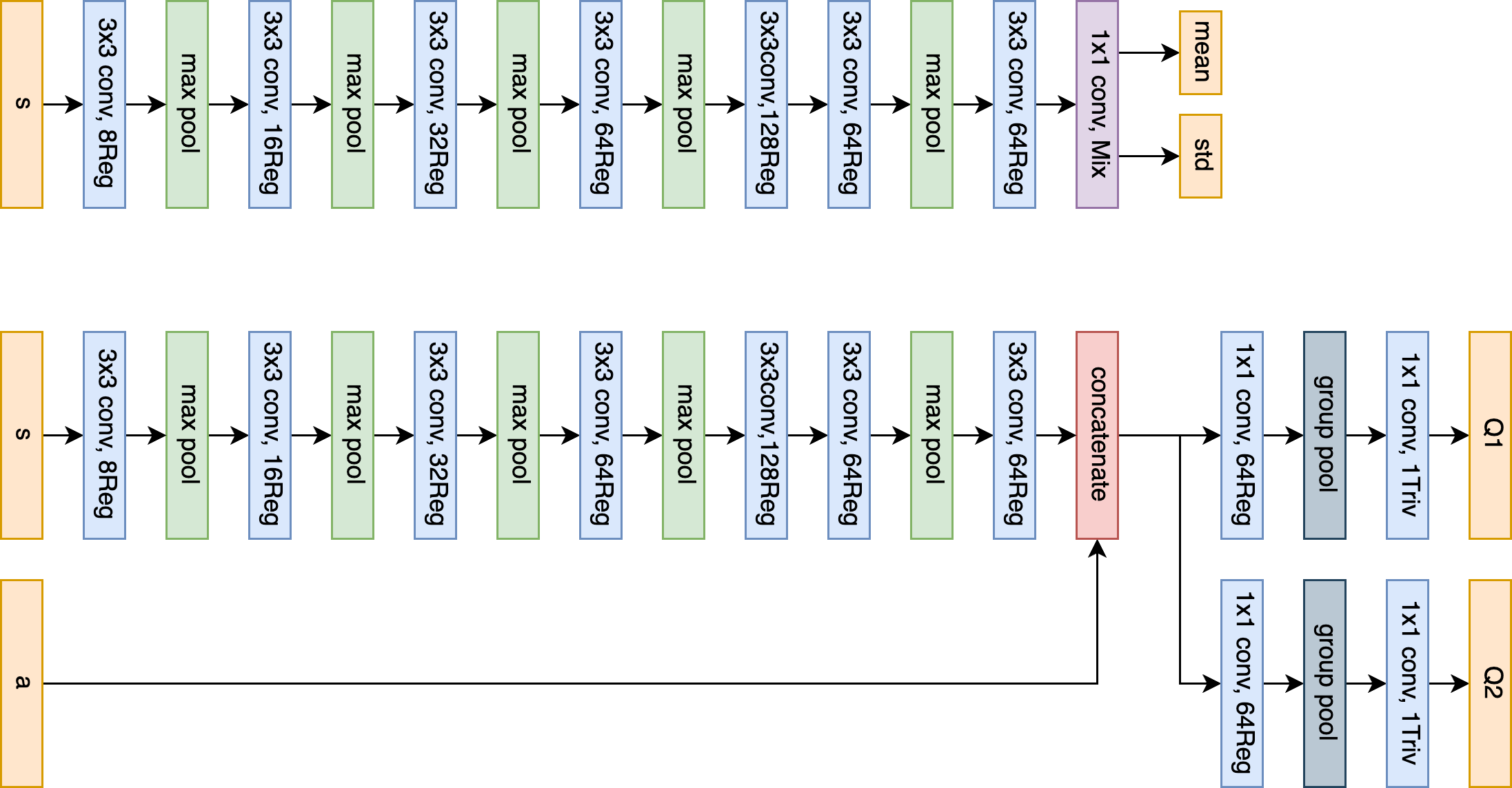}
\caption{Network architecture of Equivariant SAC in robotic manipulation tasks.}
\label{fig:network_equi_sac}
\end{figure}

\subsection{Reinforcement Learning in Robotic Manipulation}
\label{app:archi_rl}

Figure~\ref{fig:network_equi_sac} shows the network architecture of Equivariant SAC used in manipulation tasks in Section~\ref{sec:rl_manip_main}. All hidden layers are implemented using the regular representation. For the actor (top), the output is a mixed representation containing one standard representation for the $(x, y)$ actions, one signed representation for the $\theta$ action, and seven trivial representations for the $(z, \lambda)$ actions and the standard deviations of all action components. Figure~\ref{fig:network_cnn_sac_conv} shows the network architecture of CNN SAC for both RAD and DrQ. Figure~\ref{fig:network_ferm_total2} shows the network architecture of FERM. Figure~\ref{fig:network_sen_fc2} shows the network architecture of SEN.

Table~\ref{tab:n_parameter_rl} shows the number of trainable parameters for each model. All baselines have slightly more parameters compared with Equivariant SAC.


\begin{figure}[t]
\centering
\includegraphics[width=0.8\textwidth]{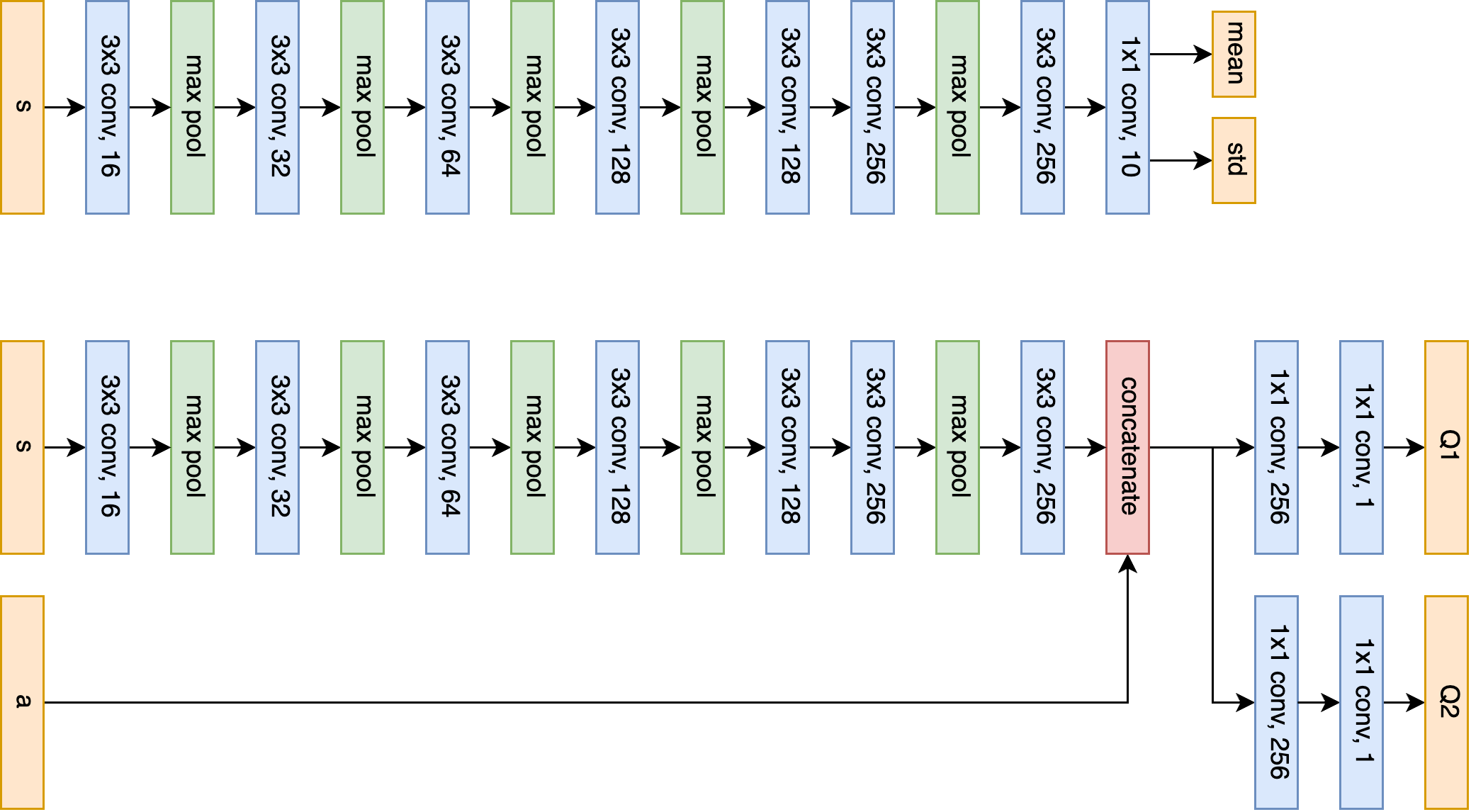}
\caption{Network architecture of CNN SAC in robotic manipulation tasks.}
\label{fig:network_cnn_sac_conv}
\end{figure}

\begin{figure}[t]
\centering
\includegraphics[width=0.8\textwidth]{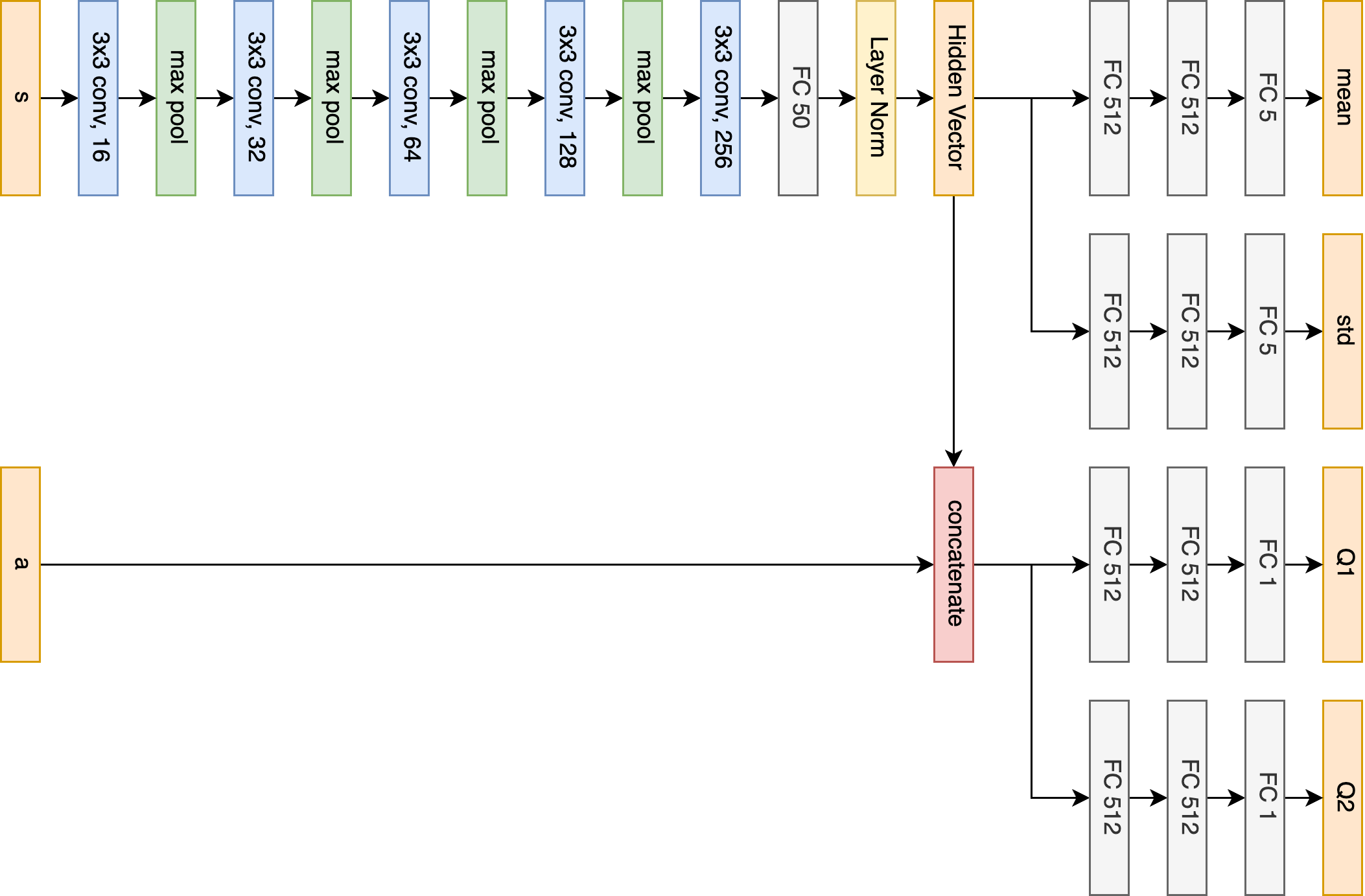}
\caption{Network architecture of FERM in robotic manipulation tasks.}
\label{fig:network_ferm_total2}
\end{figure}

\begin{figure}[t]
\centering
\includegraphics[width=0.7\textwidth]{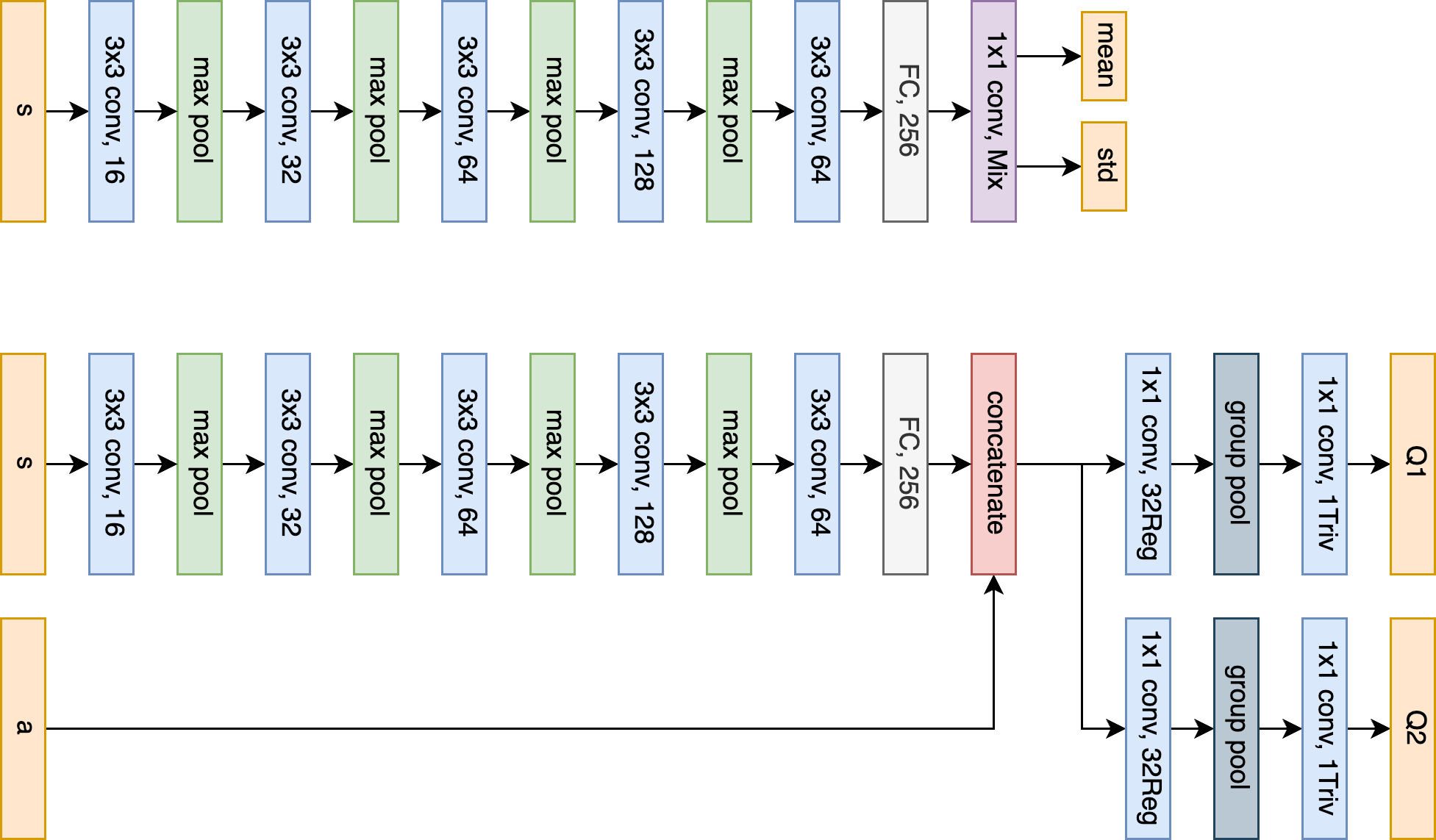}
\caption{Network architecture of SEN in robotic manipulation tasks.}
\label{fig:network_sen_fc2}
\end{figure}


\begin{table}[t]
\centering
\caption{Number of trainable parameters of Equivariant SAC, CNN SAC, FERM, and SEN in the reinforcement learning task in robotic manipulation. Notice that FERM has a shared encoder between the actor and the critic so the total number of parameters is smaller than the sum of the actor parameter and the critic parameter.}
\begin{tabular}{ccccc}
\toprule
Network & Equi SAC & CNN SAC & FERM & SEN\\\midrule
Number of Actor Parameters & 1.11 million & 1.13 million & 1.79 million & 1.22 million\\\midrule
Number of Critic Parameters & 1.18 million & 1.27 million & 1.90 million & 1.24 million\\\midrule
Number of Total Parameters & 2.29 million & 2.40 million & 2.34 million & 2.46 million\\\bottomrule
\end{tabular}
\label{tab:n_parameter_rl}
\end{table}

\section{Training Details}
\subsection{Supervised Learning}
\label{app:train_detail_sl}
We implement the environment in the PyBullet simulator~\citep{pybullet}. The ducks are located in a workspace with a size of $0.3m\times 0.3m$. The pixel size of the image is $152\times 152$ (and will be cropped to $128\times 128$ during training). We implement the training in PyTorch~\citep{pytorch} using a cross-entropy loss. The output of the model is the score for each $g\in C_8$. We use the Adam optimizer~\citep{adam} with a learning rate of $10^{-4}$. The batch size is 64. In all training, we perform a three-way data split with $N$ training data, 200 holdout validation data, and 200 holdout test data. The training is terminated either when the validation prediction success rate does not improve for 100 epochs or when the maximum epoch (1000) is reached.

\subsection{Reinforcement Learning in Robotic Manipulation}
\label{app:train_detail_rl_manip}
We use the environments provided by the BulletArm benchmark~\citep{bulletarm} implemented in the PyBullet simulator~\citep{pybullet}. The workspace's size is $0.4m\times 0.4m\times 0.24m$. The pixel size of the image observation is $152\times 152$ (and will be cropped to $128\times 128$ during training). The action space is $A_x, A_y, A_z=[-0.05m, 0.05m]$ for the change of $(x, y, z)$ position of the gripper; $A_\theta=[-\frac{\pi}{4}, \frac{\pi}{4}]$ for the change of top-down rotation of the gripper; and $A_\lambda=[0, 1]$ for the open width of the gripper where 0 means fully close and 1 means fully open. All environments have a sparse reward: +1 for reaching the goal and 0 otherwise. During training, we use 5 parallel environments where a training step is performed after all 5 parallel environments perform an action step. The evaluation is performed every 200 training steps. We implement the training in PyTorch~\citep{pytorch}. We use the Adam optimizer~\citep{adam} with a learning rate of $10^{-3}$. The batch size is 128. The entropy temperature for SAC is initialized at $10^{-2}$. The target entropy is $-5$. The discount factor $\gamma=0.99$. The Prioritized Experience Replay (PER)~\citep{per} has a capacity of 100,000 transitions with prioritized replay exponent of $\alpha=0.6$ and prioritized importance sampling exponent $\beta_0=0.4$ as in~\cite{per}. The expert transitions are given a priority bonus of $\epsilon_d=1$.

The contrastive encoder of the FERM baseline has an encoding size of 50 as in~\cite{ferm}. The FERM baseline's contrastive encoder is pre-trained for 1.6k steps using the expert data as in~\cite{ferm}. In DrQ, the number of augmentations for calculating the target $K$ and the number of augmentations for calculating the loss $M$ are both 2 as in~\cite{drq}.

\subsection{Reinforcement Learning in DeepMind Control Suite}
\label{app:exp_dmc}
Sample images of each environment are shown in Figure~\ref{fig:exp_dmc_envs}. Environment observations are 3 consecutive frames of RGB images of size $85 \times 85$, in order to infer velocity and acceleration. Note that we use odd-sized image sizes instead of $84 \times 84$ used in \cite{drqv2}, as the DrQv2 architecture contains a convolutional layer with stride $2$ and this breaks equivariance for even-sized spatial inputs \citep{mohamed2020data}. For each environment, an episode lasts $1000$ steps where each step has a reward between $0$ and $1$.

\begin{figure}[H]
\centering
\subfloat[Cartpole Balance]{\includegraphics[width=0.23\textwidth, trim=0 0 0 0, clip]{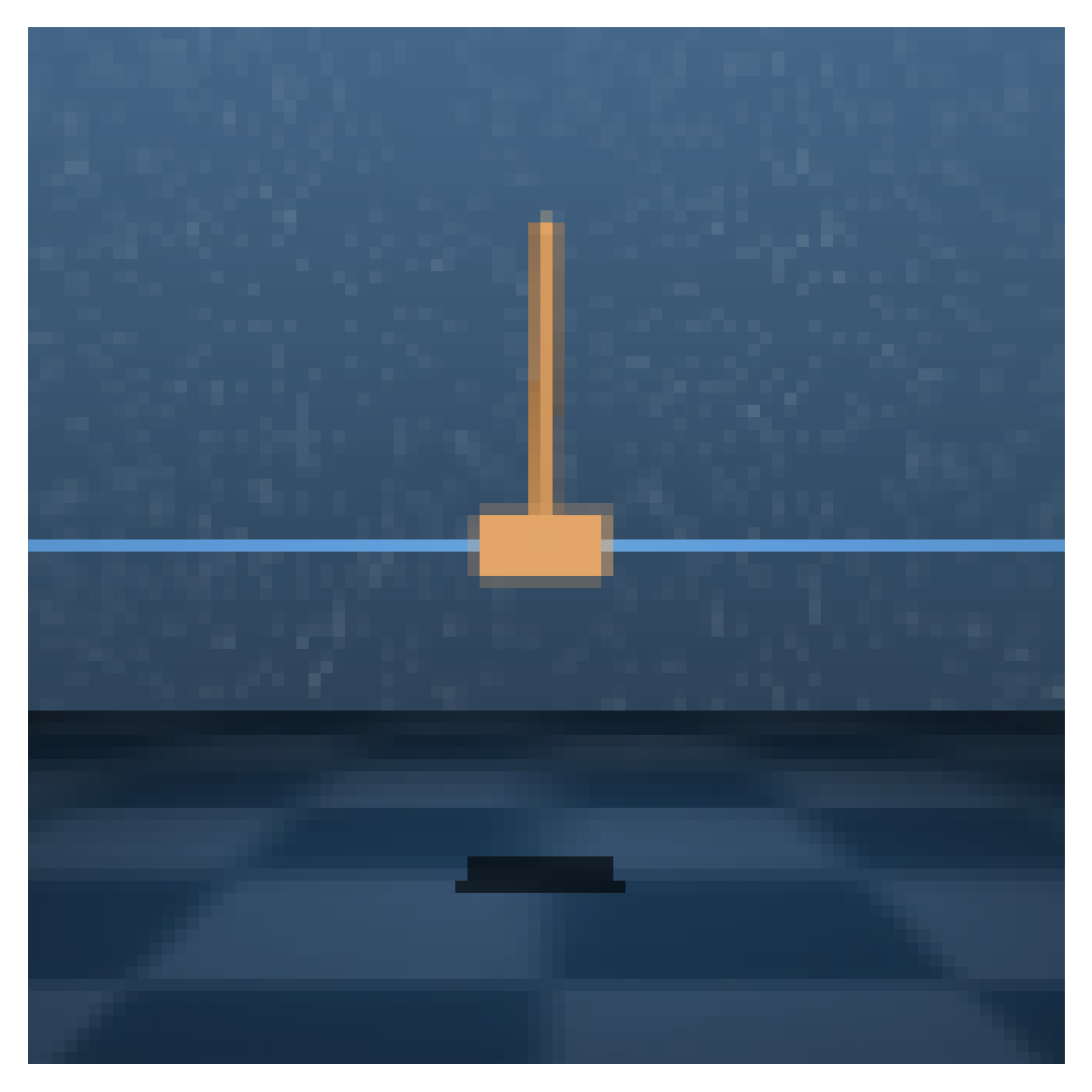}}
\subfloat[Cartpole Swingup]{\includegraphics[width=0.23\textwidth, trim=0 0 0 0, clip]{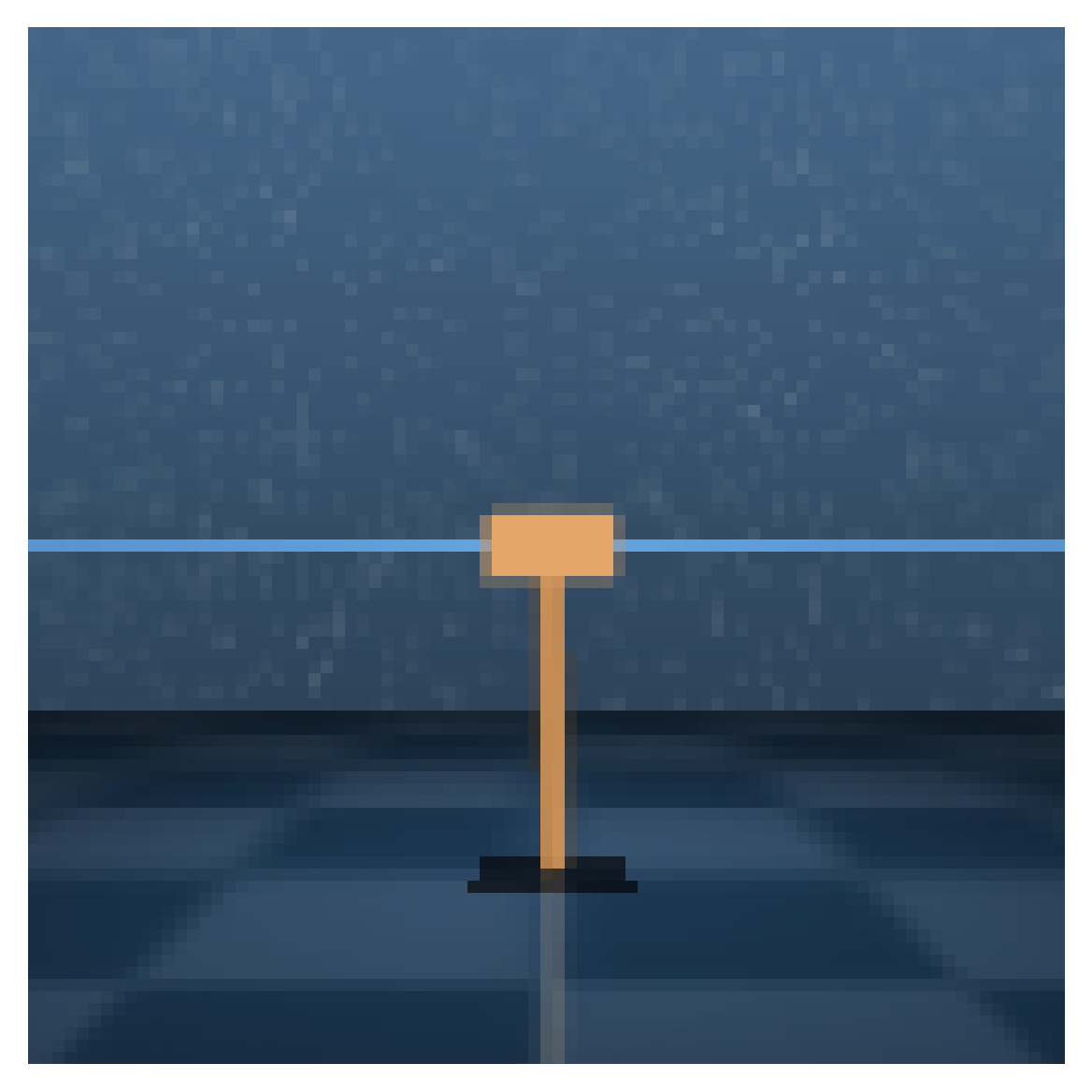}}
\subfloat[Pendulum Swingup]{\includegraphics[width=0.23\textwidth, trim=0 0 0 0, clip]{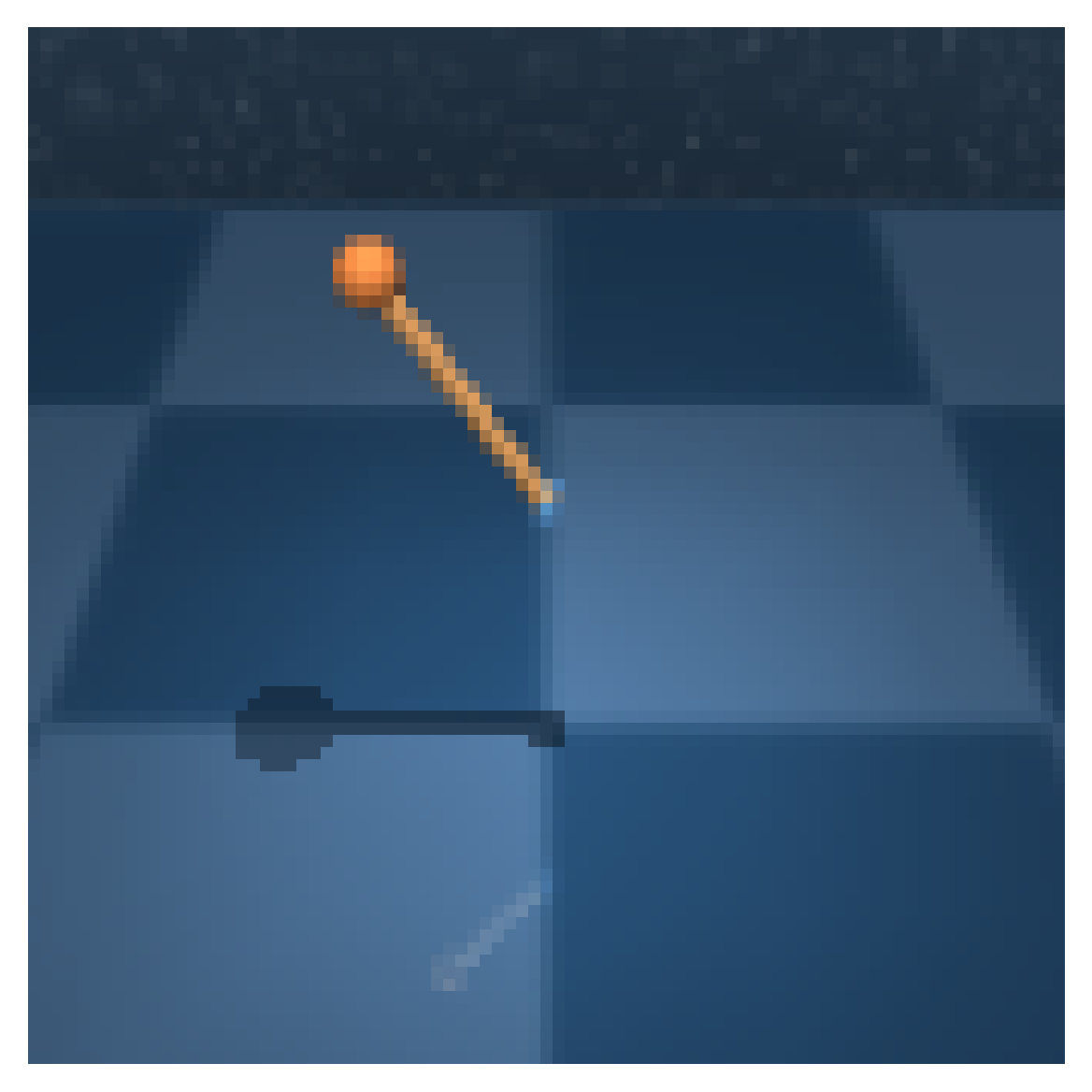}}
\subfloat[Cup Catch]{\includegraphics[width=0.23\textwidth, trim=0 0 0 0, clip]{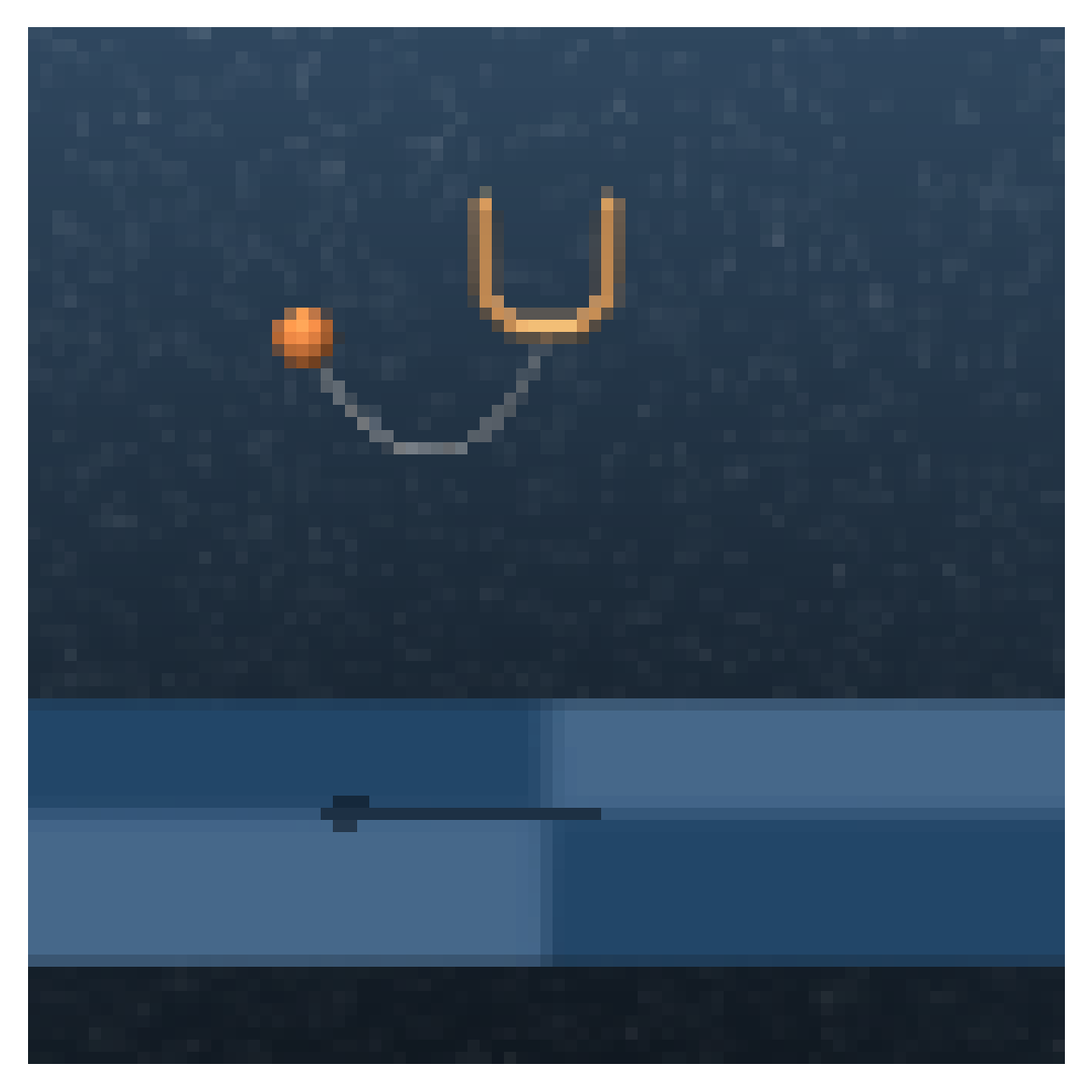}} \\
\subfloat[Acrobot Swingup]{\includegraphics[width=0.23\textwidth, trim=0 0 0 0, clip]{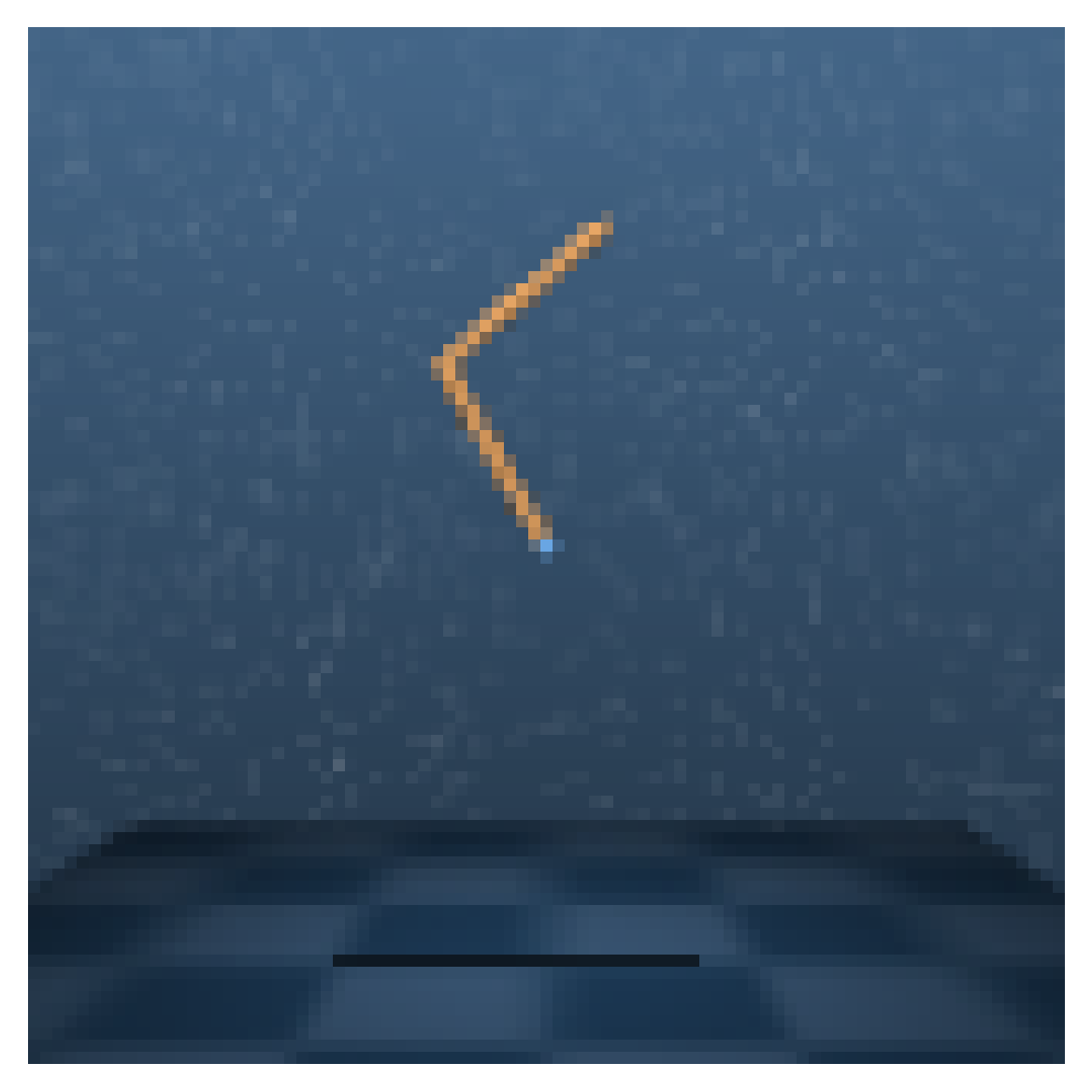}}
\subfloat[\footnotesize{Reacher easy}]{\includegraphics[width=0.23\textwidth, trim=0 0 0 0, clip]{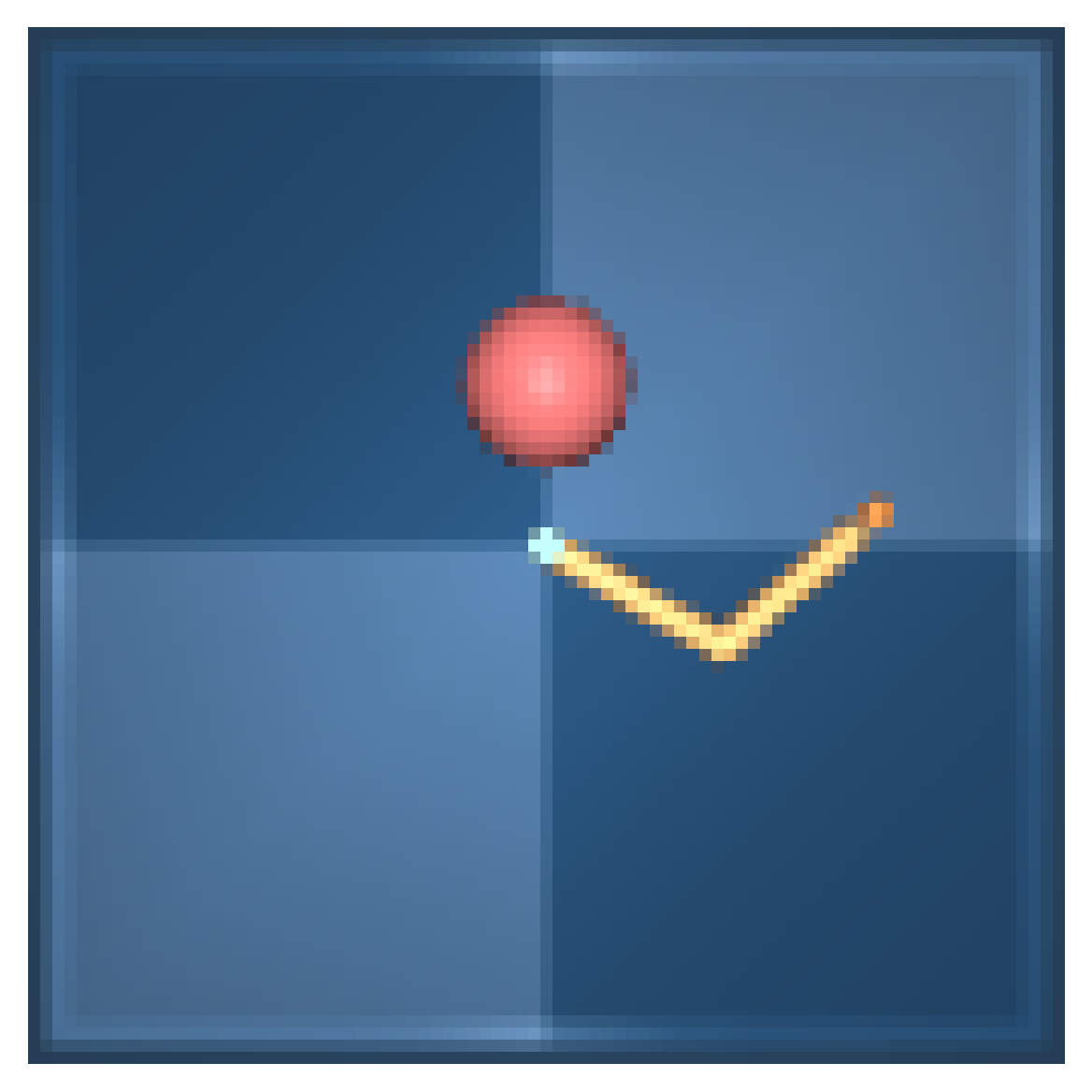}}
\subfloat[Reacher hard]{\includegraphics[width=0.23\textwidth, trim=0 0 0 0, clip]{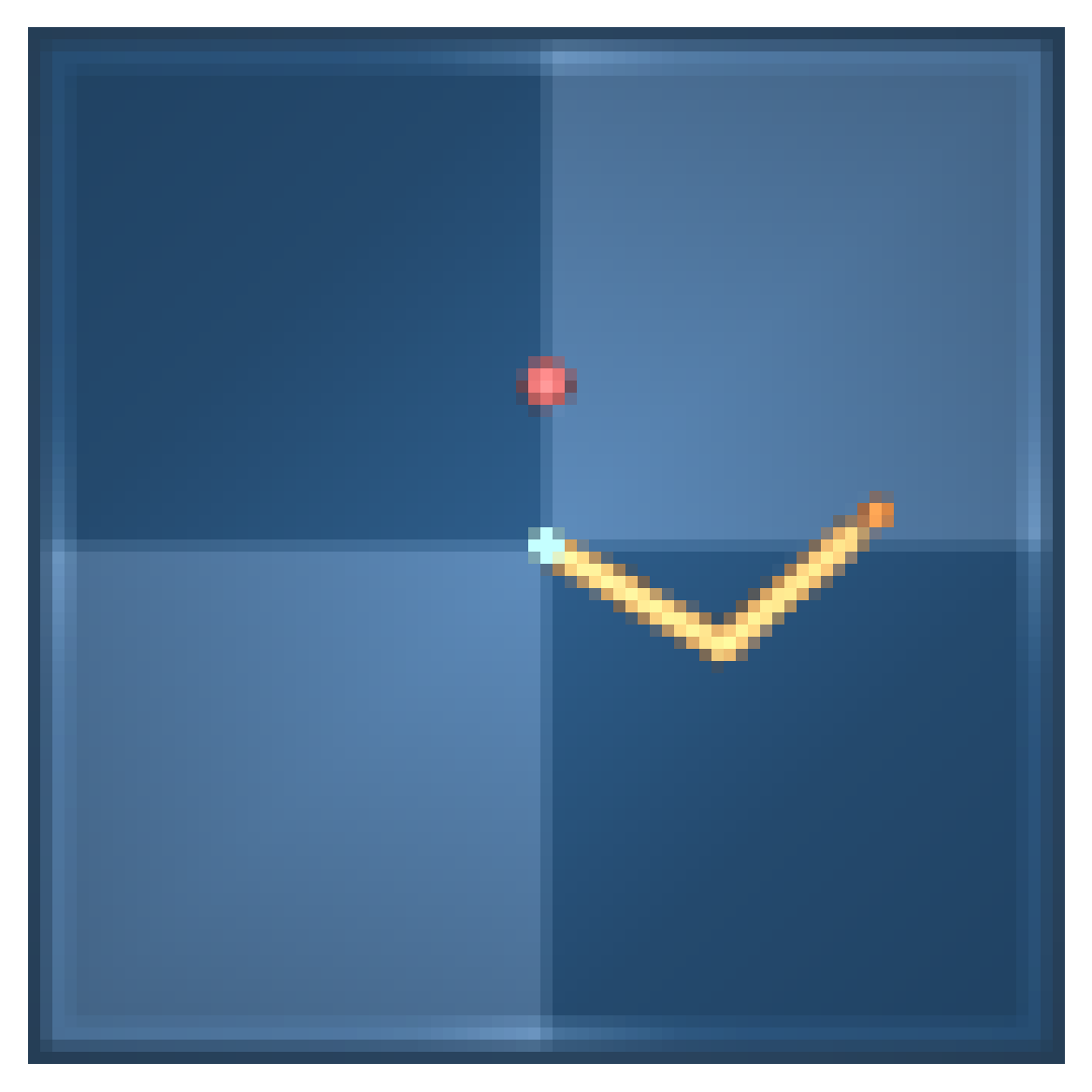}}
\caption{DeepMind Control Suite: images of \textit{easy} (top) and \textit{medium} (bottom) tasks.}
\label{fig:exp_dmc_envs}
\end{figure}

We modify the original DrQv2 by making the encoder map down to a smaller spatial output, leading to faster training. The second and third convolutional blocks have an added max-pooling layer, leading to a spatial output size of $7 \times 7$. As the equivariant version of DrQv2 has an additional convolutional layer after the action restriction, the non-equivariant version also has an additional convolutional layer at the end of the encoder. We also scale the number of channels by $\sqrt{|G|}$ in order to preserve roughly the same number of parameters as the non-equivariant version.

The policy is evaluated by averaging the return of $10$ episodes every $10000$ environment steps. In all DMC experiments, we plot the mean and the standard error over $4$ seeds. All other training details and hyperparameters are kept the same as in \cite{drqv2}.

\section{Additional Experiments}
\subsection{Supervised Learning with More Symmetry Corruption Types}
\label{app:more_corruption}
In this section, we demonstrate the experiment in Section~\ref{sec:sl} in more symmetry corruptions. Figure~\ref{fig:duck_corrupt_all} shows the 15 different corruptions. We also show the performance of `Equi', `CNN', and `CNN + Img Trans' without the random crop augmentation used in Section Section~\ref{sec:sl} (labeled as `no Crop' variations). The result is shown in Figure~\ref{fig:exp_duck_all}. First, comparing blue vs green, and purple vs orange, the equivariant network always outperforms the CNN with or without random crop augmentation, especially with fewer data. Second, comparing blue vs purple, and green vs orange, random crop generally helps both the equivariant network and the CNN network. Third, comparing red vs green, and cyan vs orange, adding the image transformation augmentation improves the performance of CNN. Notice that the condition reverse is an outlier because the equivariant network has incorrect equivariance, where the CNN methods (green and orange) without image transformation augmentation have the best performance. 

\begin{figure}[t]
\centering
\includegraphics[width=\linewidth]{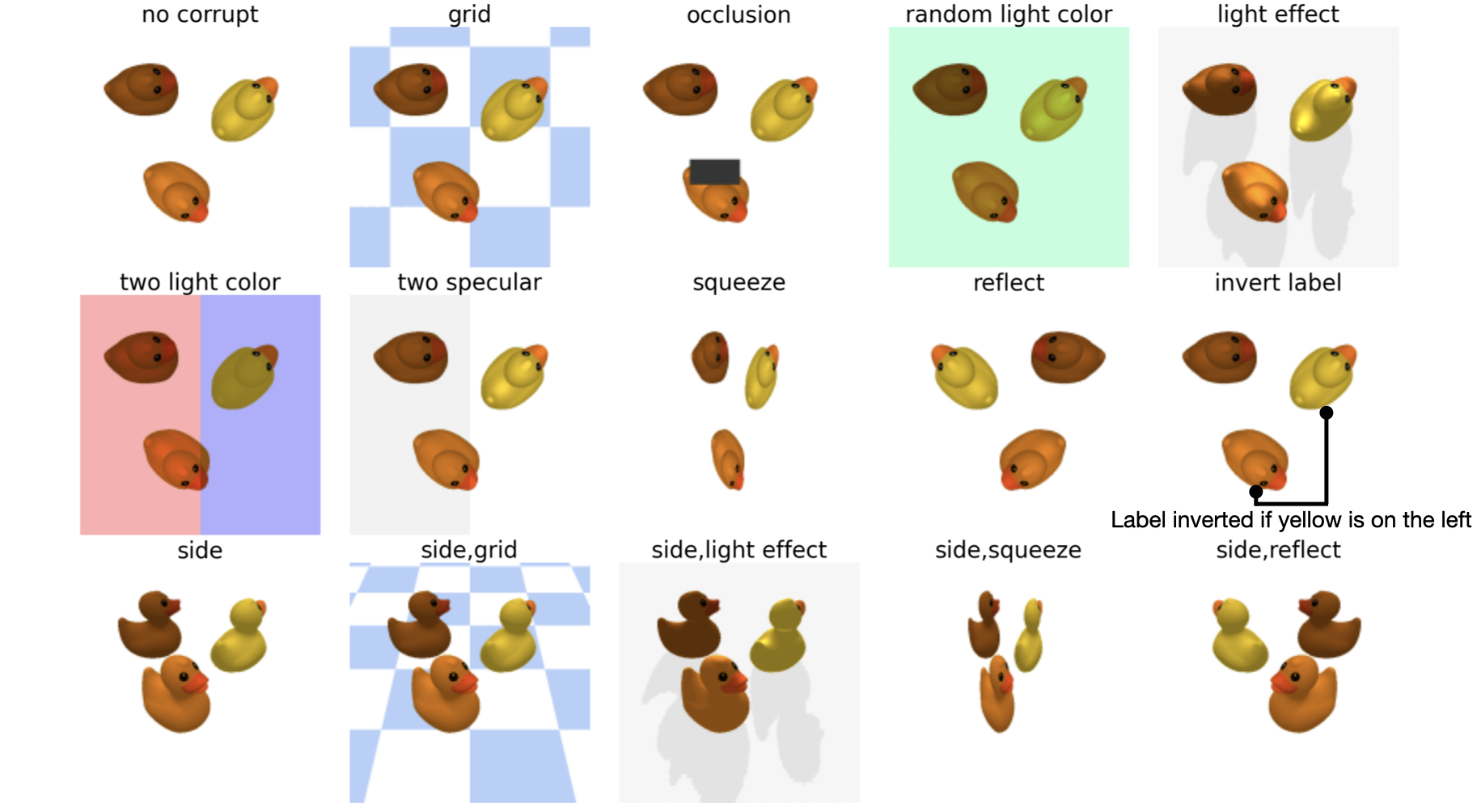}
\caption{All symmetry corruptions in the rotation estimation experiment.}
\label{fig:duck_corrupt_all}
\end{figure}

\begin{figure}[t]
\centering
\includegraphics[width=\linewidth]{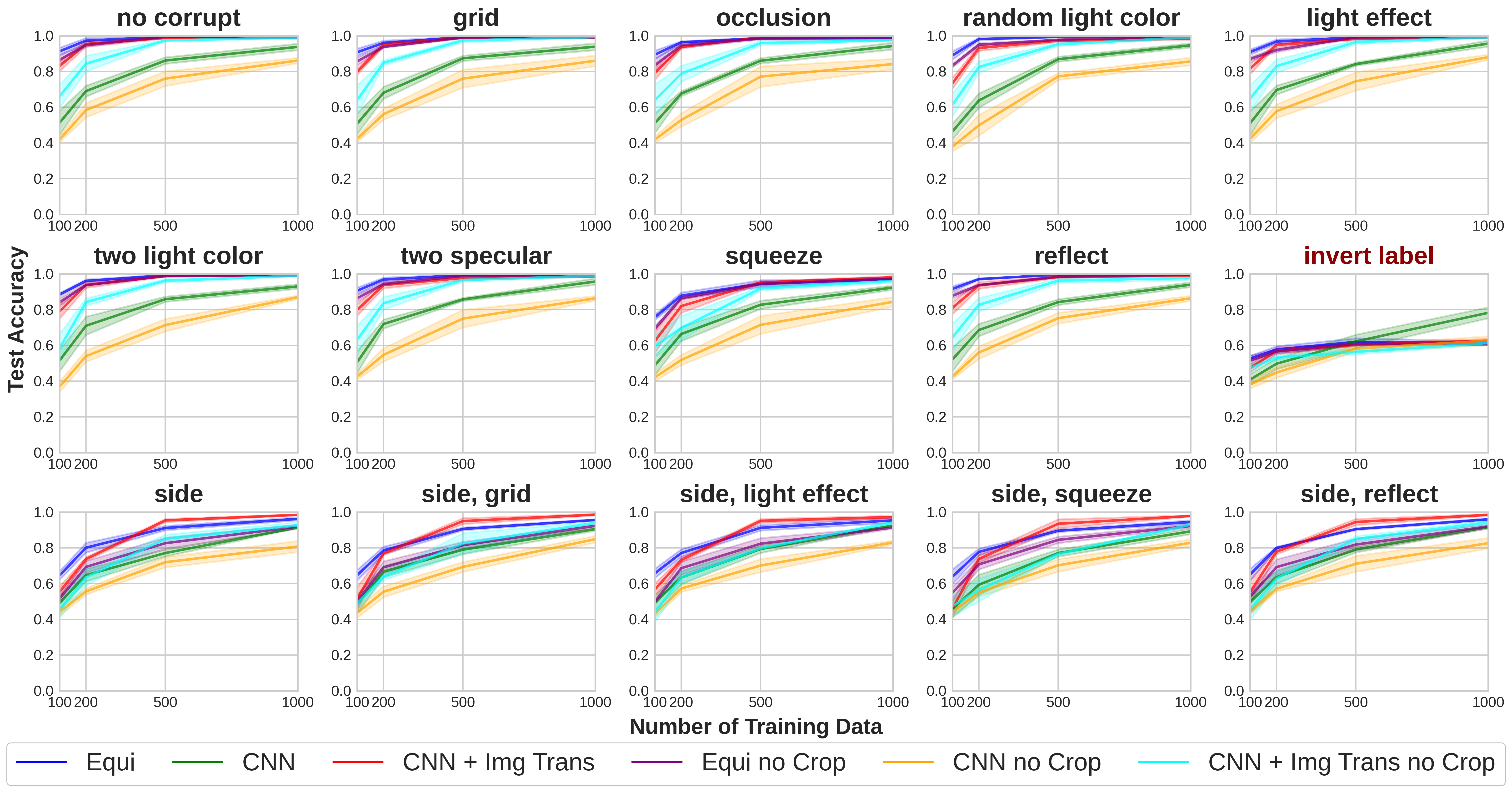}
\caption{Comparison of an equivariant network (blue), a conventional network (green), CNN equipped with image transformation augmentation (red), and their variation without random crop augmentation (purple, orange, cyan). The plots show the prediction accuracy in the test of the model trained with different number of training data. Results are averaged over four runs. Shading denotes standard error.}
\label{fig:exp_duck_all}
\end{figure}

\subsection{RL in Manipulation without Random Crop}

In this section, we demonstrate the performance of Equivariant SAC and CNN SAC without random crop augmentation using RAD. As is shown in Figure~\ref{fig:exp_rl_no_crop}, both methods work poorly without the random crop augmentation.

\begin{figure}[t]
\centering
\includegraphics[width=\linewidth]{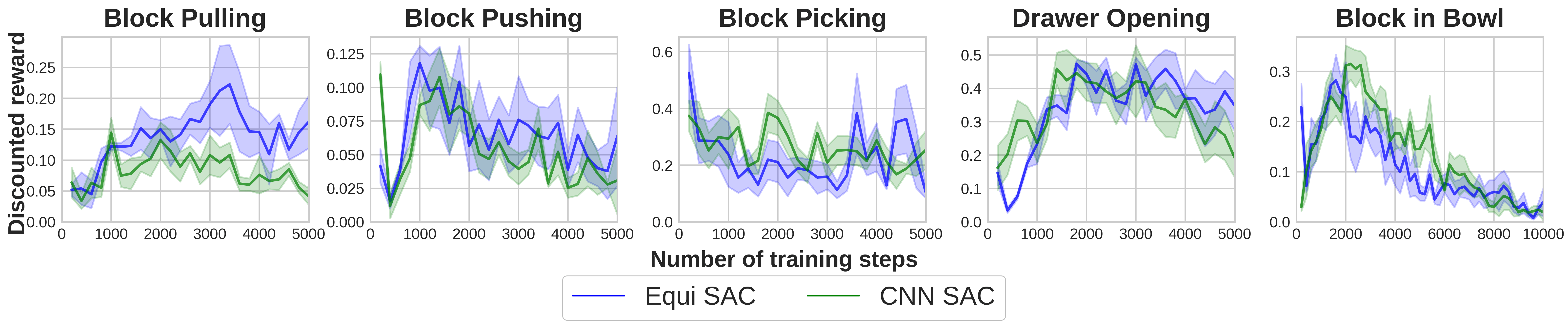}
\caption{Comparison between Equivariant SAC and CNN SAC without data augmentation using RAD. The plots show the performance (in terms of discounted reward) of the evaluation policy. The evaluation is performed every 200 training steps. Results are averaged over four runs. Shading denotes standard error.}
\label{fig:exp_rl_no_crop}
\end{figure}

\subsection{RL in Manipulation with Occlusion Corruption}
\label{app:rl_occlusion}
\begin{figure}[t]
\vspace{-0.7cm}
\centering
\subfloat{\includegraphics[width=0.2\linewidth]{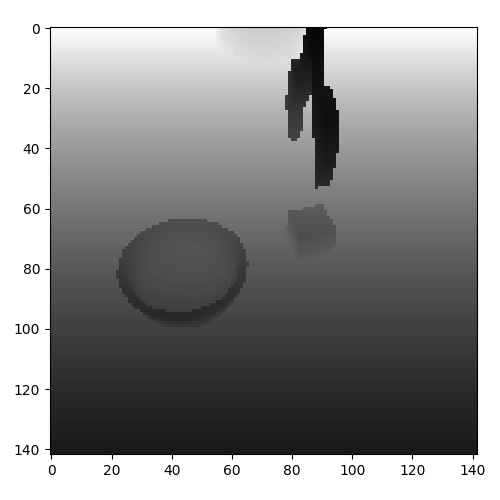}}
\subfloat{\includegraphics[width=0.2\linewidth]{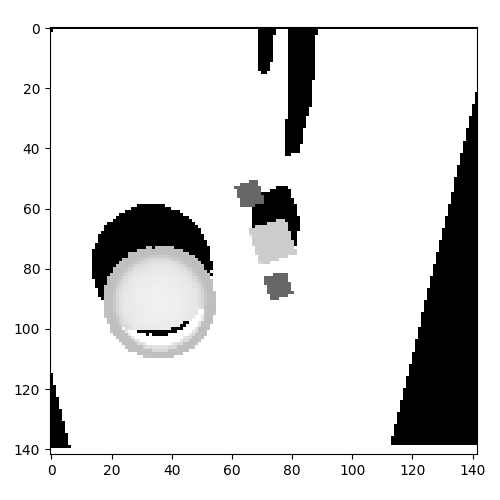}}
\caption{Left: the depth image taken from a depth camera. Right: the orthographic projection centered at the gripper position generated from the left image, where the black areas are missing depth values due to occlusions.}
\label{fig:proj_obs}
\end{figure}

In this section, we perform the same experiment as in Section~\ref{sec:rl_manip_main} with a different type of symmetry corruption: occlusion due to orthographic projection using a single camera. Instead of using an RGBD image observation as in Section~\ref{sec:rl_manip_main}, we take the depth channel from the RGBD image and perform an orthographic projection at the gripper's position (Figure~\ref{fig:proj_obs}). This is the same process as in~\citet{corl22} to generate a top-down image for equivariant learning, however, since we only have one camera instead of two as in the prior work, this orthographic projection will have missing depth values due to occlusion and thus leads to an extrinsic equivariant constraint. Figure~\ref{fig:exp_proj} shows the results. Similar as in Section~\ref{sec:rl_manip_main}, Equivariant SAC outperforms all baselines with a significant margin.

\begin{figure}[t]
\centering
\includegraphics[width=\linewidth]{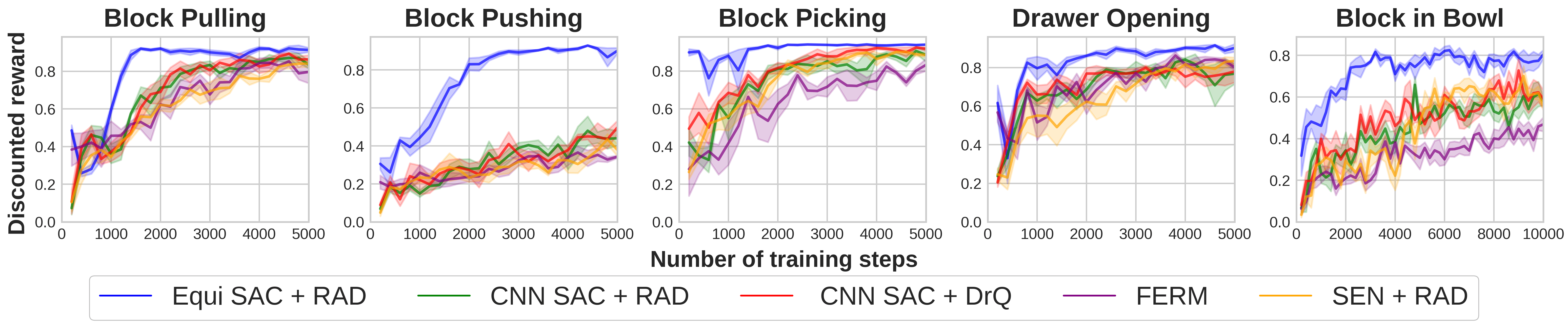}
\caption{Comparison of Equivariant SAC (blue) with baselines in environments with occlusion corruption. The plots show the performance (in terms of discounted reward) of the evaluation policy. The evaluation is performed every 200 training steps. Results are averaged over four runs. Shading denotes standard error.}
\label{fig:exp_proj}
\end{figure}

\subsection{RL in DeepMind Control Suite}
\label{app:exp_dmc_additional}
Figure~\ref{fig:exp_dmc_main_pendulum_swing} is another visualization of equivariant vs non-equivariant DrQv2 on the original pendulum swingup environment. As each method has $1$ failed seed, we plot all runs with slightly different color shades. If we exclude the failed run from each method, it can easily be seen that equivariant DrQv2 learns faster than the non-equivariant version.

\begin{figure}[t]
\centering
\includegraphics[width=0.5\linewidth]{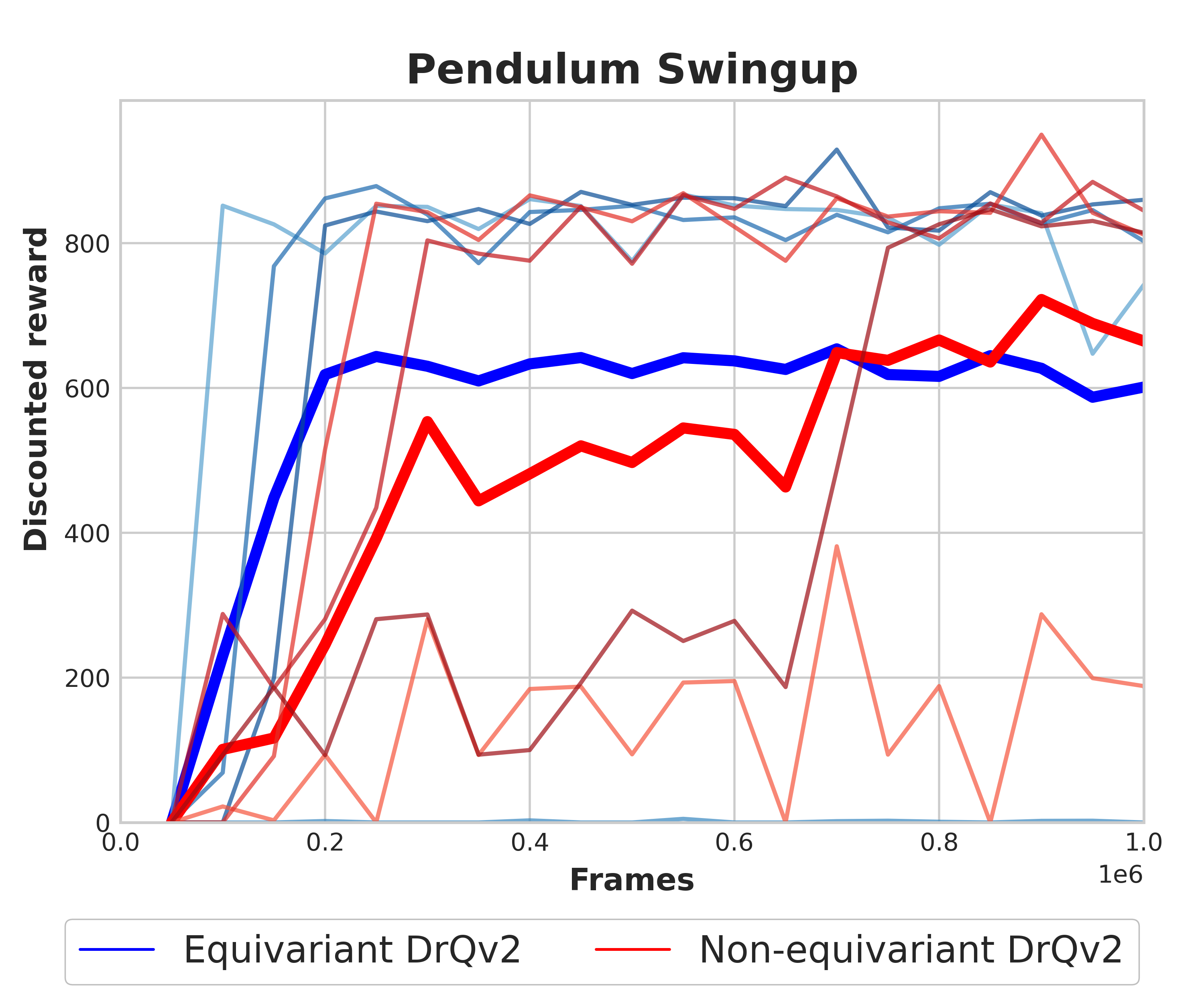}
\caption{All runs of equivariant and non-equivariant DrQv2 on the DMC pendulum swingup task. Each method has $1$ failed seed - the failed equivariant policy (blue) run is consistently near zero reward and the failed non-equivariant policy run (red) is around $200$. Overall, the equivariant DrQv2 learns faster than the non-equivariant version when it succeeds.}
\label{fig:exp_dmc_main_pendulum_swing}
\end{figure}

\subsubsection{Increasing symmetry corruptions}
\label{app:exp_dmc_camera}
In these experiments, we modify some domains to have different levels of symmetry-breaking corruptions. For cartpole and cup catch, we either remove the gridded floor and background (None) to make the observation perfectly equivariant or keep the floor and background and further change the camera angle by rolling ($30^{\circ}-75^{\circ}$), increasing the level of corruption. For reacher, we use the same modifications but tilt the camera instead of rolling. See Figure~\ref{fig:exp_dmc_env_camera} for sample images. In order to see the effects of increasing corruption on learning, we plot the mean discounted reward when both methods have converged ($30$k frames for cartpole and cup catch, $1.5$M frames for reacher). Figure~\ref{fig:exp_dmc_camera} shows that both the equivariant and non-equivariant DrQv2 surprisingly perform quite well across all corruption levels, with the exception of $75^{\circ}$ on reacher. The equivariant policy seems to converge to a slightly higher discounted reward than the non-equivariant version, though the difference is not significant. On reacher, changing the camera angle may have affected both methods by making the task more difficult for both an equivariant and regular CNN encoder.

\begin{table}[t]
\centering
\setlength{\tabcolsep}{0pt}
\begin{tabular}{lcccccc}
Cartpole Swingup &
\subfloat{\includegraphics[width=0.13\textwidth, trim=8 8 8 8, clip]{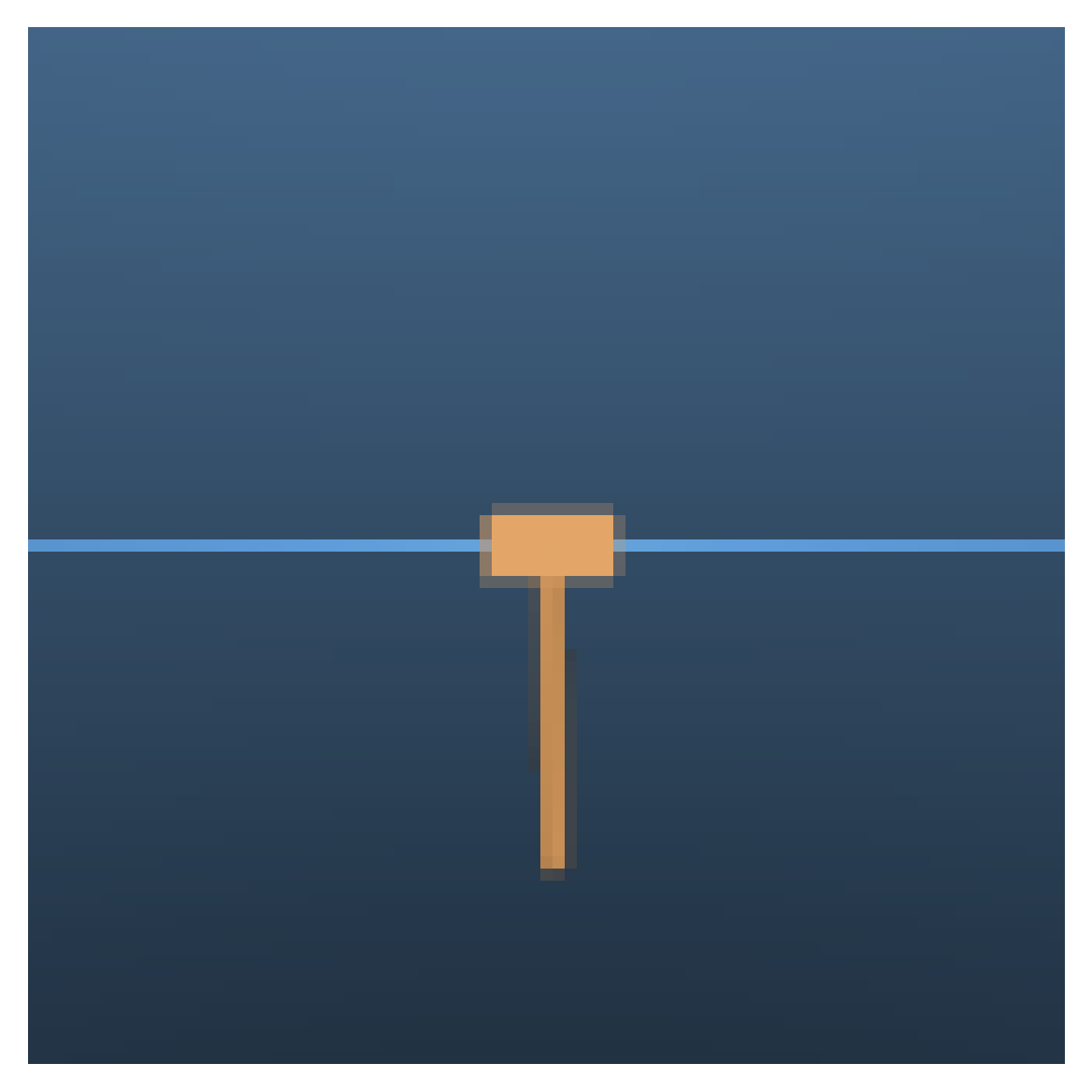}} &
\subfloat{\includegraphics[width=0.13\textwidth, trim=8 8 8 8, clip]{img/env/dmc/cartpole_swingup.png}} &
\subfloat{\includegraphics[width=0.13\textwidth, trim=8 8 8 8, clip]{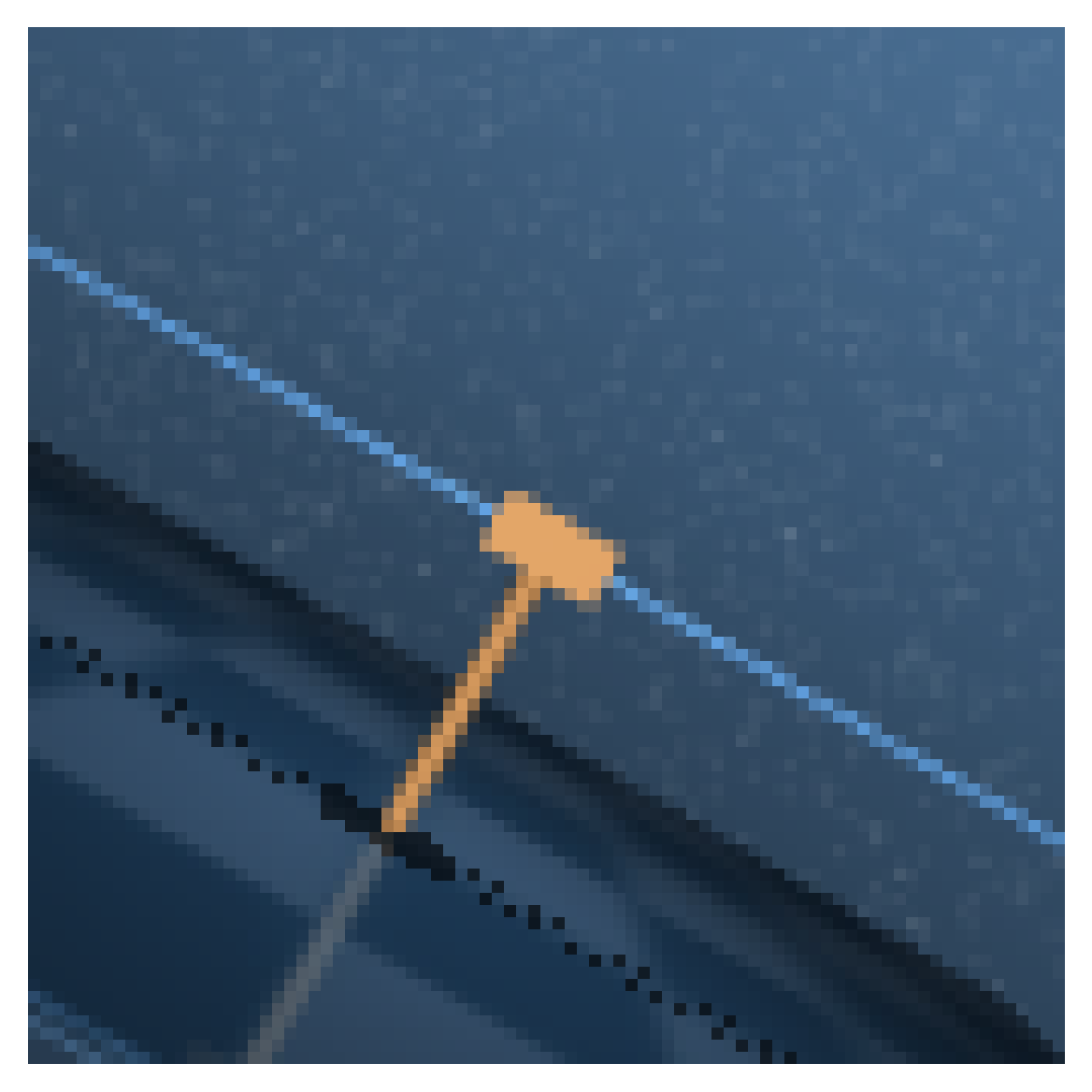}} &
\subfloat{\includegraphics[width=0.13\textwidth, trim=8 8 8 8, clip]{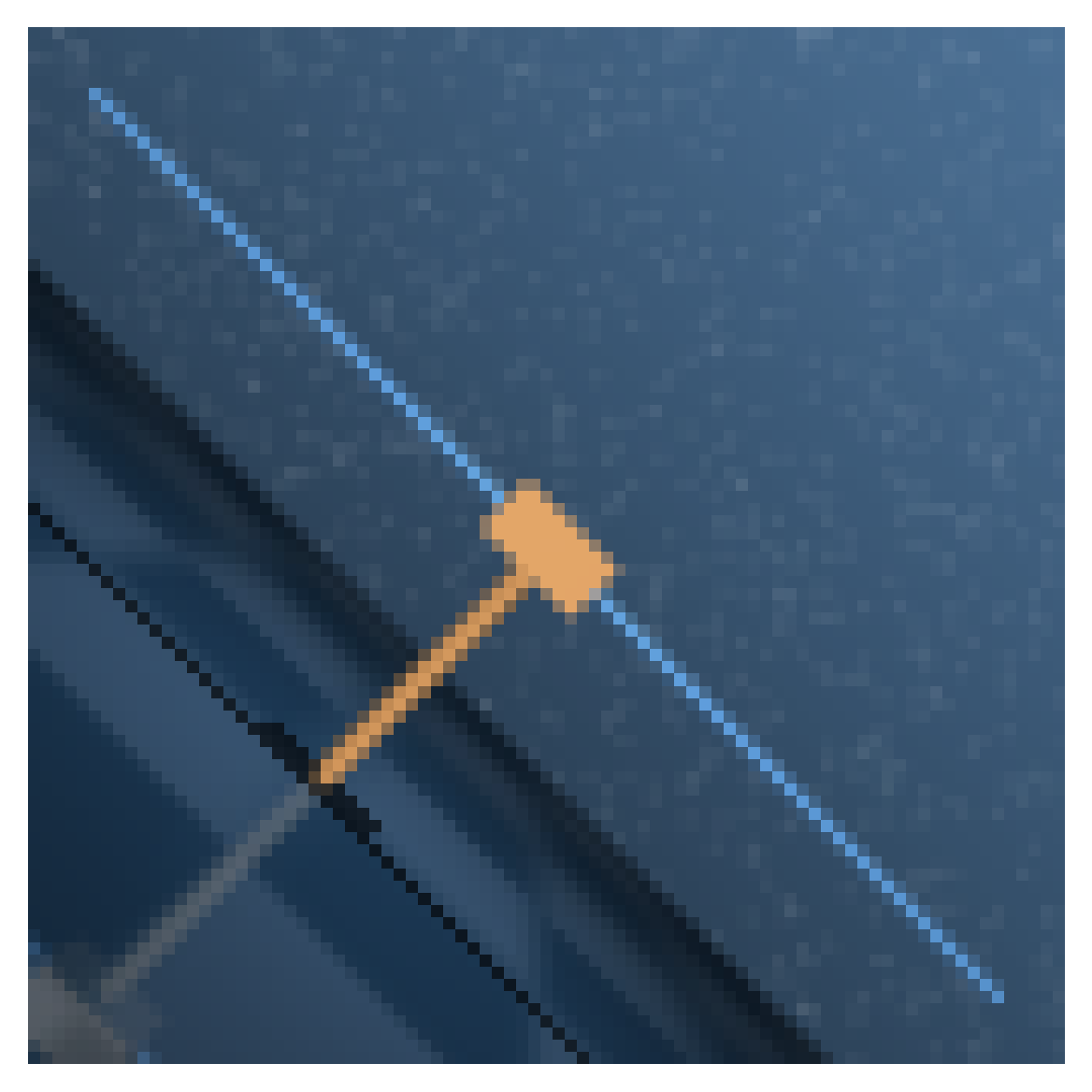}} &
\subfloat{\includegraphics[width=0.13\textwidth, trim=8 8 8 8, clip]{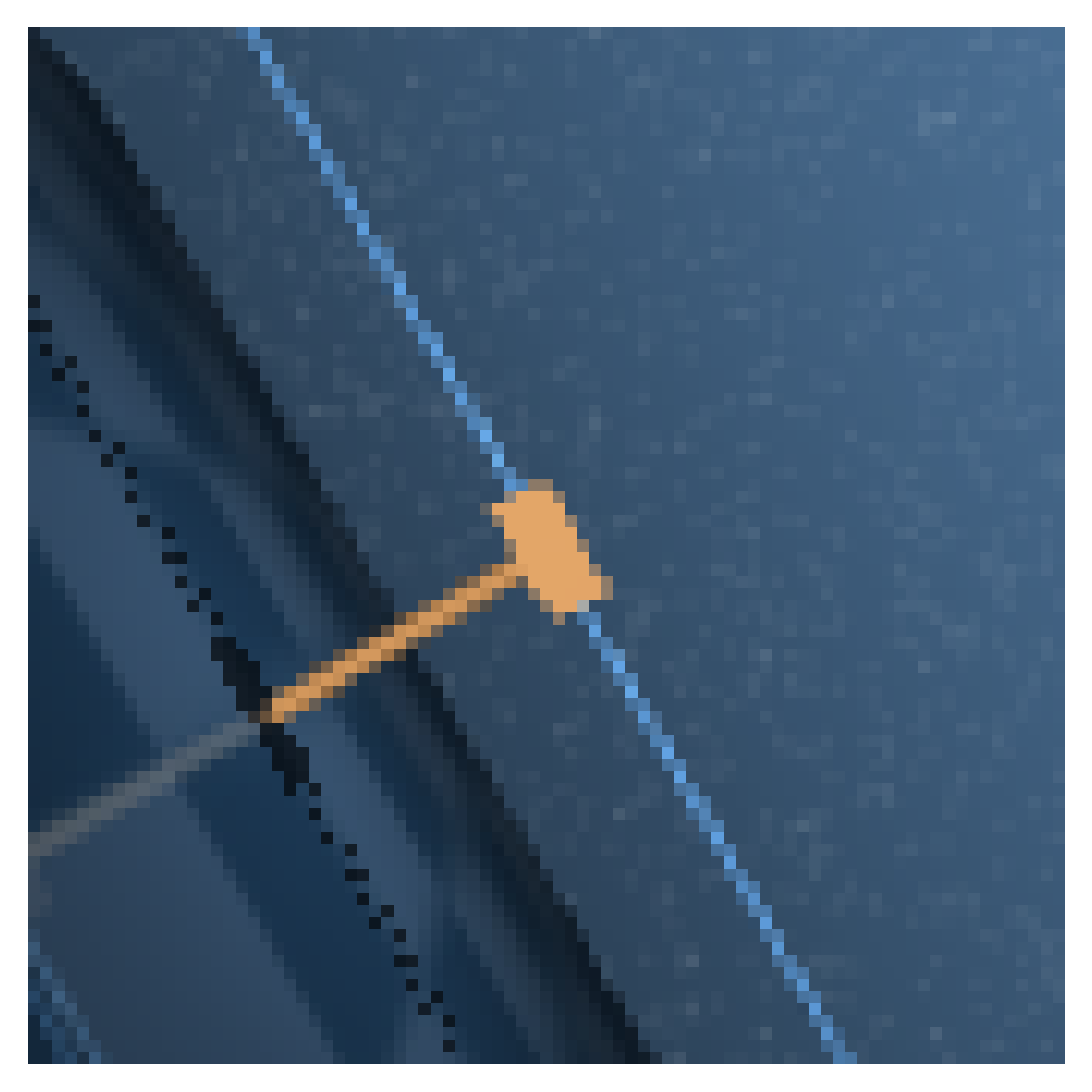}} &
\subfloat{\includegraphics[width=0.13\textwidth, trim=8 8 8 8, clip]{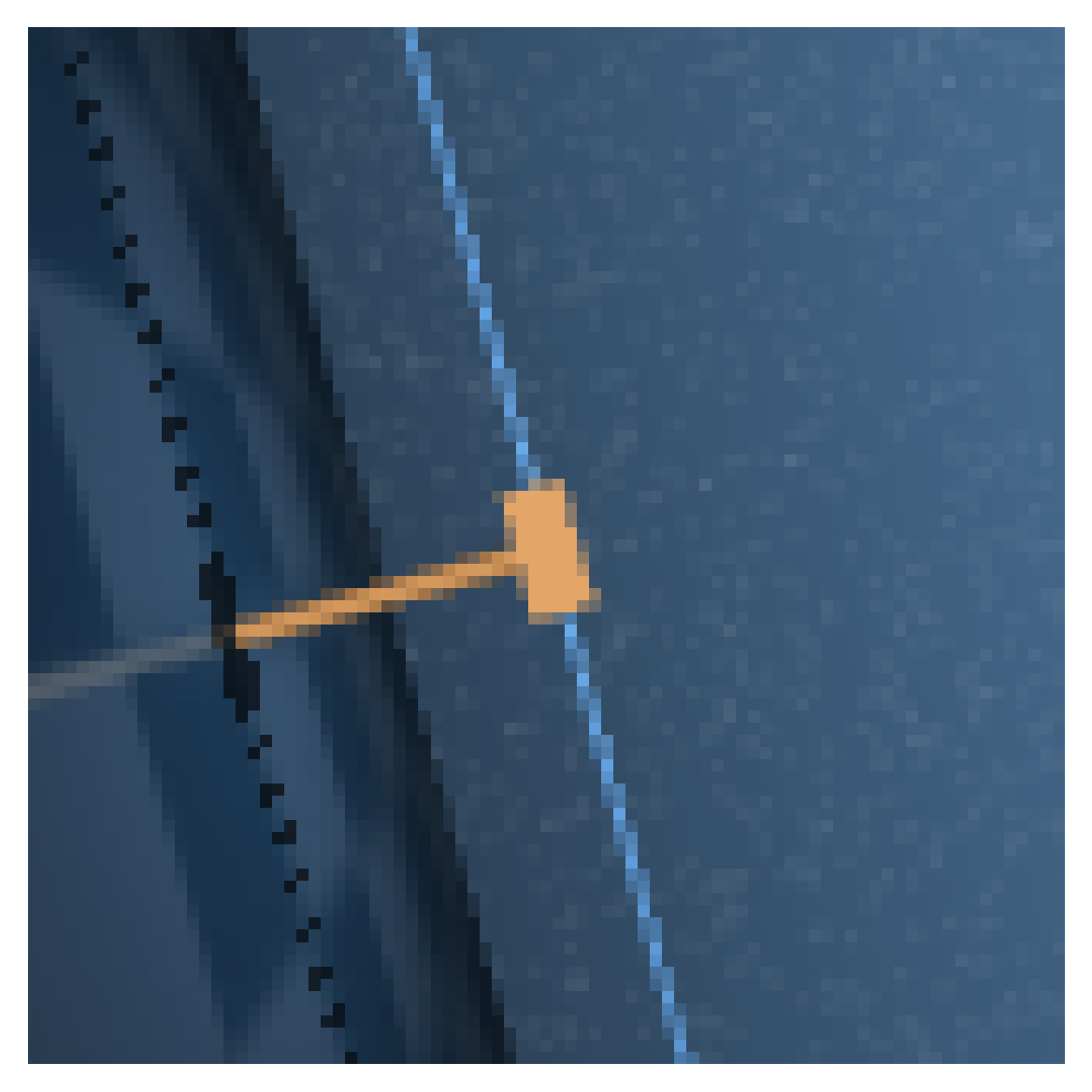}} \\[-4pt]
Cup Catch &
\subfloat{\includegraphics[width=0.13\textwidth, trim=8 8 8 8, clip]{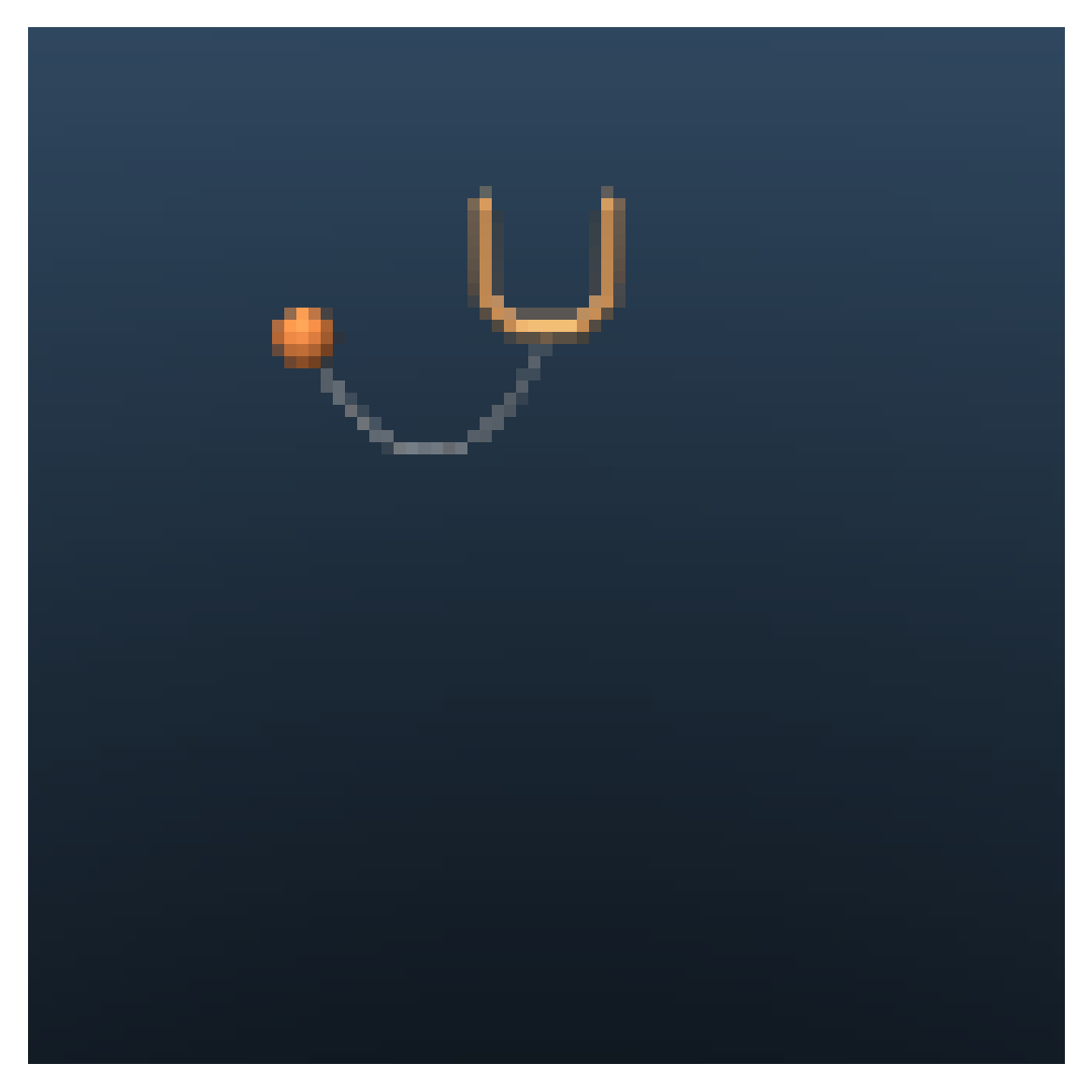}} &
\subfloat{\includegraphics[width=0.13\textwidth, trim=8 8 8 8, clip]{img/env/dmc/cup_catch.png}} &
\subfloat{\includegraphics[width=0.13\textwidth, trim=8 8 8 8, clip]{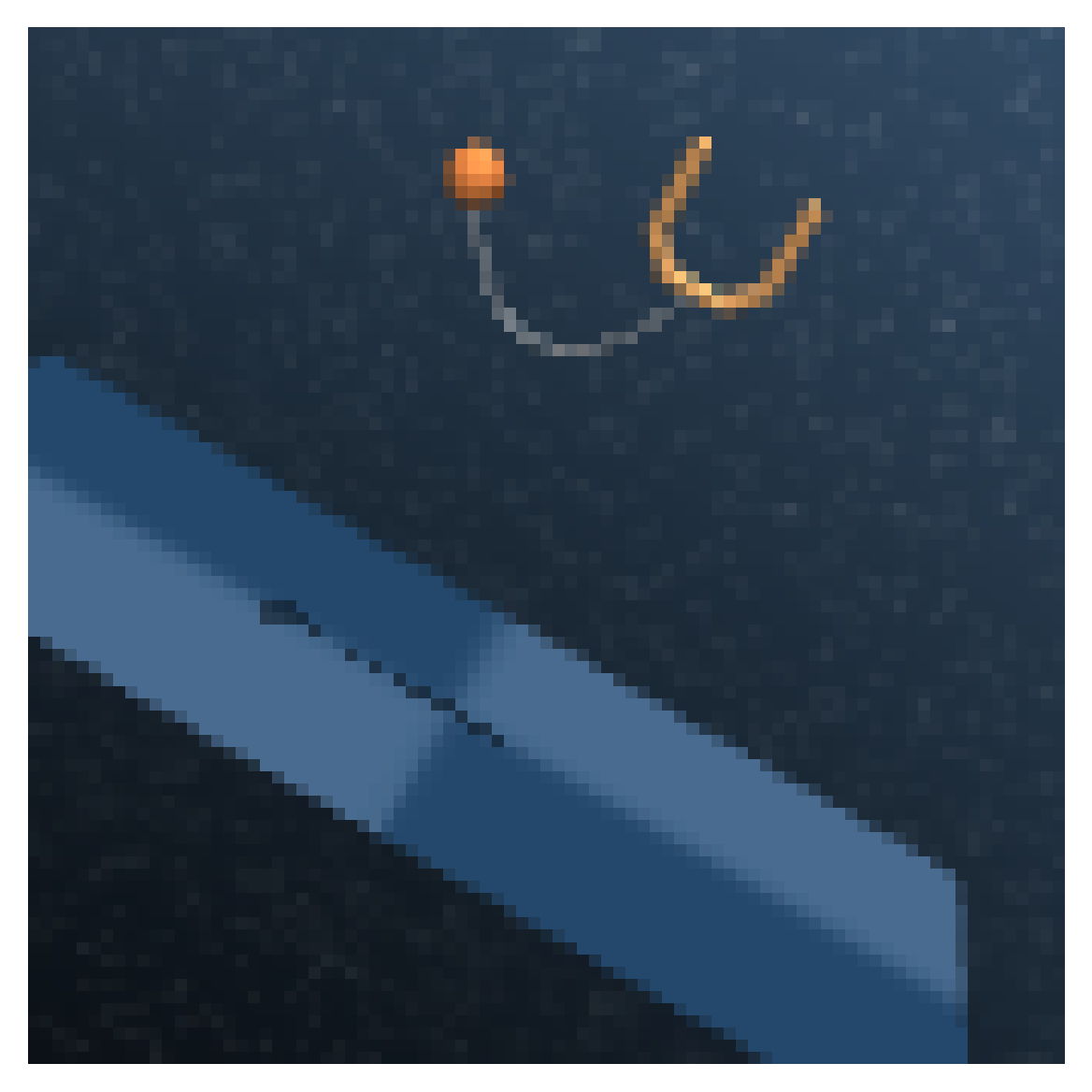}} &
\subfloat{\includegraphics[width=0.13\textwidth, trim=8 8 8 8, clip]{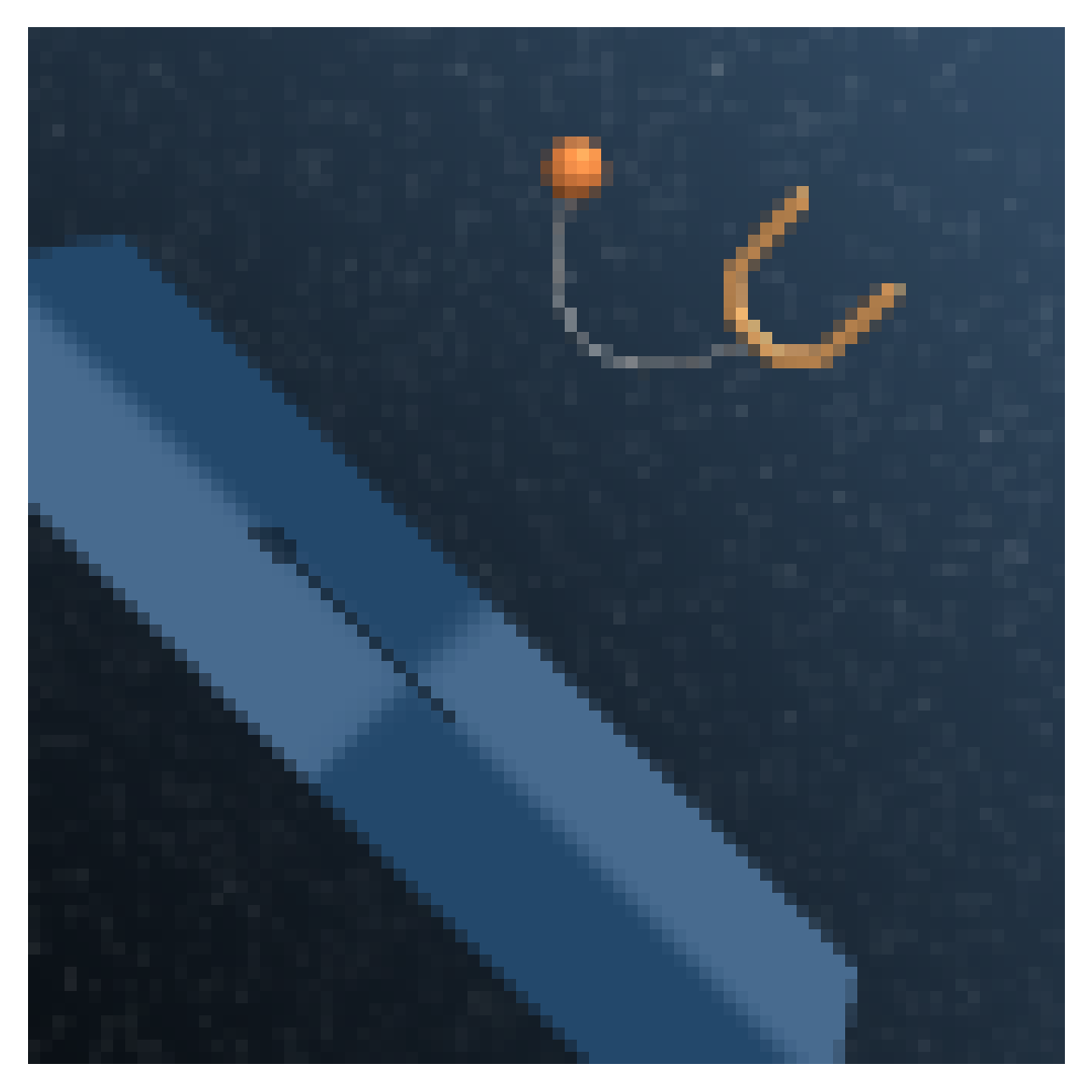}} &
\subfloat{\includegraphics[width=0.13\textwidth, trim=8 8 8 8, clip]{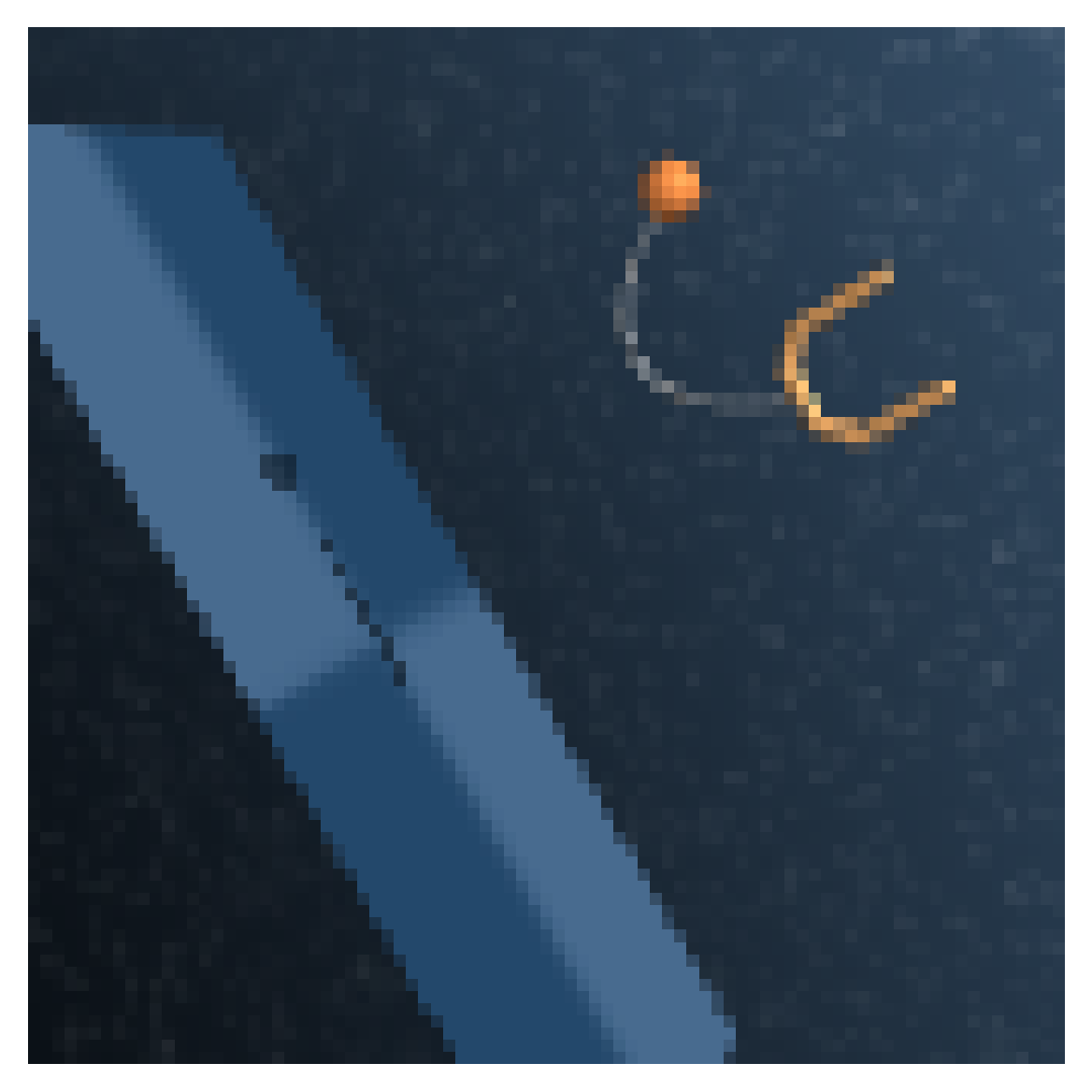}} &
\subfloat{\includegraphics[width=0.13\textwidth, trim=8 8 8 8, clip]{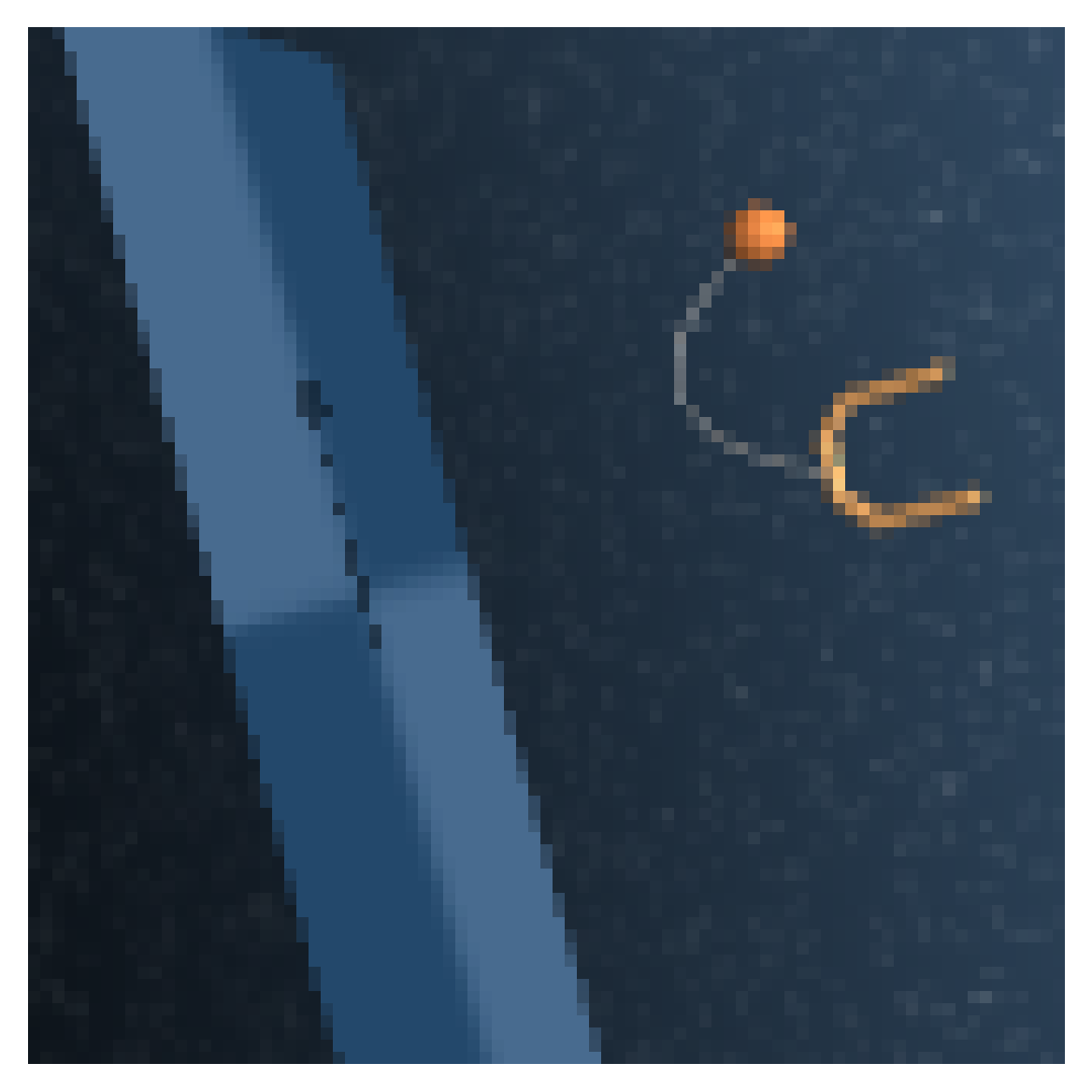}} \\[-4pt]
Reacher hard &
\subfloat{\includegraphics[width=0.13\textwidth, trim=8 8 8 8, clip]{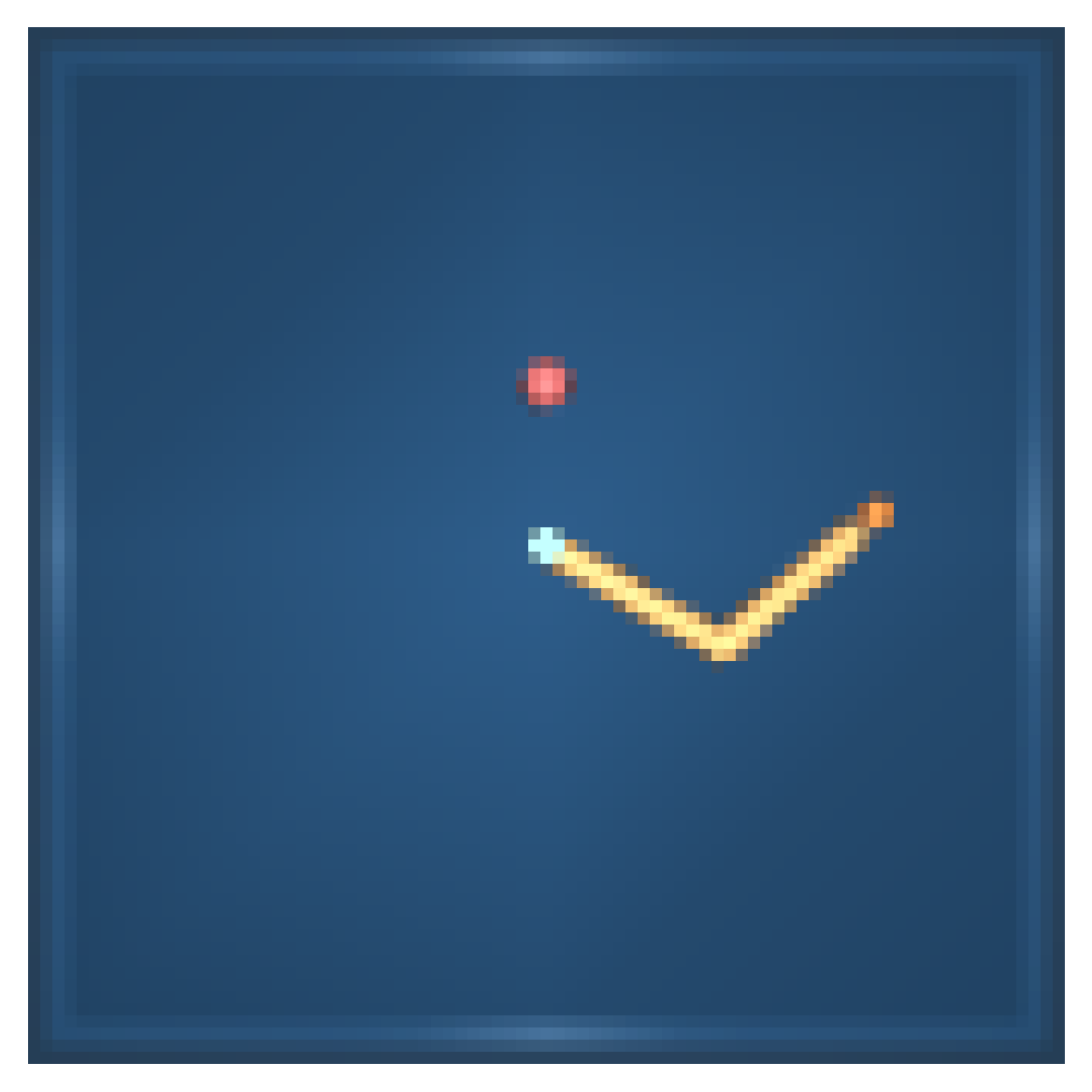}} &
\subfloat{\includegraphics[width=0.13\textwidth, trim=8 8 8 8, clip]{img/env/dmc/reacher_hard.png}} &
\subfloat{\includegraphics[width=0.13\textwidth, trim=8 8 8 8, clip]{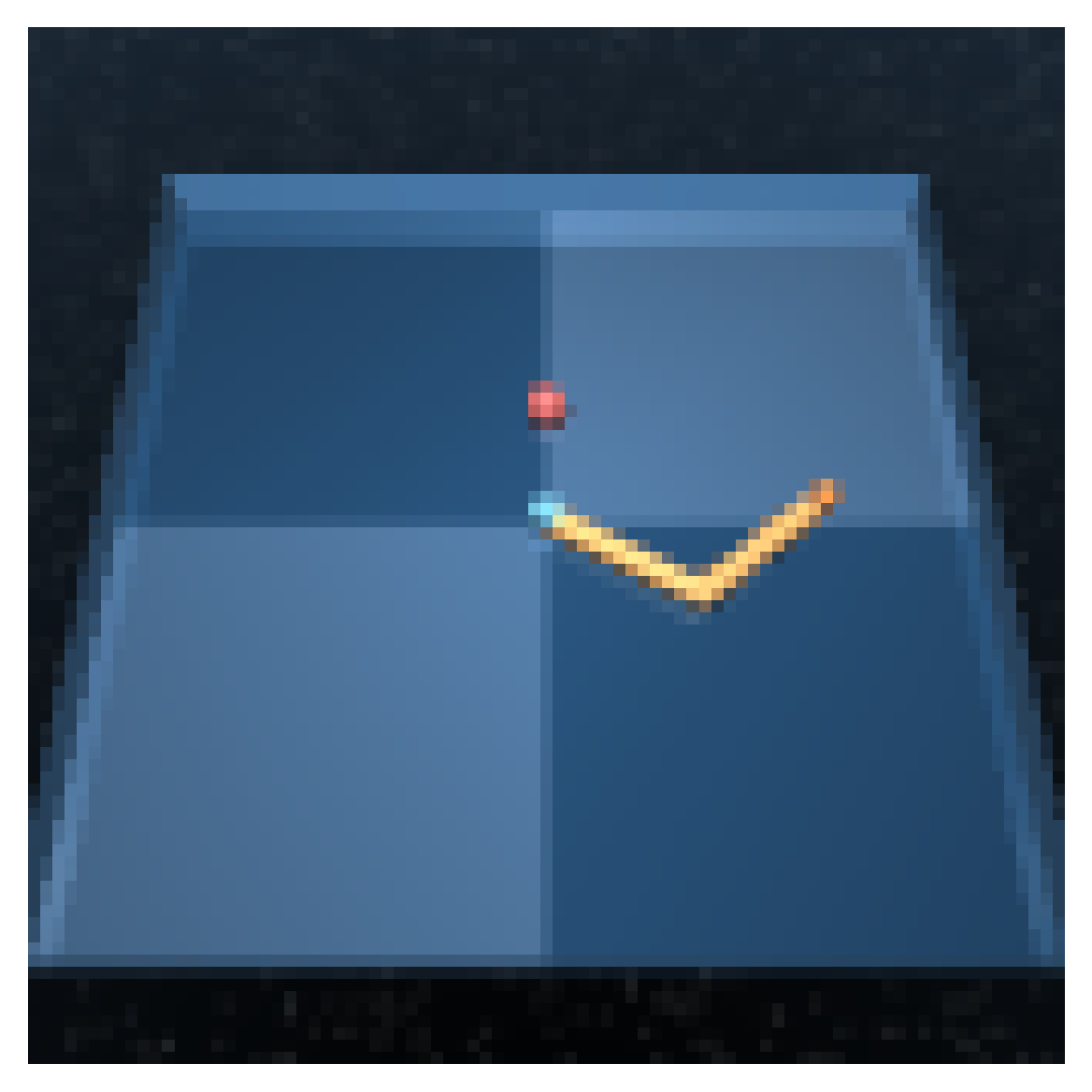}} &
\subfloat{\includegraphics[width=0.13\textwidth, trim=8 8 8 8, clip]{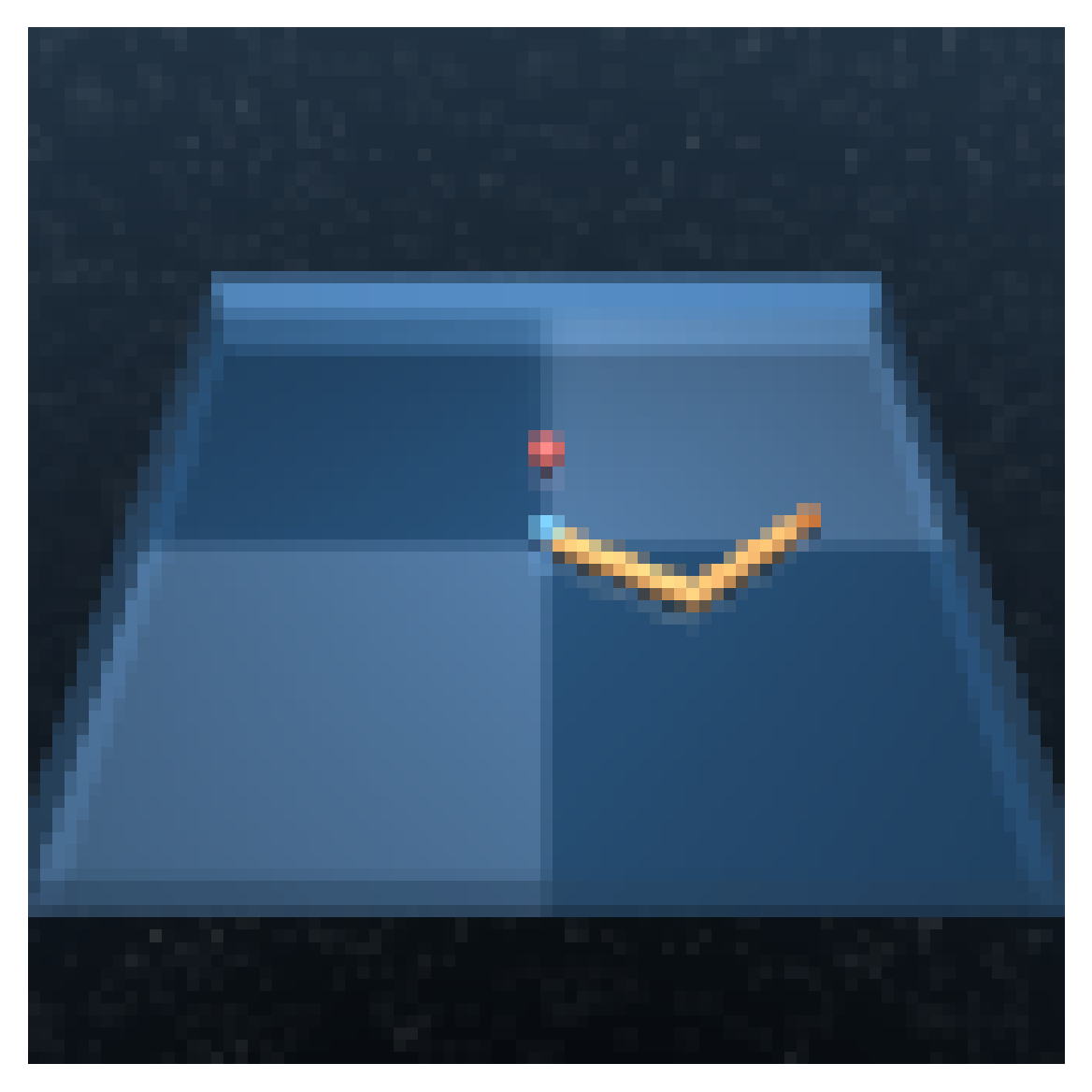}} &
\subfloat{\includegraphics[width=0.13\textwidth, trim=8 8 8 8, clip]{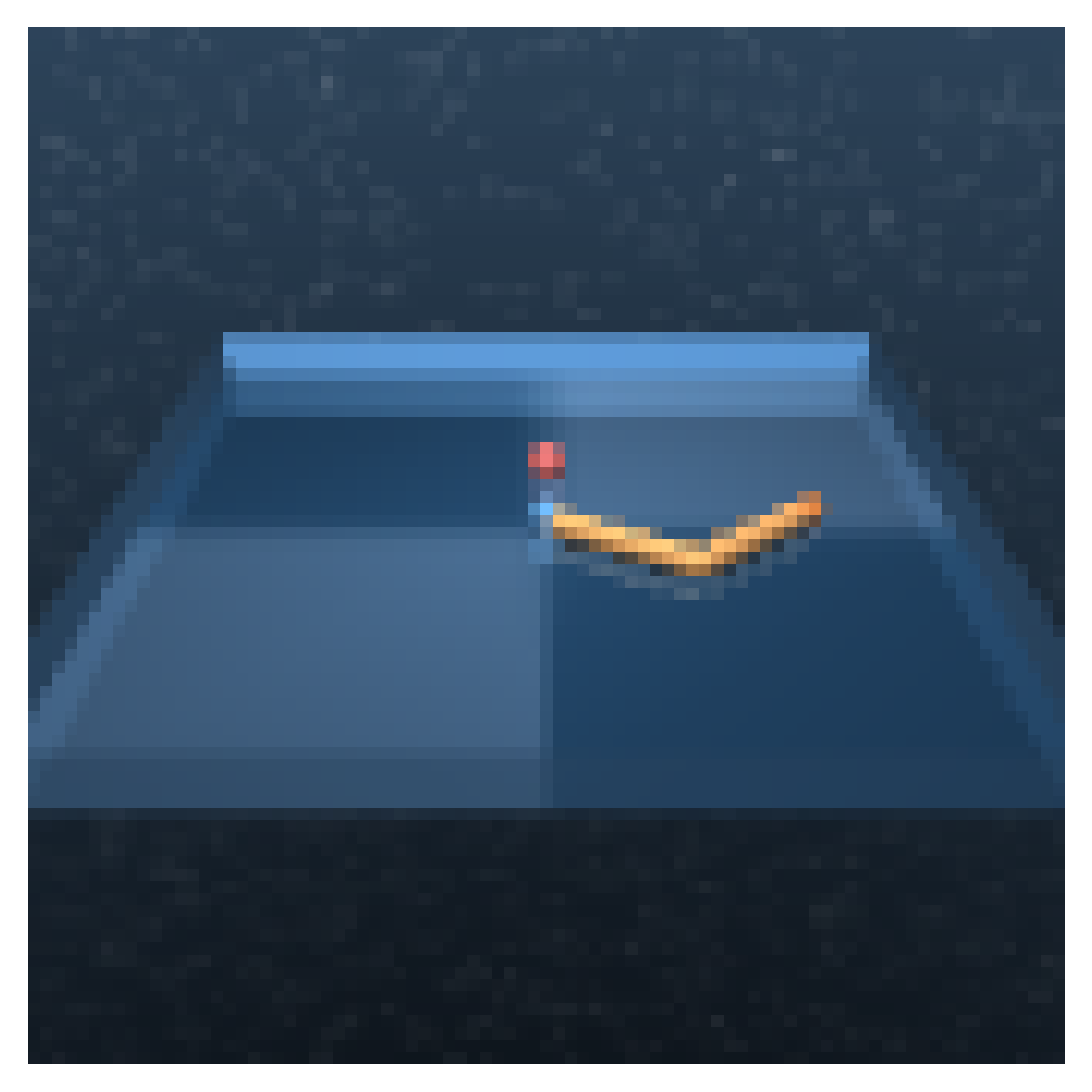}} &
\subfloat{\includegraphics[width=0.13\textwidth, trim=8 8 8 8, clip]{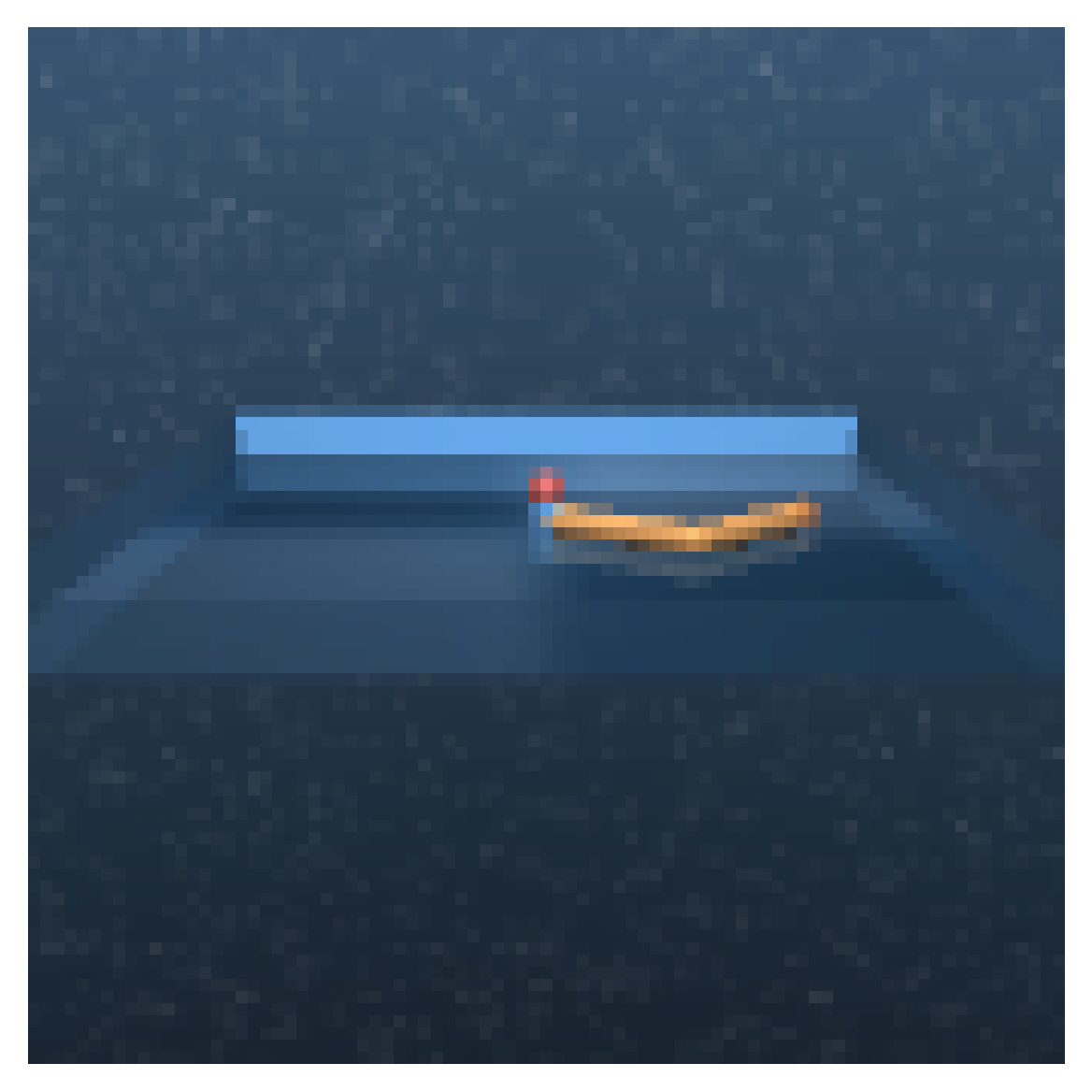}} \\[-3pt]
& None & Orig & $30^{\circ}$ & $45^{\circ}$ & $60^{\circ}$ & $75^{\circ}$
\end{tabular}
\caption{Modifications to DMC domains for varying symmetry corruption levels. The gridded floor and background are removed to be fully equivariant (None) or the camera angle is modified to increase the level of symmetry corruption (roll for cartpole and cup catch, tilt for reacher).}
\label{fig:exp_dmc_env_camera}
\end{table}

\begin{figure}[t]
\centering
\includegraphics[width=\linewidth]{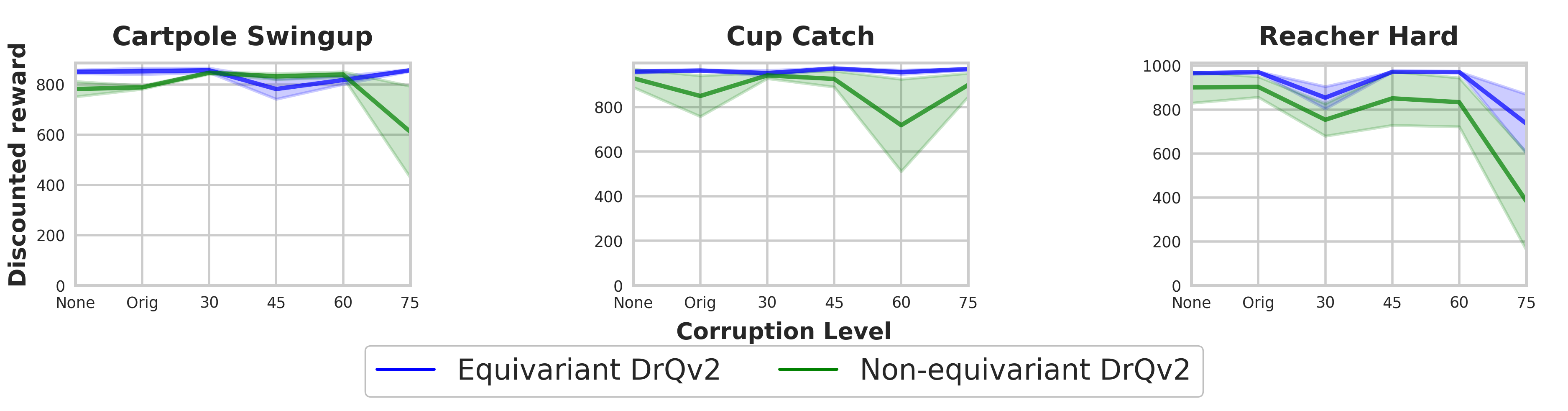}
\caption{DMC performance comparison on various levels of symmetry corruptions. Both the equivariant and non-equivariant DrQv2 perform quite well even with increasing levels of corruption.}
\label{fig:exp_dmc_camera}
\end{figure}

\section{Baseline Architecture Search}
\label{app:archi_search}
\subsection{CNN SAC Architecture Search}
\begin{figure}[t]
\centering
\includegraphics[width=0.7\textwidth]{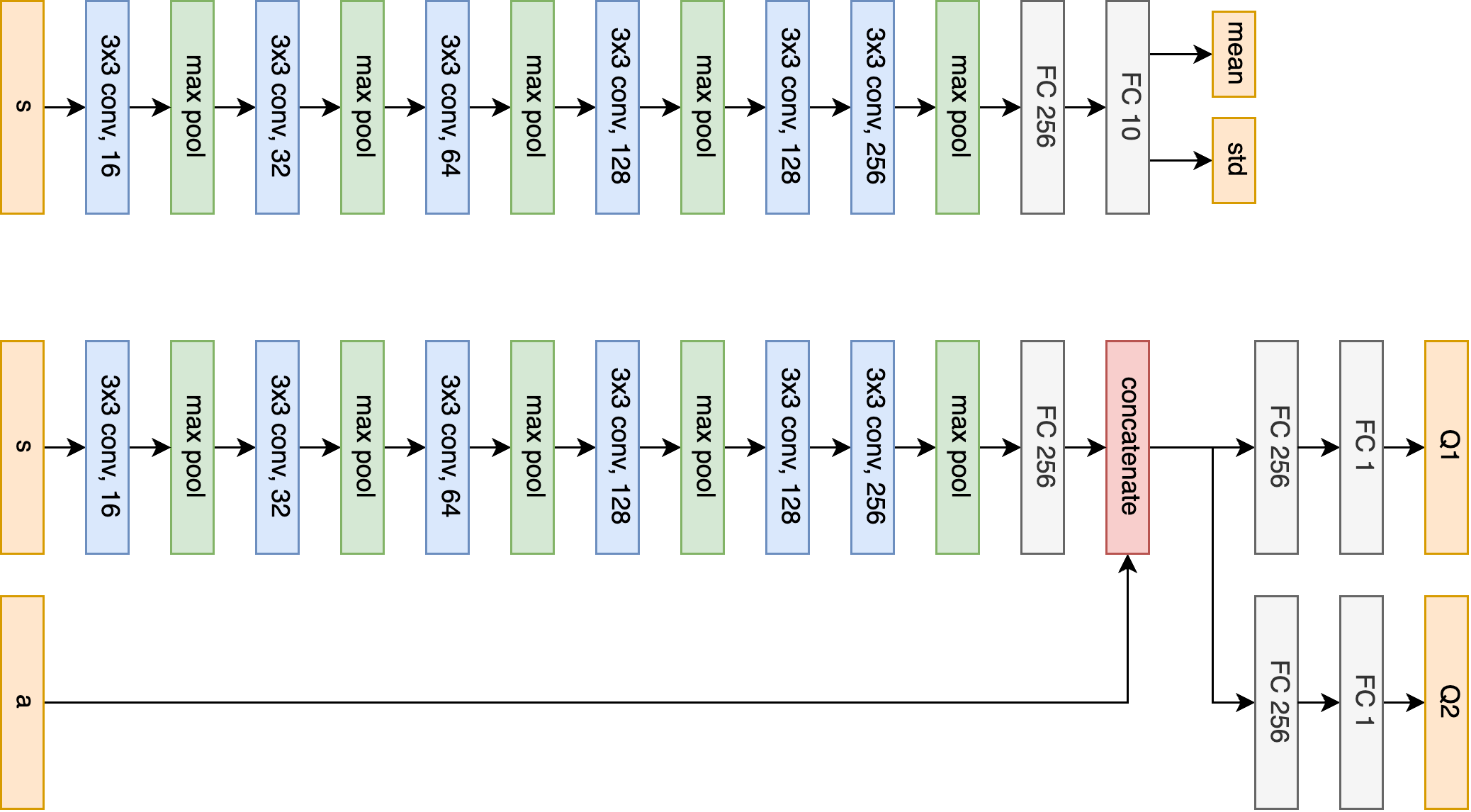}
\caption{Network architecture of the `fc1' variation for CNN SAC.}
\label{fig:network_cnn_sac_fc1}
\vspace*{\floatsep}
\includegraphics[width=0.6\textwidth]{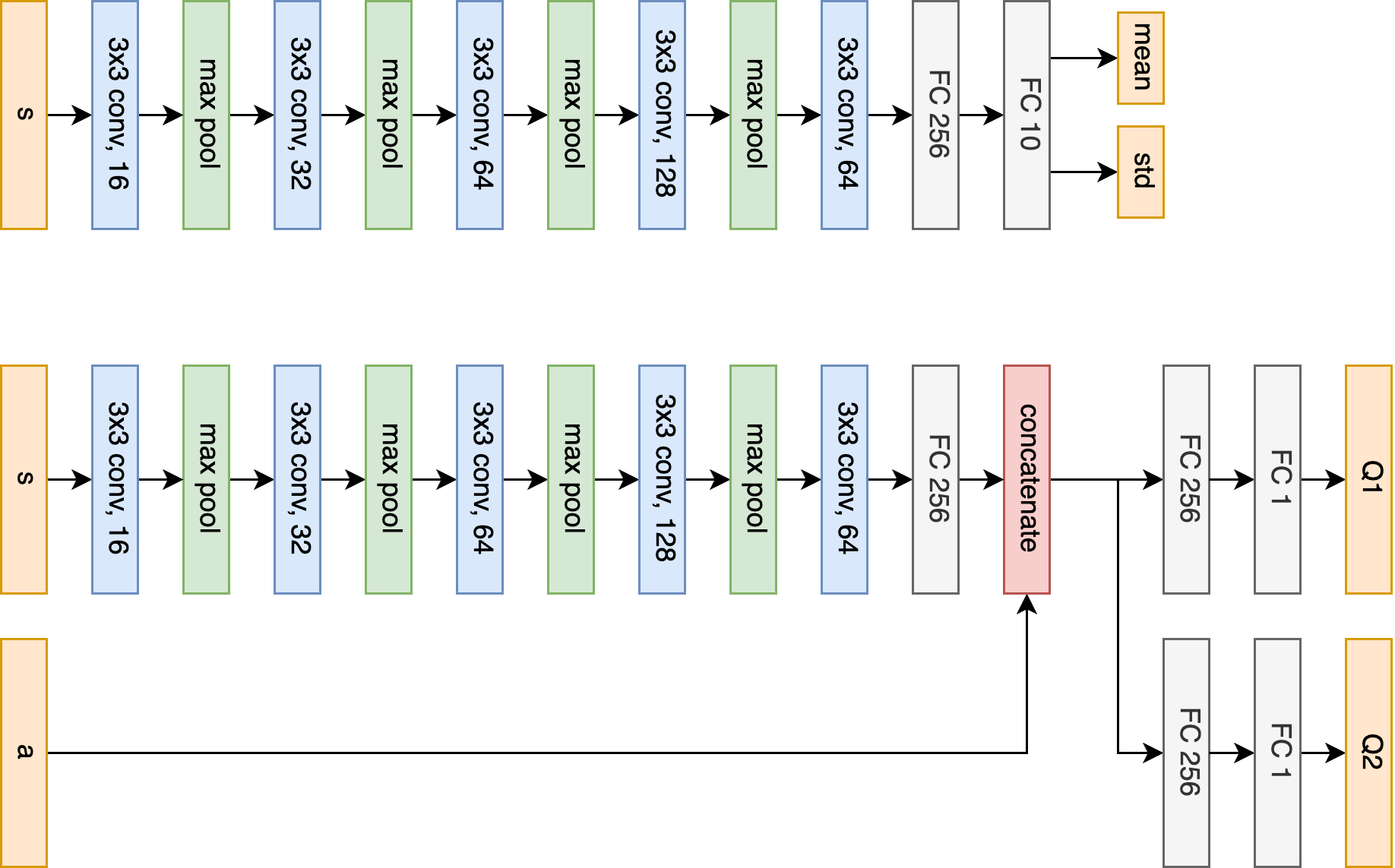}
\caption{Network architecture of the `fc2' variation for CNN SAC.}
\label{fig:network_cnn_sac_fc2}
\vspace*{\floatsep}
\includegraphics[width=\linewidth]{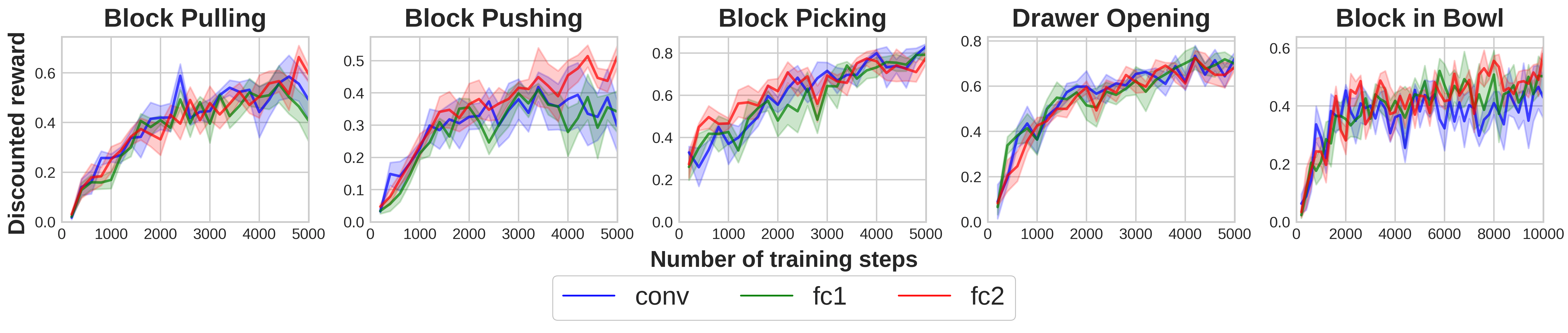}
\caption{Architecture search for CNN SAC. The plots show the performance (in terms of discounted reward) of the evaluation policy. The evaluation is performed every 200 training steps. Results are averaged over four runs. Shading denotes standard error.}
\label{fig:exp_cnn_archi}
\end{figure}



This section demonstrates the architecture search for CNN SAC. We consider three different architectures (all with a similar amount of trainable parameters): 1) conv (Figure~\ref{fig:network_cnn_sac_conv}): a CNN network with the same structure as Equivariant SAC, where all layers are implemented using convolutional layers. 2) fc1 (Figure~\ref{fig:network_cnn_sac_fc1}): a CNN network that replaces some layers in 1) with fully connected layers. 3) fc2 (Figure~\ref{fig:network_cnn_sac_fc2}): similar as 2), but with fewer convolutional layers and more weights in the FC layer. We evaluate the three network architectures with SAC equipped random crop augmentation using RAD~\citep{rad}.

Figure~\ref{fig:exp_cnn_archi} shows the result, where all three variations have a similar performance. We use conv in the main paper since it has a similar structure as Equivariant SAC.

\subsection{FERM Architecture Search}

\begin{figure}[t]
\centering
\includegraphics[width=0.8\textwidth]{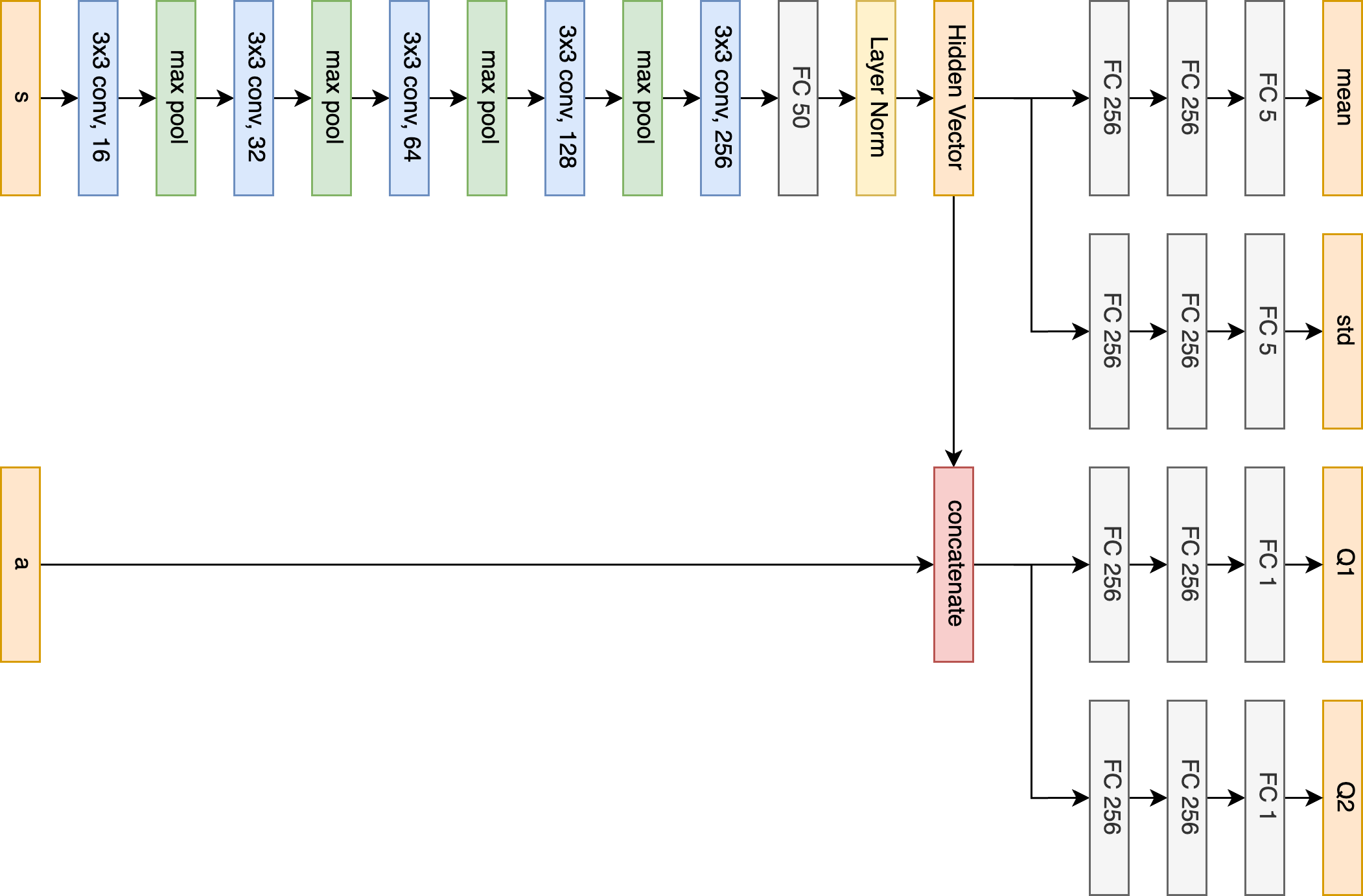}
\caption{Network architecture of the `sim enc' variation for FERM.}
\label{fig:network_ferm_enc}
\vspace*{\floatsep}

\end{figure}

\begin{figure}[t]
\centering
\includegraphics[width=0.8\textwidth]{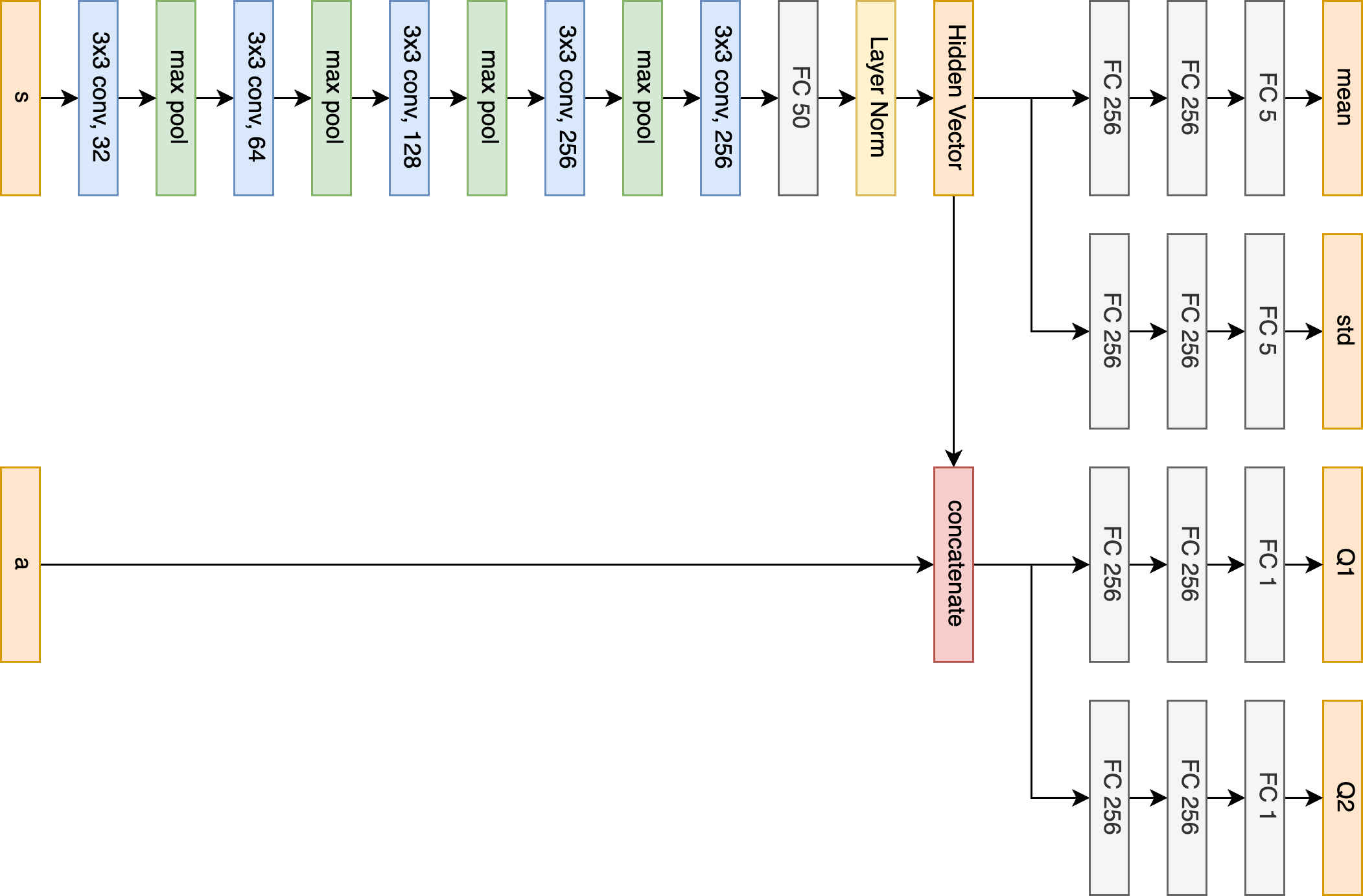}
\caption{Network architecture of the `sim total 1' variation for FERM.}
\label{fig:network_ferm_total1}
\end{figure}

\begin{figure}[t]
\centering
\includegraphics[width=0.5\textwidth]{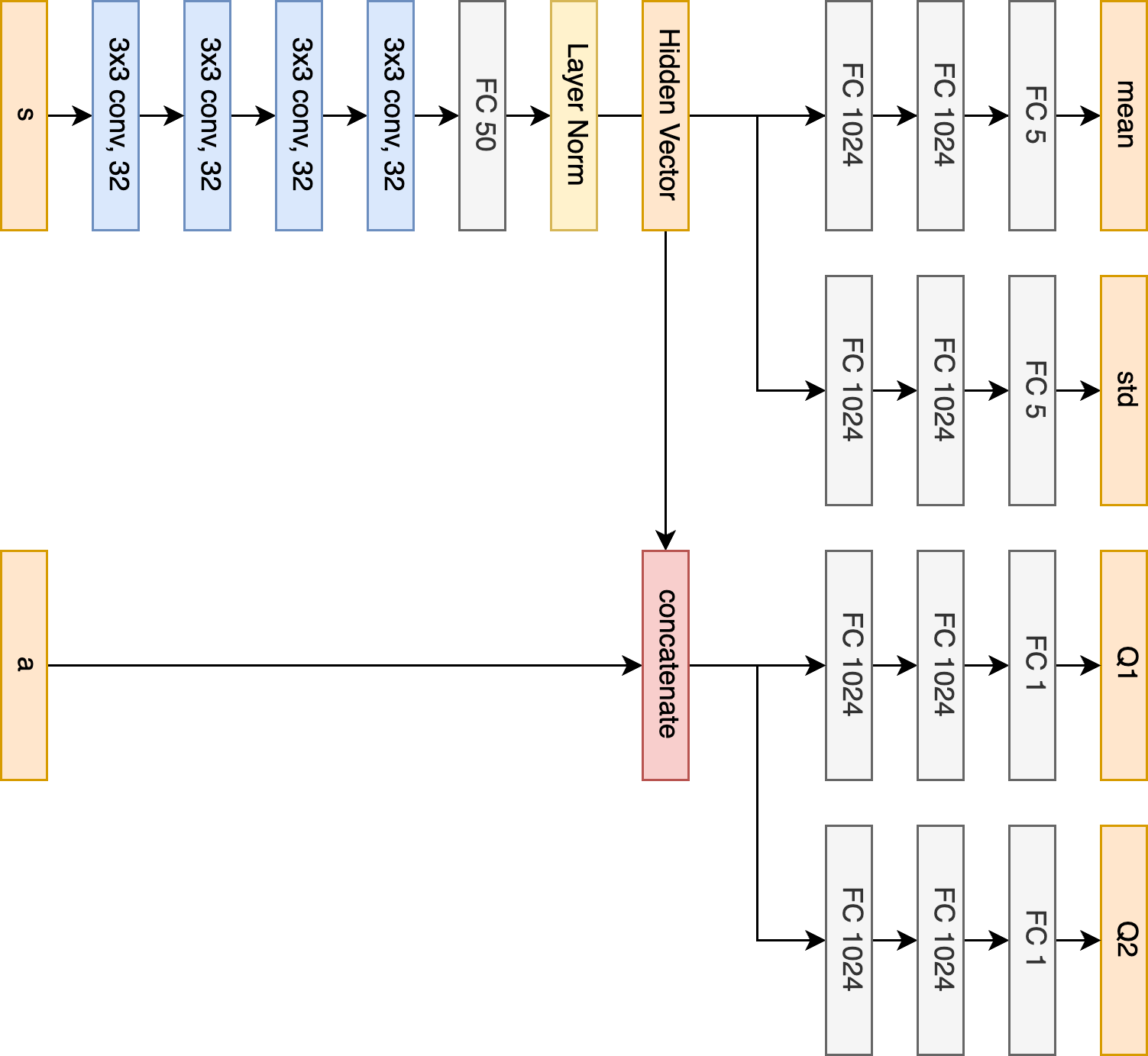}
\caption{Network architecture of the `ferm ori' variation for FERM.}
\label{fig:network_ferm_ferm}
\end{figure}

\begin{figure}[t]
\centering
\includegraphics[width=\linewidth]{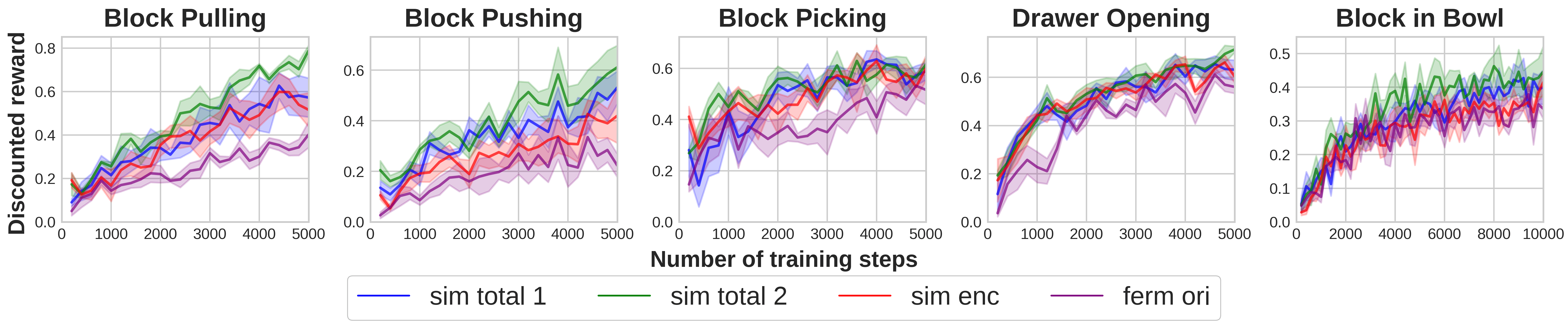}
\caption{Architecture search for FERM. The plots show the performance (in terms of discounted reward) of the evaluation policy. The evaluation is performed every 200 training steps. Results are averaged over four runs. Shading denotes standard error.}
\label{fig:exp_ferm_archi}
\end{figure}

This section demonstrates the architecture search for FERM. We consider four different architectures: 1) sim total 1 (Figure~\ref{fig:network_ferm_total1}) and 2) sim total 2 (Figure~\ref{fig:network_ferm_total2}) are two different architectures with the similar amount of total trainable parameters as Equivariant SAC. 3) sim enc (Figure~\ref{fig:network_ferm_enc}) has similar amount of trainable parameters in the encoder as Equivariant SAC's encoder. Notice that since FERM share an encoder between the actor and the critic while Equivariant SAC has separate encoders, having the similar amount of parameters in the encoder will lead to fewer total parameter in FERM compared with Equivariant SAC. 4) ferm ori (Figure~\ref{fig:network_ferm_ferm}) is the same network architecture used in the FERM paper~\citep{ferm}.

Figure~\ref{fig:exp_ferm_archi} shows the comparison across the four architectures. `sim total 2' has a marginal advantage compared with the other three variations, so we use it in the main paper.

\subsection{SEN Architecture Search}
This section shows the architecture search for SEN. We consider three variations (all with similar amount of trainable parameters): 1) SEN conv (Figure~\ref{fig:network_sen_conv}): all layers are implemented using convolutional layers. 2) SEN fc1 (Figure~\ref{fig:network_sen_fc1}) and SEN fc2 (Figure~\ref{fig:network_sen_fc2}) replaces some layers in 1) with fully connected layers.

Figure~\ref{fig:exp_sen_archi} shows the comparison across the three variations. `SEN fc2' shows the best performance.  

\begin{figure}[t]
\centering
\includegraphics[width=0.8\textwidth]{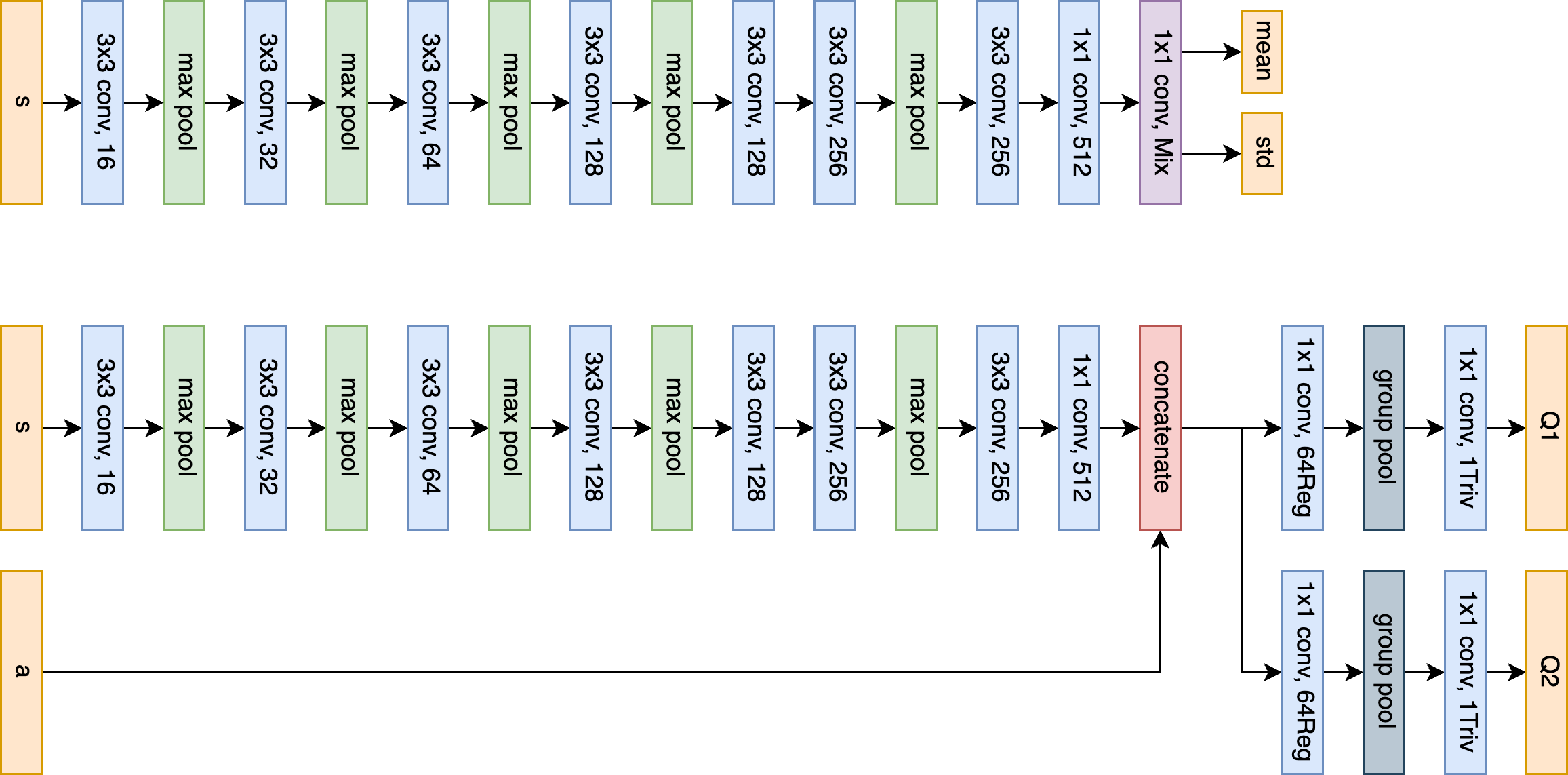}
\caption{Network architecture of `SEN conv' variation of SEN.}
\label{fig:network_sen_conv}
\end{figure}

\begin{figure}[t]
\centering
\includegraphics[width=0.8\textwidth]{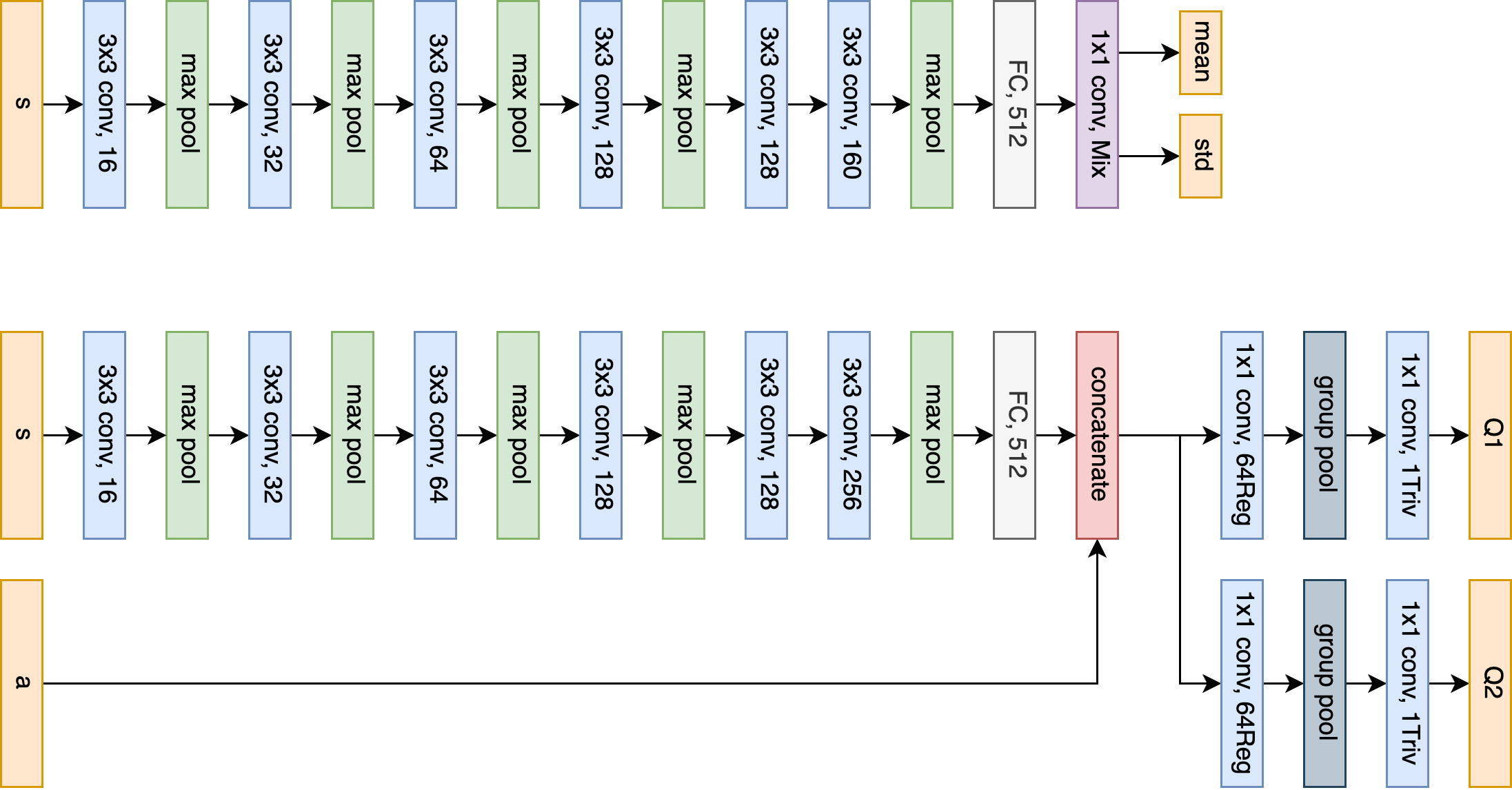}
\caption{Network architecture of `SEN fc1' variation of SEN.}
\label{fig:network_sen_fc1}
\end{figure}

\begin{figure}[t]
\centering
\includegraphics[width=\linewidth]{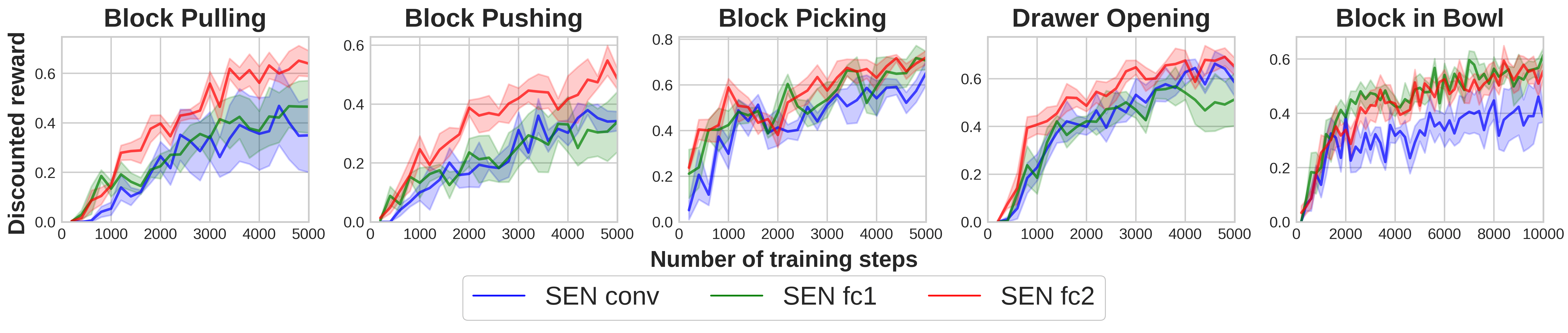}
\caption{Architecture search for SEN. The plots show the performance (in terms of discounted reward) of the evaluation policy. The evaluation is performed every 200 training steps. Results are averaged over four runs. Shading denotes standard error.}
\label{fig:exp_sen_archi}
\end{figure}

\end{document}